\documentclass{article} 
\usepackage{iclr2026_conference,times}

\usepackage{amsmath}
\usepackage{amsthm}
\usepackage{amssymb,mathtools}

\usepackage{algorithm} 

\usepackage[hidelinks, hypertexnames=false, colorlinks=true, citecolor=Navy, linkcolor=Maroon, urlcolor=Orchid, bookmarksnumbered, unicode]{hyperref}

\usepackage[noend]{algpseudocode}

\algnewcommand{\Break}{\textbf{break}}

\usepackage{cleveref}
\crefname{equation}{eq.}{eqs.}                
\Crefname{equation}{Eq.}{Eqs.}
\crefname{step}{Step}{Steps}
\Crefname{step}{Step}{Steps}
\Crefname{lem}{Lemma}{Lemmas}


\usepackage[utf8]{inputenc} 
\usepackage[T1]{fontenc}    
\usepackage{amsfonts}       
\usepackage{nicefrac}       
\usepackage{microtype}      
\usepackage{bm}             
\usepackage{bbm}

\usepackage{graphicx}
\usepackage[svgnames]{xcolor} 
\usepackage{subcaption}
\usepackage{wrapfig}

\usepackage{tabularx}       
\usepackage{booktabs}       



\usepackage{thmtools}
\usepackage{thm-restate}
\usepackage{apxproof}
\usepackage{url}            
\usepackage{braket}
\usepackage{multirow}
\usepackage{lipsum}
\allowdisplaybreaks
\usepackage{lscape}
\usepackage{tcolorbox}
\usepackage{xifthen}
\usepackage{xargs}
\usepackage{xspace}
\usepackage{enumerate}

\usepackage{tikz}
\usetikzlibrary{backgrounds}
\usetikzlibrary{arrows}
\usetikzlibrary{shapes,shapes.geometric,shapes.misc}

\tikzstyle{tikzfig}=[baseline=-0.25em,scale=0.5]

\pgfkeys{/tikz/tikzit fill/.initial=0}
\pgfkeys{/tikz/tikzit draw/.initial=0}
\pgfkeys{/tikz/tikzit shape/.initial=0}
\pgfkeys{/tikz/tikzit category/.initial=0}

\pgfdeclarelayer{edgelayer}
\pgfdeclarelayer{nodelayer}
\pgfsetlayers{background,edgelayer,nodelayer,main}

\tikzstyle{none}=[inner sep=0mm]

\tikzstyle{rectangle}=[fill=white, draw=black, shape=rectangle]
\tikzstyle{circle}=[fill=white, draw=black, shape=circle]
\tikzstyle{vertex}=[fill=black, draw=none, shape=circle]
\tikzstyle{textbox}=[fill=white, draw=none, shape=rectangle]

\tikzstyle{line}=[-, fill=none]
\tikzstyle{rightarrow}=[->]
\tikzstyle{leftarrow}=[fill=none, <-]

\newcommand{\Pdim}{\mathrm{Pdim}}
\newcommand{\VCdim}{\mathrm{VCdim}}
\newcommand{\Ibb}{\mathbb{I}}
\newcommand{\I}{\mathcal{I}}
\newcommand{\W}{\mathcal{W}}

\newcommand{\nbd}{K} 
\newcommand{\bdf}{b}
\newcommand{\pf}{f}
\renewcommand{\P}{\mathcal{P}}

\DeclareMathOperator{\sgn}{sgn} 
\newcommand{\pp}{\mathbf{p}}

\newcommand{\x}{\mathbf{x}}
\newcommand{\w}{\mathbf{w}}
\newcommand{\T}{\mathsf{T}}
\renewcommand{\b}{\mathbf{b}}
\DeclareMathOperator{\ReLU}{ReLU}

\newcommand{\Bcal}{\mathcal{B}}

\newcommand{\Dcal}{\mathcal{D}}

\newcommand{\Fcal}{\mathcal{F}}
\newcommand{\Gcal}{\mathcal{G}}
\newcommand{\Hcal}{\mathcal{H}}

\newcommand{\Mcal}{\mathcal{M}}

\newcommand{\Pcal}{\mathcal{P}}

\newcommand{\Ucal}{\mathcal{U}}

\newcommand{\Ycal}{\mathcal{Y}}



\newcommand{\R}{\mathbb{R}}
\newcommand{\N}{\mathbb{N}}
\newcommand{\Z}{\mathbb{Z}}


\newcommand{\bb}{{\bm{b}}}



\newcommand{\Ord}{\mathrm{O}}

%


\DeclareMathOperator*{\argmin}{argmin}


\DeclareMathOperator{\E}{\mathbb{E}}

\DeclarePairedDelimiter\abs{\lvert}{\rvert}

\let\set\relax
\DeclarePairedDelimiter\set{\{}{\}}
\let\Set\relax
\DeclarePairedDelimiterX\Set[2]{\{}{\}}{\mspace{2mu}{#1}\;\delimsize|\;{#2}\mspace{2mu}}
\DeclarePairedDelimiter\brc{[}{]}
\DeclarePairedDelimiterX\Brc[2]{[}{]}{\mspace{2mu}{#1}\;\delimsize|\;{#2}\mspace{2mu}}
\DeclarePairedDelimiter\prn{(}{)}
\DeclarePairedDelimiterX\Prn[2]{(}{)}{\mspace{2mu}{#1}\;\delimsize|\;{#2}\mspace{2mu}}



\newif\iffigure
\figurefalse

\usepackage[utf8]{inputenc} 
\usepackage[T1]{fontenc}    
\usepackage{booktabs}       
\usepackage{nicefrac}       
\usepackage{microtype}      
\usepackage{xcolor}         
\newtheorem{theorem}{Theorem}
\newtheorem*{theorem*}{Theorem}

\newtheorem{lemma}{Lemma}
\newtheorem{assumption}{Assumption}
\newtheorem{definition}{Definition}
\newtheorem{proposition}{Proposition}
\usepackage{algorithm}
\usepackage{algpseudocode}
\usepackage{bm}

\usepackage{titletoc}
\usepackage{hyperref}
\usepackage{url}

\usepackage{wrapfig}   
\usepackage{booktabs} 
\usepackage{siunitx}
\sisetup{group-separator = {,}, group-minimum-digits = 4}
\usepackage{booktabs, multirow} 
\usepackage{makecell}
\renewcommand{\arraystretch}{0.9} 

\usepackage{caption}
\usepackage{subcaption}

\newcommand{\open}{\texttt{OPEN}}
\newcommand{\closed}{\texttt{CLOSED}}
\newcommand{\parent}{\texttt{Parent}}
\newcommand{\start}{s_{init}}
\newcommand{\goal}{goal}
\newcommand{\gcost}{g}
\newcommand{\fcost}{f}
\newcommand{\child}{s^{\prime}}
\newcommand{\astar}{A^{*}_{h}}
\newcommand{\inc}{\mathrm{INC}}
\newcommand{\popt}{P_{\mathrm{opt}}}

\newcolumntype{C}[1]{>{\centering\arraybackslash}p{#1}}
\newcommand{\rightcell}[1]{\makebox[0.5cm][r]{#1}}

\title{{Learning Admissible Heuristics for A*: \\ Theory and Practice}}


\author{
Ehsan Futuhi\textsuperscript{1} , Nathan R. Sturtevant\textsuperscript{1,2} \\
\textsuperscript{1}Department of Computing Science, University of Alberta \quad \textsuperscript{2}Alberta Machine Intelligence Institute (Amii) \\
\texttt{\{futuhi, nathanst\}@ualberta.ca}
}

%

\iclrfinalcopy
\begin{document}

\maketitle

\begin{abstract}

Heuristic functions are central to the performance of search algorithms such as A*, where \emph{admissibility}—the property of never overestimating the true shortest-path cost—guarantees solution optimality. Recent deep learning approaches often disregard admissibility and provide limited guarantees on generalization beyond the training data. This paper address both of these limitations. First, we pose heuristic learning as a constrained optimization problem and introduce \emph{Cross-Entropy Admissibility (CEA)}, a loss function that enforces admissibility during training. On the Rubik’s Cube domain, this method yields near-admissible heuristics with significantly stronger guidance than compressed pattern database (PDB) heuristics. Theoretically, study the sample complexity of learning heuristics. By leveraging PDB abstractions and the structural properties of graphs such as the Rubik’s Cube, we tighten the bound on the number of training samples needed for A* to generalize. Replacing a general hypothesis class with a ReLU neural network gives bounds that depend primarily on the network’s width and depth, rather than on graph size. Using the same network, we also provide the first generalization guarantees for \emph{goal-dependent} heuristics.

\end{abstract}

\section{Introduction}








Heuristic search algorithms such as A*~\citep{hart1968formal} are widely used in pathfinding tasks~\citep{yonetani2021path,DBLP:conf/emnlp/Meng0YPC24,kirilenko2023transpath}, where the objective is to find a least-cost path from a start state to a goal state in a graph. These algorithms rely on a \emph{heuristic function} that estimates the cost-to-go from a state to the goal. To guarantee optimality, A* requires the heuristic to be \emph{admissible}, meaning it must never overestimate the true cost, $h^*$, to the goal. If we wish to integrate machine learning with heuristic search algorithms like A* we have several possible approaches: (1) We can adapt learning to meet the requirements of search algorithms \citep{samadi08,li2022optimal}, (2) we can adapt search algorithms to meet the properties of ML heuristics \citep{Lelis_Valenzano_Nazar_Stern_2021}, (3) we can do a combination of both, or (4) we can abandon all theoretical guarantees, hoping for high quality solutions or fast search in practice \citep{ijcai2023p624,hazra2024saycanpay,chen2025language,huang2025chasing}.
This paper studies the foundations of the first question: what is the sample complexity needed to learn admissible heuristics, what is an effective loss function for training, and how close we can get to optimal heuristics in practice?

Traditionally, admissible heuristic functions are computed from domain knowledge of the problem \citep{katz2022search,haslum2005new,seipp2024dissecting}. One prominent example is pattern databases (PDBs) \citep{korf1997finding,culberson1998pattern}. PDBs result from abstracting the problem and storing $h^{*}$ for all states in the abstract problem. Learning heuristics from data \citep{li2022optimal,agostinelli2021obtaining,pandy2022learning,shen2020learning,chen2024return} has become increasingly popular for two main reasons: (1) designing good heuristics for complex environments is challenging, and (2) data-driven deep learning algorithms have demonstrated outstanding performance in analogous tasks \citep{silver2017mastering,schrittwieser2020mastering,wurman2022outracing,touvron2023llama}. While learned heuristic functions may outperform traditional heuristics, they typically lose admissibility and thus suboptimality guarantees \citep{agostinelli2019solving, yonetani2021path, archetti2021neural, kirilenko2023transpath}. 
Despite substantial empirical progress, the theoretical foundations of heuristic learning are relatively underexplored. 
A central question is: \emph{How many training samples are required to ensure that the learned heuristic generalizes effectively to the entire graph?} \citet{sakaue2022sample} addressed this question for Greedy Best-First Search (GBFS) and A*, providing upper and lower bounds on the training samples necessary to ensure generalization. 

We extend this work to the problem of learning admissible heuristics in two main directions.
First, we analyze the sample complexity of learning heuristic functions from a given dataset. We introduce a new upper bound for the expected suboptimality of A* with reopenings.
We then derive tighter generalization bounds by leveraging structural properties of the graphs of interest. We further incorporate neural networks as the heuristic model and derive generalization bounds that depend primarily on the size of the network rather than the size of the underlying graph. Notably, using neural networks, we establish the first generalization bounds for goal-dependent heuristics. Across all theoretical results, we demonstrate that leveraging PDBs instead of drawing training samples from the original graph leads to improved bounds.

Second, we formulate the optimization problem for learning admissible heuristics for search applications. We propose a novel loss function, termed \emph{Cross Entropy Admissibility (CEA)}, along with a training framework that satisfies all the constraints of the optimization problem.
We evaluate our framework on several \(3 \times 3\) Rubik’s Cube pattern databases PDBs. The CEA loss function produces near-admissible heuristics, achieving inadmissibility rates around \(1 \times 10^{-6}\) across all PDBs—significantly outperforming standard training based on the \emph{Cross Entropy (CE)} loss. The strength of the learned heuristics exceeds that of same-sized PDBs obtained using classical compression techniques; notably, for the 8-corner PDB, the CEA loss successfully learns the admissible PDB heuristic perfectly.

\section{Background}

In heuristic search, the task is to find a path from a start state $\start \in V$ to a goal state $\goal \in V$ in a graph $\{G = \{V, E\}, c, h\}$, where $c \colon E \rightarrow \mathbb{R}^{+}$ specifies the cost for each edge and the heuristic function $h(v)$ estimates the distance from any state $v$ to $\goal$. A heuristic is called \emph{admissible} if, for all $v\in V$,  we have $h(v) \leq h^{*}(v)$, where $h^{*}(v)$ is the true shortest distance from $v$ to $\goal$. A heuristic function \( h \) is \emph{consistent} if, for all states \( a, b \in V \) such that \( (a, b) \in E \), we have \( h(a) \leq c(a, b) + h(b) \). In large state spaces, the graph $G$ is represented implicitly and is generated dynamically during search by expanding states and exploring their neighbors. Although A* guarantees optimal solutions under an admissible heuristic, it can become less efficient if the heuristic is inconsistent, due to the potential for re-expanding nodes~\citep{martelli1977complexity,felner2011inconsistent,helmert2019ibex}. For a positive integer $d$, write $[d] \coloneqq \{1,2,\ldots,d\}$. 
Given vectors $\{\x^{1},\ldots,\x^{t}\} \subseteq \mathbb{R}^d$, we use superscripts to index the vectors and subscripts for coordinates, so $\x^{i}_{j}$ denotes coordinate $j$ of vector $i$. For any real vector $\x$, the operator $\lceil \x \rceil$ applies the ceil function \emph{coordinatewise}. We use $\Ibb(\cdot)$ to denote the indicator function, which returns 1 when the stated condition holds and 0 otherwise. Let $\Pi$ denote the space of problem instances and let $D$ be an unknown distribution supported on $\Pi$; we write $x \sim D$ for a random instance. Each $x \in \Pi$ specifies a start state $\start$ from which the search is initiated. We make the following assumption on all instances.

\begin{assumption}[Fixed graph and reachability]\label{assumption1}
The state graph $G$ and $\goal \in V$ are fixed and shared across all instances $x \in \Pi$. For every instance with start state $\start \neq \goal$, there exists at least one directed path from $\start$ to $\goal$.
\end{assumption}

\paragraph{A* Algorithm.} We use $\astar$ to denote an A* algorithm that employs a heuristic function $h$. This algorithm maintains two lists of states: $\open$ for states that have been generated but not expanded, and $\closed$ for states that have already been expanded. Additionally, we store a pointer to the parent of state $s$, denoted as $\parent(s)$, which allows us to reconstruct the path to $\start$ by tracing backward from $s$. Algorithm~\ref{alg:astar} in Appendix \ref{sec: pseudocode-a*} provides an overview of the procedure of $\astar$. Given an instance $x \in \Pi$, $\astar$ initializes its search by adding $\start$ to $\open$. At each iteration, it selects from $\open$ a state $s$ with the minimum value of $\fcost(s) = h(s) + \gcost(s)$. To ensure consistent performance bounds for $\astar$, we impose the following assumption when ties occur.

\begin{assumption}\label{assumption-ties}
    If the state space $V$ contains $n$ vertices, we impose an arbitrary strict total order on $V$. For example, consider the order $v_1 < v_2 < \dots < v_n$. In this ordering, if two states $v_i$ and $v_j$ have the same lowest $\fcost$ value, we select $v_i$ when $i < j$, and $v_j$ otherwise. 
\end{assumption}

When expanding a state $s$, let $g_{\mathrm{new}} \gets g(s) + c(s, \child)$ denote the generated cost of the path from the start through $s$ to its child $\child$. 
Three situations can arise for $\child$ based on $g_{\mathrm{new}}$:

\begin{enumerate}
    \item If $\child \notin \open \cup \closed$, then we add $\child$ to $\open$.
    \item If $\child \in \open$ and $g_{\mathrm{new}} < \gcost(\child)$, then we update 
    $\parent(\child)$ and $\gcost(\child)$ to reflect the improved path.
    \item If $\child \in \closed$ and $g_{\mathrm{new}} < g(\child)$, then we update 
    $\parent(\child)$, $\gcost(\child)$, and move $\child$ from $\closed$ back to $\open$.
\end{enumerate}

Case 3 is only needed if the heuristic is inconsistent. After generating and evaluating all children of $s$, we remove $s$ from $\open$ and place it in $\closed$. 
This procedure continues until the $\goal$ state is selected from $\open$ for expansion, at which point the algorithm terminates with a path to $\goal$.





\paragraph{Abstraction-Based Heuristics.}
\begin{wrapfigure}{r}{0.5\textwidth}
    \centering
        \begin{subfigure}[b]{60pt}
        \centering
        \includegraphics[width=\textwidth]{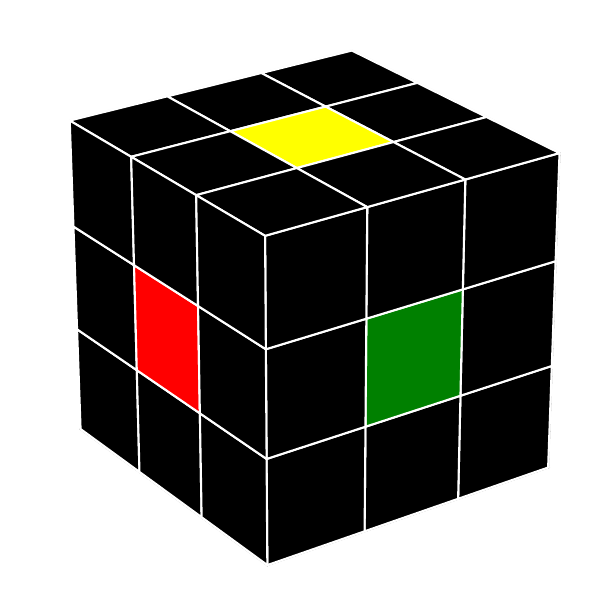}
        \caption{\scriptsize Center cubes.}
        \end{subfigure}
        \begin{subfigure}[b]{60pt}
        \centering
        \includegraphics[width=\textwidth]{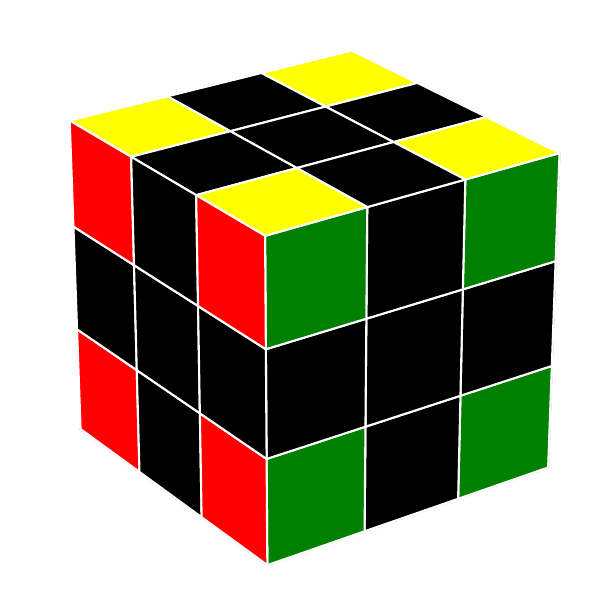}
        \caption{\scriptsize Corner cubes.}
        \end{subfigure}
        \begin{subfigure}[b]{60pt}
        \centering
        \includegraphics[width=\textwidth]{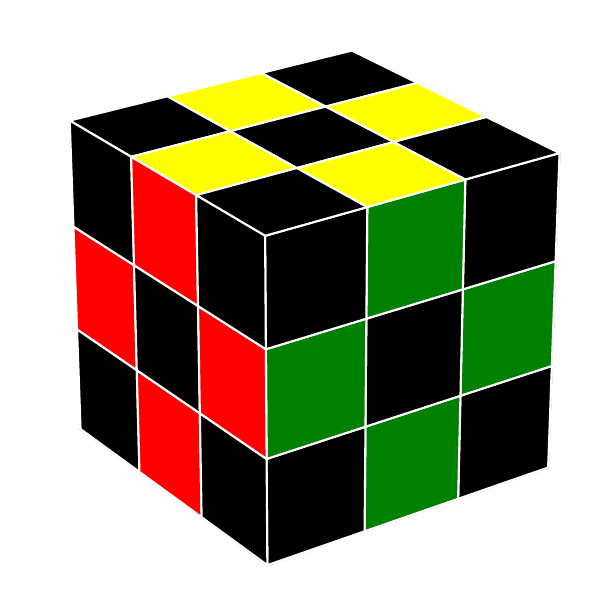}
        \caption{\scriptsize Edge cubes.}
        \end{subfigure}
    \caption{A solved \(3 \times 3\) Rubik’s Cube with each group of cubies shown separately:  
(a) center cubies, (b) corner cubies, and (c) edge cubies.}
    \label{fig: cubes}
\end{wrapfigure}
As a form of abstraction heuristics, Pattern Databases (PDBs)
simplify a graph $V$ by applying a homomorphic transformation to obtain a reduced state space $\phi(V)$. 
For example, a \(3 \times 3\) Rubik’s Cube contains 26 cubies—12 edges, 8 corners, and 6 centers—as illustrated in Figure \ref{fig: cubes}. 
One can abstract away the edges and centers to construct an 8-corner PDB with $8!\times 3^{7}$ states, in contrast to the $4.33 \times 10^{19}$ states in the original graph. Within this abstract representation, if there is an edge between $v_{1}$ and $v_{2}$ in $V$, an edge will also be present between $\phi(v_{1})$ and $\phi(v_{2})$ in $\phi(V)$. Thus, all distances are admissible, and all states from $V$ that reduce to the same state in $\phi(V)$ have identical heuristic values. More details regarding the reduction from states in $V$ to states in $\phi(V)$ are shown in Figure \ref{fig:pdb} in Appendix \ref{sec: pdb specifics}. The PDB stores the perfect heuristic value (optimal distance to the abstract goal) for every abstract state, obtained by running an optimal search in the abstract state space. During the search, the current state is mapped to $\phi(v)$, and a ranking function \citep{myrvold2001ranking} is used to assign a unique integer to the abstract state. This functions as an index in the lookup table to retrieve the the precomputed distance to the abstract goal, which serves as the heuristic value $h(v)$. Compression techniques \citep{felner2007compressed, helmert2017variable} reduce the size of the PDB by grouping entries and only storing the lowest heuristic value from the group, which maintains admissibility.



\paragraph{Preliminaries for Sample Complexity Analysis.}

We use \textit{pseudo-dimension} \citep{Pollard1984-zp}, a central concept for measuring the complexity of a class of real-valued functions, as the foundation of our analysis. The definition of \textit{pseudo-dimension} is as follows.

\begin{definition} \label{def: hypothesis clss}
	Let $\Hcal \subseteq \R^\Ycal$ be a set of functions mapping a domain $\Ycal$ into $\R$. 
	We say that a subset $\{y_1,\dots,y_N\} \subseteq \Ycal$ is \textit{shattered} by $\Hcal$ if there exist target values $z_1,\dots,z_N \in \R$ such that
	\[
		\Bigl|\Bigl\{ 
		\bigl( \Ibb\{h(y_1) \ge z_1\}, \dots, \Ibb\{h(y_N) \ge z_N\} \bigr) 
		\,\mid\, h \in \Hcal
		\Bigr\}\Bigr| = 2^N.
	\]
	The largest number of samples that can be shattered by $\Hcal$ is called the \emph{pseudo-dimension} of $\Hcal$, denoted by $\Pdim(\Hcal)$.
\end{definition}

We use the following proposition (Theorem 11.8 in \citep{Mohri2018-zs}) that helps us to attain generalization bounds using the definition of pseudo-dimension.

\begin{proposition}\label{prop:complexity_bound}
	Let $H > 0$, $\Hcal \subseteq {[0, H]}^\Ycal$, and $\Dcal$ be a probability distribution over $\Ycal$. 
	Suppose we draw $\{y_1,\dots,y_N\} \sim \Dcal^N$ i.i.d. Then, with probability at least $1-\delta$ over this random draw, the following holds for all $h \in \Hcal$:
	\[
    \biggl|
    \frac{1}{N}\sum_{i=1}^N h(y_i)
    \;-\;
    \E_{y \sim \Dcal}\bigl[h(y)\bigr]
    \biggr| 
    = \Ord\!\biggl(H \sqrt{\frac{\Pdim(\Hcal)\,\log\!\bigl(\tfrac{N}{\Pdim(\Hcal)}\bigr) + \log\!\bigl(\tfrac{1}{\delta}\bigr)}{N}}\biggr).
	\] 
\end{proposition}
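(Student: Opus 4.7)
The plan is to prove this via the classical symmetrization and covering-number argument originating with Vapnik and Pollard. First I would invoke symmetrization: introducing an i.i.d.\ ghost sample $\{y_i'\}$ and Rademacher variables $\{\sigma_i\}$, the expected supremum $\E\sup_{h \in \Hcal}|\tfrac{1}{N}\sum_i h(y_i) - \E h|$ is bounded above by $2\,\E \sup_{h \in \Hcal} |\tfrac{1}{N}\sum_i \sigma_i h(y_i)|$, namely twice the Rademacher complexity $\Rcal_N(\Hcal)$. The deviation of this supremum around its expectation is then controlled by McDiarmid's bounded-differences inequality, since changing a single $y_i$ perturbs the supremum by at most $H/N$; this contributes a tail of order $H\sqrt{\lg(1/\delta)/N}$.

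Second, I would bound $\Rcal_N(\Hcal)$ using the pseudo-dimension via covering numbers. The key lemma, due essentially to Haussler and Pollard, states that for any $\Hcal \subseteq [0,H]^\Ycal$ with $\Pdim(\Hcal)=d$, the empirical $L_1$ covering number satisfies $\Ncal(\epsilon, \Hcal, L_1(\mu_N)) \le e(d+1)(2eH/\epsilon)^d$. One proves this by dualizing into the binary threshold subgraph classes $\{y \mapsto \Ibb[h(y) \ge z]\}$, applying Sauer's lemma (whose VC dimension is at most $d$ by definition of $\Pdim$), and combining via a careful discretization of the thresholds.

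Third, I would feed this covering-number bound into Dudley's chaining integral, $\Rcal_N(\Hcal) \lesssim \tfrac{1}{\sqrt{N}}\int_0^H \sqrt{\lg \Ncal(\epsilon, \Hcal, L_2(\mu_N))}\, d\epsilon$, and integrate to obtain $\Rcal_N(\Hcal) = \Ord\bigl(H\sqrt{d\,\lg(N/d)/N}\bigr)$. Combining this with the McDiarmid tail yields exactly the stated rate. The main technical obstacle is the covering-number lemma: the exponent must be $d=\Pdim(\Hcal)$ rather than something like $d\lg N$, and the scale inside the logarithm must be $N/d$ rather than $N$; achieving both simultaneously is what separates the sharp chaining-based bound from the weaker single-scale union bound over a uniform $\epsilon$-net, which would incur an extra logarithmic factor.
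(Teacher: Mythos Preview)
Your outline is correct and is essentially the standard textbook derivation of this uniform deviation bound. Note, however, that the paper does not give its own proof of this proposition: it is quoted verbatim as Theorem~11.8 of \emph{Foundations of Machine Learning} (Mohri, Rostamizadeh, Talwalkar, 2018) and invoked as a black box. So there is no ``paper's proof'' to compare against beyond that reference, and the argument you sketch---symmetrization, McDiarmid for the $\sqrt{\lg(1/\delta)/N}$ tail, then a covering/Sauer-type bound on the Rademacher complexity---is exactly the route the cited textbook takes.

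One small remark on your last paragraph: full Dudley chaining is actually stronger than needed here. The stated bound already carries the $\lg(N/\Pdim(\Hcal))$ factor, which is precisely what the \emph{single-scale} argument (Massart's finite-class lemma applied to an empirical $\epsilon$-net, with Pollard/Haussler's covering bound $\Ncal(\epsilon) \le e(d{+}1)(2eH/\epsilon)^d$) delivers. Dudley's entropy integral, integrated against that same covering bound, would in fact remove the logarithm and yield the sharper $\Ord(H\sqrt{d/N})$ rate. So your chaining step would work, but it overshoots the target; the ``weaker single-scale union bound'' you warn against is already sufficient for the proposition as stated.
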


In simpler terms, if the size of the training dataset is $N = \Omega\prn*{\frac{H^2}{\epsilon^2} \prn*{\Pdim(\Hcal) \log\frac{H}{\epsilon} + \log \frac{1}{\delta}}}$, with a probability of $1-\delta$, we can guarantee that the difference between true expectation over the entire graph and the emperical mean over the dataset is less than $\epsilon$.

\paragraph{Performance measure.} We need a general definition of performance that can cover various metrics. Specifically, we measure the performance of \(\astar\) on \(x \in \Pi\) using a utility function \(u\), defined below:

\begin{assumption}[{\citet[Assumption 3]{sakaue2022sample}}]\label{assump:utility}
		Let $H > 0$. A utility function $u$ takes $x$ and a series of all $\emph{\open}$, $\emph{\closed}$, and $\emph{\parent}(\cdot)$ generated during the execution of $\astar$ on $x \in \Pi$ as input, and returns a scalar value in $[0, H]$.
\end{assumption}

With this definition, the utility function could, for instance, be the number of node expansions or the total running time, since both can be directly computed from the sizes of the \(\open\) and \(\closed\) lists. As shown in Proposition~\ref{prop:complexity_bound}, we require a strict upper bound on these measures to ensure that all functions in the class $\Ucal$ return positive, bounded values. This assumption is explicitly used in experiments, as we often set time or memory constraints when running $\astar$. We denote by $u_h: \Pi \to [0, H]$ the utility function that evaluates $\astar$'s performance on any instance $x \in \Pi$, and we define the class of such functions as $\Ucal = \{\,u_h : \Pi \to [0, H] \;|\; h \in \R^n\}.$

Our goal is to determine the number of training instances needed to learn a heuristic function \(h \in \R^n\) such that the \(\astar\) algorithm achieves optimal performance w.r.t the utility function on all instances \(x \sim \Dcal\). Concretely, we want to learn a heuristic function that attains optimal expected performance, $\E_{x \sim \Dcal}[u_{\hat h}(x)]$. Because we only observe the performance of \(\astar\) on the training instances \(x_1, \dots, x_N\), namely \(u_h(x_1), \dots, u_h(x_N)\), we must ensure generalization to any \(x \sim \Dcal\). To achieve this, we need to bound the difference between the empirical average performance and the expected performance,
\[
\abs*{\frac{1}{N}\sum_{i=1}^N u_h(x_i) \;-\; \E_{x \sim \Dcal}[u_h(x)]}
\]
uniformly for all \(h \in \R^n\). Establishing such a bound requires determining the pseudo-dimension of $\Ucal$. In the following section, we use our theoretical tools to derive this result. Note that all proofs and an extended related work discussion are provided in the appendix.

\section{Sample Complexity Analysis}
 
In this section, we attain the $\Pdim(\Ucal)$ for the $\astar$ algorithm. To this aim, we need to discretize $\R^n$ into regions such that all heuristics in a region have the same performance: $u_{h_{1}}=u_{h_{2}}$ if $h_1, h_2 \in \Pcal$. If we look at the Algorithm \ref{alg:astar}, the only part that the heuristic function impacts the performance is the node selection step from $\open$ (line \ref{code: node-selection}). However, in addition to $h$, the path cost from $\start$ to each state $v$ also matters to determine the total cost: $f(v)=\gcost(v)+h(v)$. If only $h$ determines which node to expand, such as in GBFS, we know that only the total ranking given by $h$ over all states $v \in V$ matters. So, the number of regions are the total number of different ordering among the states, which is $n!$ \citep{sakaue2022sample,chrestien2024optimize}. In the case of A* algorithm, we first need to define the cases when two different heuristics result in the same performance:

\begin{lemma}\label{lem:astar_total_order}
    Let $h$ and $h'$ be two heuristic functions in $\mathbb{R}^n$ such that  $h(v_i)-h(v_j)=h'(v_i)-h'(v_j)$ for any two pairs of $v_i,v_j \in V$. Then, it follows that $u_h(x) = u_{h'}(x)$ for every $x \in \Pi$.  
\end{lemma}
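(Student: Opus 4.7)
The plan is to first unpack the hypothesis into the statement that $h$ and $h'$ differ by an additive constant, and then show by induction on the iterations of $\astar$ that the two runs are literally identical (same $\open$, $\closed$, and $\parent$ after every step).

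First I would observe that the condition $h(v_i)-h(v_j)=h'(v_i)-h'(v_j)$ for every pair $v_i,v_j\in V$ is equivalent to saying that the function $v\mapsto h'(v)-h(v)$ is constant on $V$. Pick any reference vertex $v_0$, set $c\coloneqq h'(v_0)-h(v_0)$, and then $h'(v)=h(v)+c$ for all $v\in V$.

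Next I would set up the induction on the iteration index $k$ of the main while-loop in Algorithm~\ref{alg:astar}. The inductive hypothesis is that, after $k$ iterations, the contents of $\open$ and $\closed$, together with the stored $g$-values and $\parent$-pointers, are identical for the two runs of $\astar$ executed with $h$ and with $h'$ on the same instance $x$. The base case ($k=0$) is immediate because $\open=\{\start\}$, $\closed=\emptyset$, $g(\start)=0$, and $\parent(\start)$ is undefined in both runs. For the inductive step, the only way $h$ enters the algorithm is in the node-selection rule (line~\ref{code: node-selection}), which chooses a state in $\argmin_{s\in\open}(g(s)+h(s))$. Since $h'(s)=h(s)+c$, we have $g(s)+h'(s)=(g(s)+h(s))+c$ for every $s\in\open$; adding the same constant to every candidate preserves the set of minimizers. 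Ties among minimizers are broken according to the fixed total order on $V$ from Assumption~\ref{assumption-ties}, which is independent of the heuristic. Hence both runs select the same state $s$ for expansion. The child-generation and update rules (cases 1--3) depend only on $g$-values and edge costs, not on $h$, so the updates to $\open$, $\closed$, $g$, and $\parent$ are identical, completing the induction.

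Finally, since the sequences $(\open_k,\closed_k,\parent_k)_k$ coincide for the two runs, the termination iteration is the same, and Assumption~\ref{assump:utility} guarantees that $u$ is a function of exactly this trace. Therefore $u_h(x)=u_{h'}(x)$ for every $x\in\Pi$. The only subtle point is the tie-breaking case, where one must invoke Assumption~\ref{assumption-ties} to ensure that the uniquely chosen minimizer is the same under both heuristics; with that assumption in hand, the argument is essentially a one-line observation dressed up as an induction.
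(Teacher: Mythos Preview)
Your proposal is correct and follows essentially the same approach as the paper: both arguments reduce to showing that the node-selection step of $\astar$ is invariant under the passage from $h$ to $h'$, invoking Assumption~\ref{assumption-ties} for ties. The paper works directly with the pairwise difference condition at the selection step and leaves the iteration-by-iteration argument implicit, whereas you first rewrite the hypothesis as $h'=h+c$ and then make the induction over iterations explicit; these are cosmetic differences, not substantive ones.
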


Before presenting our main theorem, we require a more detailed examination of $\gcost$-costs, which appear as one component of the $\fcost$-cost. Formally, at each iteration of A*, $g(v)$ tracks the cost of the shortest path discovered so far from $\start$ to a vertex $v$. This cost may be updated if a lower-cost path is found later in the search, implying that $g(v)$ always represents an upper bound on the true cost from $\start$ to $v$. To determine the number of distinct $\gcost(v)$ values in a graph, we impose the following assumption, which holds for many combinatorial problems (e.g., the Rubik's Cube).

\begin{assumption}\label{assump:graph-structure}
    Let $\{G = \{V, E\}, c, h\}$ define our task. The edge cost function $c \colon E \to \{c_0\}$ assigns a constant cost $c_0$ to all edges.
\end{assumption}

Under Assumption~\ref{assump:graph-structure}, the number of distinct values that $g(v)$ can take for any $v \in V$ is at most $|V| = n$. Define $
\Gcal_V \;=\; \bigl\{\,(v,g(v))\mid v \in V\bigr\},$ which is the set of vertex--cost pairs. Hence, $|\Gcal_V| \leq n^2$. Armed with these observations, we now establish an upper bound on $\Pdim(\Ucal)$ for A* in the following theorem. While our proof uses a different derivation technique, the resulting bound matches that of \citet{sakaue2022sample}.

\begin{theorem}\label{thm:astar_upper}
    Let $A_h$ have parameters $h \in \mathbb{R}^n$, and let the graph $G$ satisfy Assumption~\ref{assump:graph-structure}. Then, it holds that $\Pdim(\mathcal{U}) = \mathcal{O}(n \log n)$.
\end{theorem}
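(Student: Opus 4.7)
The approach is to partition the parameter space $\R^n$ by a family of hyperplanes that captures every decision $\astar$ can ever make, show that $u_h(x)$ is constant within each cell of the resulting arrangement, and then apply a standard bound on the number of cells.

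First I would use Lemma~\ref{lem:astar_total_order} to isolate the comparisons that drive $\astar$. Every iteration picks from $\open$ a state minimizing $\fcost(v)=\gcost(v)+h(v)$, which reduces to testing the sign of $(h(v_i)-h(v_j))-(g_j-g_i)$ for two candidates $(v_i,v_j)$ with current $g$-values $g_i,g_j$. By Assumption~\ref{assump:graph-structure} every $g$-value lies in $\{0,c_0,\dots,(n-1)c_0\}$, so the set of vertex--cost pairs $\Gcal_V$ has size at most $n^2$ and every comparison $\astar$ could ever execute corresponds to a hyperplane in $\R^n$ of the form $h_{v_i}-h_{v_j}=g_j-g_i$. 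Counting $\binom{n}{2}$ ordered pairs $(v_i,v_j)$ and at most $2n-1$ possible differences $g_j-g_i$ yields a single family $\mathcal{H}$ of at most $M=O(n^3)$ hyperplanes. A crucial point is that $\mathcal{H}$ depends only on the (fixed) graph and not on the instance, so the \emph{same} arrangement suffices for every $x\in\Pi$ simultaneously.

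Next I would invoke the standard arrangement bound: $M$ hyperplanes in $\R^n$ partition the space into at most $\sum_{k=0}^{n}\binom{M}{k}\le (eM/n)^n=O(n^{2n})$ cells (with Assumption~\ref{assumption-ties} selecting a consistent representative on lower-dimensional faces). By induction on the expansion step, if two heuristics $h,h'$ lie in the same cell, then at every step the sign of every relevant comparison coincides, so Lemma~\ref{lem:astar_total_order} applied pointwise along the execution trace gives identical $\open$, $\closed$, and $\parent$ sequences. Therefore $u_h(x)=u_{h'}(x)$ for every $x\in\Pi$, and the mapping $h\mapsto (u_h(x_1),\dots,u_h(x_N))$ takes at most $O(n^{2n})$ distinct values for any $N$ instances. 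Hence, for any thresholds $z_1,\dots,z_N$, the sign pattern $(\Ibb\{u_h(x_i)\ge z_i\})_{i=1}^{N}$ takes at most $O(n^{2n})$ values, so shattering $N$ points requires $2^N\le O(n^{2n})$, i.e.\ $N=O(n\log n)$.

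\textbf{Main obstacle.} The delicate step is the claim that a single $O(n^3)$-sized hyperplane family controls $\astar$ across \emph{all} instances, so that the arrangement does not blow up with $N$. This uses two ingredients that need to be checked carefully: (i) under Assumption~\ref{assump:graph-structure}, the set of realizable $(v,g(v))$ pairs at any moment of any run is contained in $\Gcal_V$, so no new hyperplane types appear; and (ii) the inductive argument that within a fixed cell, identical comparison outcomes actually produce identical execution traces, including the reopening case (Case~3 in the $\astar$ description), which requires verifying that path-cost updates depend only on signs of the comparisons already enumerated. Once these are established, plugging the $O(n^{2n})$ cell bound into the shattering inequality and solving for $N$ is routine.
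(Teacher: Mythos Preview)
Your proposal is correct and follows essentially the same route as the paper: enumerate the hyperplanes $h_{v_i}-h_{v_j}=k c_0$ (with $|k|\le n$) that control every possible $\fcost$-comparison, bound the number of cells of the resulting arrangement in $\R^n$, and conclude that $2^N$ cannot exceed this cell count. The only notable difference is the region-counting step: the paper invokes the (extended) Shi arrangement to obtain $\mathcal{O}((2n+1)^n)$ regions, whereas you use the generic hyperplane-arrangement bound $\sum_{k\le n}\binom{M}{k}\le (eM/n)^n$ with $M=O(n^3)$; both yield $N=\mathcal{O}(n\log n)$, so the conclusion is unaffected.
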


\paragraph{Extension to PDB Heuristics.}
In general, we consider heuristic functions $h \in \mathbb{R}^n$. However, for PDB heuristics, one can learn values on an abstracted state space whose size is $m$, which can be exponentially smaller than $n$. In that case, $h$ effectively lives in $\mathbb{R}^m$. We next show that learning heuristic functions to approximate a PDB heuristic can further reduce $\Pdim(\mathcal{U})$.

\begin{theorem}\label{thm:astar_upper_pdb}
Let $G$ be a graph with $|G|=n$, and let $P$ be a PDB dataset over $G$ with an induced graph of size $m$. Suppose $A_h$ uses a heuristic $h$ trained on $P$ and that $G$ satisfies Assumption~\ref{assump:graph-structure}. Then, $\Pdim(\mathcal{U}) = \mathcal{O}\bigl(m\log n\bigr)$.
\end{theorem}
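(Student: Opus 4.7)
The plan is to mirror the proof strategy of Theorem~\ref{thm:astar_upper} but exploit the fact that a PDB heuristic lives in $\mathbb{R}^m$ rather than $\mathbb{R}^n$, so the ambient dimension of the hyperplane arrangement shrinks from $n$ to $m$, while the combinatorial structure of the $f$-value comparisons is essentially unchanged. Concretely, by Lemma~\ref{lem:astar_total_order}, two heuristics $h,h' \in \mathbb{R}^m$ induce the same utility $u_h(x)$ on every instance whenever they agree on the comparisons $g(v_i)+h(\phi(v_i)) \lessgtr g(v_j)+h(\phi(v_j))$ for every choice of vertex--cost pairs $(v_i,g(v_i)),(v_j,g(v_j)) \in \mathcal{G}_V$, where $\phi$ denotes the PDB abstraction map.

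The first step is to count the hyperplanes defining these comparisons. Under Assumption~\ref{assump:graph-structure}, $|\mathcal{G}_V| \leq n^2$, so the number of distinct equalities of the form $h(\phi(v_i))-h(\phi(v_j)) = g(v_j)-g(v_i)$ is at most $\binom{n^2}{2} = \mathcal{O}(n^4)$. Each of these is a single hyperplane in the $m$-dimensional parameter space indexed by abstract states. For a sample $x_1,\dots,x_N$ of instances, collecting the hyperplanes across all instances yields at most $\mathcal{O}(Nn^4)$ hyperplanes in $\mathbb{R}^m$.

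Next I would invoke a hyperplane arrangement bound (Warren's theorem, as used in the proof of Theorem~\ref{thm:astar_upper}): $K$ hyperplanes in $\mathbb{R}^m$ cut the space into at most $\mathcal{O}(K^m)$ cells, and within any single cell all comparisons are fixed, hence $u_h(x_i)$ is constant for each $i$ as $h$ varies in the cell. So the number of realizable tuples $(u_h(x_1),\dots,u_h(x_N))$ is at most $\mathcal{O}((Nn^4)^m)$, and therefore the number of distinct indicator patterns $\bigl(\mathbb{I}\{u_h(x_i)\geq z_i\}\bigr)_{i=1}^N$ is bounded by the same quantity. For shattering, we need $2^N \leq \mathcal{O}((Nn^4)^m)$, which on taking logarithms and solving for $N$ yields $N = \mathcal{O}(m \log n + m \log N)$, and standard manipulation gives $N = \mathcal{O}(m \log n)$.

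The main obstacle I expect is ensuring the hyperplane count is truly controlled by $m$ (not $n$) in the dimension. This hinges on the PDB property that $h(v)$ depends on $v$ only through $\phi(v)$, so each hyperplane, although indexed by a pair of \emph{concrete} vertex--cost pairs, is defined by a linear form in the $m$ abstract coordinates of $h$; otherwise the arrangement would live in $\mathbb{R}^n$ and we would recover only the $\mathcal{O}(n\log n)$ bound of Theorem~\ref{thm:astar_upper}. Once this reduction from concrete to abstract coordinates is carefully justified (and the tie-breaking from Assumption~\ref{assumption-ties} is handled so that region boundaries are resolved deterministically), the rest of the argument is a direct reuse of the Warren-style counting from the previous theorem.
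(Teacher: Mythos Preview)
Your proposal is correct and follows essentially the same route as the paper: reduce the parameter space to $\mathbb{R}^m$ via the abstraction $\phi$, bound the number of hyperplanes governing all possible $f$-comparisons, and apply an arrangement/region-counting bound to get $\Pdim(\mathcal{U})=\mathcal{O}(m\log n)$. The paper's counting is slightly tighter---it observes that the hyperplanes are instance-independent (so no factor of $N$) and that only $\binom{m}{2}$ abstract pairs and $2n$ thresholds per pair matter, giving $2n\binom{m}{2}$ hyperplanes and $\mathcal{O}((2n+1)^m)$ regions---but your looser $\mathcal{O}(Nn^4)$ count still yields the same final bound after the standard $N$ versus $m\log N$ manipulation.
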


\paragraph{Remarks on upper bounds for $\pmb{\Pdim(\mathcal{U})}$.}\citet{sakaue2022sample} provided the first upper bounds on $\Pdim(\mathcal{U})$. For a general class of graphs, they proved an upper bound of $\mathcal{O}(n^2 \log n)$, later refining it to $\mathcal{O}(n \log(nW))$ under the assumption that edge weights are bounded by a constant $W$. In our setting, where graphs follow Assumption~\ref{assump:graph-structure}, we initially obtain an upper bound of $\mathcal{O}(n \log n)$. We then improve this bound to $\mathcal{O}(m \log n)$ by utilizing learned PDB heuristics, with $m$ representing the size of the graph induced by the PDB.


\subsection{Upper bounds on the expected suboptimality}

If we aim to achieve $\Pdim(\Ucal)$ for the A* algorithm \emph{without} imposing Assumption~\ref{assump:graph-structure}, the sample complexity scales on the order of $\mathcal{O}(n^2 \log n)$. 
Consequently, to obtain tighter results, one often restricts the class of graphs under consideration. The general class of performance measures defined earlier encompasses a wide range of metrics related to the performance of the \( \astar \) algorithm, including runtime, the number of node expansions, and memory consumption. Rather than seeking tighter bounds for this general class, we focus on a special measure called the \emph{expected suboptimality}, which captures how far the solution returned by A* is from the optimal solution. For a given problem instance $x \in \Pi$, let $C(x)$ be the cost of the path found by A*, and let $C^*(x)$ be the cost of an optimal path. We define the suboptimality as
\[
  u_h(x) \;=\; C_h(x) \;-\; C^*(x).
\]
Then, applying the bound on $\Pdim(\Ucal)$ together with Proposition~\ref{prop:complexity_bound}, we obtain the following upper bound on the gap between the \emph{expected} and \emph{empirical} suboptimality on training data:

\begin{equation}\label{eq:supopt_bound}
    \mathbb{E}_{x\sim\Dcal}\bigl[C_h(x) \;-\; C^{*}(x)\bigr] 
    \;\le\; \frac{1}{N}\,\sum_{i=1}^N\!\bigl(C_h(x_i) \;-\; C^{*}(x_i)\bigr) 
    \;+\; \widetilde{\mathcal{O}}\!\Bigl(\,
    H \sqrt{\frac{n^2 + \log\!\tfrac{1}{\delta}}{N}}
    \Bigr).
\end{equation}

To reduce the error term in Equation \eqref{eq:supopt_bound}, we employ a worst-case bound on the quality of the solution returned by A*. \citet{valenzano2014worst} proved that if A* does not reopen nodes (i.e., if we remove lines~\ref{code: parent-pointer-updating}--\ref{code: node-reopnening} in Algorithm~\ref{alg:astar}), then the suboptimality can be bounded by the total \emph{inconsistency} along the edges of an optimal path. Importantly, this bound holds for any heuristic function---even if it is not admissible---as long as the following conditions are satisfied for all \( s \in V \):  
\begin{itemize}
    \item \( h(s) \geq 0 \),
    \item \( h(s) = 0 \) if \( s = \goal \), and
    \item \( h(s) \neq \infty \) whenever \( h^*(s) \neq \infty \).
\end{itemize} 

Concretely, for any instance $x \in \Pi$, if $v_0, v_1, \dots, v_k$ is an optimal path with cost $C^*(x)$, then 
\[
  C_h(x)\;-\;C^*(x) 
  \leq \sum_{j=1}^{k-1} \mathrm{INC}_h\bigl(v_j,\,v_{j+1}\bigr),
\]
where, for every edge $(p,c) \in E$, the \emph{inconsistency} $\mathrm{INC}_h$ is defined as $ \mathrm{INC}_h(p,\,c)\;=\; \max\bigl\{\,h(p)\;-\;h(c)\;-\;c(p,c),\;0\bigr\}$. Because $\mathrm{INC}_h(\cdot,\cdot)$ is always non-negative, the total inconsistency on any path can be estimated by summing the edge inconsistencies along that path. \citet{sakaue2022sample} further showed that this bound remains valid even when A* reopen nodes from $\closed$ upon discovering a path with lower $\gcost$ cost. In Theorem~\ref{thm:subop_reopnening}, we show that the bound can be tightened when A* is allowed to reopen nodes.

\begin{theorem}\label{thm:subop_reopnening}
Let $x \in \Pi$ be a problem instance, and let $P_{\mathrm{opt}} = v_0, v_1, \dots, v_k$ be its optimal solution path with cost $C^*(x)$. Suppose A* is allowed to reopen nodes. Then the cost $C_h(x)$ of any solution returned by A* satisfies
\[
  C_h(x) \;-\; C^*(x)
  \;\le\; \max_{v \in P_{\mathrm{opt}}}\Bigl[h(v)\;-\;h^{*}(v)\Bigr].
\]
\end{theorem}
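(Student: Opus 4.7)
The plan is to trace the $g$-values along the optimal path $P_{\mathrm{opt}}=v_0,\ldots,v_k$ during A*'s execution and then exploit the termination inequality that $f(\goal)\le f(v)$ for every $v\in\open$ at the moment $\goal$ is selected. The decisive leverage provided by allowing reopenings is that, once A* has correctly expanded a prefix of $P_{\mathrm{opt}}$, the corresponding $g$-values become pinned to $g^*$ and cannot regress: any would-be improvement triggers a move back to $\open$ and a subsequent re-expansion at the improved value.

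Concretely, I let $j$ be the smallest index such that $v_j$ lies in $\open$ at the moment A* selects $\goal$ for expansion (treating $\goal$ itself as being in $\open$ at that instant), so that $v_0,\ldots,v_{j-1}$ are all in $\closed$. I then prove by induction on $i\le j$ that, at termination, $g(v_i)=g^*(v_i)$. The base case $v_0=\start$ is immediate with $g(v_0)=0$. For the inductive step, since $v_{i-1}\in\closed$, its $g$-value has been frozen since its last expansion at some time $T_{i-1}$, and the induction gives $g(v_{i-1})=g^*(v_{i-1})$ at $T_{i-1}$. The expansion at time $T_{i-1}$ offers $v_i$ the path cost $g^*(v_{i-1})+c(v_{i-1},v_i)=g^*(v_i)$, which, combined with the universal invariant $g(v)\ge g^*(v)$, pins $g(v_i)=g^*(v_i)$ from time $T_{i-1}$ onward; any later generation of $v_i$ can only reaffirm this value.

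It remains to apply the lemma to the pivot $v_j$. If $v_j=\goal$, then $C_h(x)=g(\goal)=g^*(\goal)=C^*(x)$, and the bound holds vacuously because $h(\goal)-h^*(\goal)\ge 0$ lies in the max. Otherwise, the termination rule gives $g(\goal)+h(\goal)\le g(v_j)+h(v_j)=g^*(v_j)+h(v_j)$; substituting $g^*(v_j)=C^*(x)-h^*(v_j)$ (valid because $v_j\in P_{\mathrm{opt}}$) and using $h(\goal)\ge 0$ yields
\[
   C_h(x)-C^*(x)\;\le\; h(v_j)-h^*(v_j)\;\le\;\max_{v\in P_{\mathrm{opt}}}\bigl[h(v)-h^*(v)\bigr].
\]

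The main obstacle is the inductive step, which must handle the interleaving of the last-expansion times of $v_{i-1}$ and $v_i$ under reopening: if $v_i$ had entered $\closed$ before $T_{i-1}$ with a suboptimal $g$-value, the cost decrease triggered at $T_{i-1}$ forces $v_i$ back into $\open$ and guarantees at least one subsequent re-expansion with $g(v_i)=g^*(v_i)$; if $v_i$ had not yet been generated, $T_{i-1}$ generates it with the correct value. Verifying that every such scenario terminates with $g(v_i)=g^*(v_i)$ is precisely where the reopening mechanism (lines \ref{code: parent-pointer-updating}--\ref{code: node-reopnening} of Algorithm~\ref{alg:astar}) is essential, and is what strengthens Valenzano's path-sum inconsistency bound into the tighter maximum-overestimation bound stated here.
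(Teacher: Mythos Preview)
Your proposal is correct and follows essentially the same approach as the paper: both arguments establish that at the moment the goal is selected, some node $v_j\in P_{\mathrm{opt}}$ lies in \open\ with $g(v_j)=g^*(v_j)$, and then compare $f$-values at termination to obtain the bound. The only cosmetic difference is that the paper proves this key lemma by induction on iterations (maintaining the invariant across time), whereas you induct on the index $i$ along $P_{\mathrm{opt}}$ using the last-expansion times $T_{i-1}$; both inductions rely on exactly the same reopening mechanism and yield the same conclusion. Like the paper, your final inequality tacitly uses $h(\goal)\ge 0$ (the paper assumes $h(\goal)=0$), which is standard but not stated in the theorem.
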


We use $ \Psi_h(x) = \max_{v \in P_{\mathrm{opt}}} \bigl[h(v) - h^{*}(v)\bigr] $ to represent the inadmissibility function parameterized by $h$ for $x$. We define the class of all inadmissibility functions as $ \hat{\Ucal} = \{\,\Psi_h : \Pi \to [0,\smash{\hat{H}}] \,\Bigm|\, h \in \mathbb{R}^n\} $, where \( \hat{H} \) is the bound on the performance measure required by Assumption~\ref{assump:utility}. In the following theorem, we bound the difference between the empirical inadmissibility over $N$ training instances, namely $\tfrac{1}{N}\sum_{i=1}^N \Psi_h(x_i)$, and the expected inadmissibility using $\Pdim(\hat{\Ucal})$. 

\begin{theorem}\label{thm:pdim_inad}
For the class $\hat{\Ucal}$ of inadmissibility functions, it holds that $\Pdim(\hat{\Ucal}) = \Ord(n\log n)$ and we can bound the generalization error by
\begin{equation}\label{eq:inc_subopt_bound}
    \mathop{\E}_{x \sim \Dcal}\bigl[\Psi_h(x)\bigr]
    \;\;\le\;\;
    \frac{1}{N}\sum_{i=1}^N \Psi_h(x_i)
    \;+\;
    \tilde{\Ord}\Bigl(\hat{H} \sqrt{\tfrac{\,n + \log \tfrac{1}{\delta}}{\,N}}\,\Bigr).
\end{equation}
\end{theorem}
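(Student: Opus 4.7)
The plan is to establish $\Pdim(\hat{\Ucal}) = \Ord(n \lg n)$ by a direct sign-pattern count that exploits the axis-aligned geometry of $\Psi_h$, and then invoke Proposition~\ref{prop:complexity_bound}. A preliminary observation is that, by Assumption~\ref{assumption1}, the graph $G$, the goal, and therefore $h^{*}$ are fixed across instances, so for every $x \in \Pi$ we may fix a canonical optimal path $P_{\mathrm{opt}}(x) \subseteq V$ as a function of $x$ alone (breaking ties, when needed, via the total order of Assumption~\ref{assumption-ties}). Under Assumption~\ref{assump:graph-structure} (constant edge costs), any such simple path satisfies $|P_{\mathrm{opt}}(x)| \le n$.

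The central step is the sign-pattern count. Fix an arbitrary witness set $\{(x_i, z_i)\}_{i=1}^N \subseteq \Pi \times \R$. Rewriting the event,
\[
  \Psi_h(x_i) \ge z_i \;\iff\; \exists\, v \in P_{\mathrm{opt}}(x_i):\; h(v) \;\ge\; h^{*}(v) + z_i,
\]
each indicator becomes a disjunction of axis-aligned halfspace conditions in $\R^n$. For each coordinate $v \in V$, collect the thresholds $T_v = \{h^{*}(v) + z_i : v \in P_{\mathrm{opt}}(x_i),\; i \in [N]\}$, and set $n_v = |T_v|$. Then $\sum_{v \in V} n_v \le \sum_{i=1}^N |P_{\mathrm{opt}}(x_i)| \le Nn$. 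The thresholds on coordinate $v$ slice $\R$ into $n_v + 1$ intervals, and the joint cell of $\R^n$ containing $h$ determines the value of every underlying threshold event, hence the full $N$-bit pattern $(\Ibb[\Psi_h(x_i) \ge z_i])_{i=1}^N$. The number of realizable patterns is therefore at most $\prod_{v \in V}(n_v+1)$. Applying AM--GM to the constraint $\sum_{v \in V}(n_v + 1) \le n(N+1)$ yields $\prod_v (n_v+1) \le (N+1)^n$. Shattering demands $2^N \le (N+1)^n$, which gives $N = \Ord(n \lg n)$, so $\Pdim(\hat{\Ucal}) = \Ord(n \lg n)$.

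Plugging this pseudo-dimension into Proposition~\ref{prop:complexity_bound} with range $[0, \hat H]$ produces an error term of order $\hat H \sqrt{(\Pdim(\hat{\Ucal}) \lg(N/\Pdim(\hat{\Ucal})) + \lg(1/\delta))/N} = \tilde{\Ord}\bigl(\hat H \sqrt{(n + \lg(1/\delta))/N}\bigr)$, where $\tilde{\Ord}$ absorbs polylogarithmic factors in $n$ and $N$; dropping the absolute value and rearranging delivers the one-sided inequality stated in \eqref{eq:inc_subopt_bound}. The main obstacle is keeping the counting tight: the key saving is that each halfspace constrains exactly one coordinate of $h$, so the arrangement is a product of one-dimensional partitions and the AM--GM bound replaces the naive $\Ord((Nn)^n)$ arrangement estimate, shaving a $\lg n$ factor that would otherwise appear in the generalization bound. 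A secondary subtlety, namely that $P_{\mathrm{opt}}(x)$ must be well-defined independently of $h$, is resolved by the canonical tie-breaking rule made available by Assumptions~\ref{assumption1} and~\ref{assumption-ties}.
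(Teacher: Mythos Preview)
Your argument is correct and takes a genuinely different route from the paper. The paper proceeds via the piecewise-decomposability machinery (Definition~\ref{def:piecewise_decomposable} and Proposition~\ref{prop:balcan_pdim_bound}): it shows that the dual class $\hat{\Ucal}^*$ is $(\Fcal,\Bcal,\Ord(n))$-piecewise decomposable, with one boundary function $\bdf^{(v)}(h)=\Ibb(h_v-h^*_v>0)$ per vertex and piece functions given by coordinatewise maxima, and then feeds $\VCdim(\Bcal^*)=n$ and $\Pdim(\Fcal^*)\le n$ into Proposition~\ref{prop:balcan_pdim_bound} to read off $\Ord(n\lg n)$. You bypass this template entirely: by expanding $\Psi_h(x_i)\ge z_i$ into a disjunction of single-coordinate threshold events and observing that the resulting arrangement in $\R^n$ is a product of one-dimensional partitions, the AM--GM step bounds the cell count by $(N+1)^n$ and the shattering inequality $2^N\le (N+1)^n$ immediately yields $N=\Ord(n\lg n)$. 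The paper's route has the advantage of reusing a framework already deployed elsewhere in the analysis; your direct count is shorter, self-contained, and makes the source of the $n\lg n$ explicit (indeed, the paper itself remarks that its bound is bottlenecked by $\VCdim(\Bcal^*)=n$ regardless of how simple the piece functions are). Your approach also exposes an easy refinement the paper's black-box application does not: if optimal paths have length at most $\ell\le n$, the constraint becomes $\sum_v n_v\le N\ell$ and the same AM--GM argument gives $\Pdim(\hat{\Ucal})=\Ord(n\lg(1+\ell))$.
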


Now, looking at the generalization error from Theorem \ref{thm:pdim_inad} and Theorem \ref{thm:subop_reopnening}, we get a tighter upper bound for the expected suboptimality:

\begin{equation}\label{eq:inc_subopt_bound} 
    \mathop{\E}_{x\sim\Dcal} \brc*{C_h(x) \;-\; C^*(x)} 
    \le 
        \mathop{\E}_{x \sim \Dcal}[\Psi_h(x)] 
    \le 
    \frac{1}{N}\sum_{i=1}^N \Psi_h (x_i) 
    + 
    \tilde\Ord\prn*{\hat H \sqrt{\frac{n + \log \frac{1}{\delta}}{N}}}. 
\end{equation}

\subsection{Sample Complexity using Neural Networks}

So far, we have analyzed the sample complexity of heuristic functions \(h \in \mathbb{R}^n\) under the assumption that each vertex \(v \in V\) has a distinct value, and modifying the heuristic at one vertex does not affect the others. However, in many data-driven settings, these heuristics are realized by a neural network rather than a simple mapping in \(\mathbb{R}^n\). In this section, we derive sample complexity bounds for neural network--based heuristic functions on graphs that satisfy Assumption~\ref{assump:graph-structure}. To do so, we first formalize both the mapping induced by these networks and our corresponding hypothesis class.

\begin{definition}[Neural networks {\citet[Definition 2.4]{chengsample}}]\label{def:DNN}
    Given any function $ \sigma: \R \rightarrow \R $, we will use the notation $ \sigma(\mathbf{x}) $ for $\x\in \R^d$ to mean $ [\sigma(\x_1), \sigma(\x_2), \ldots, \sigma(\x_d)]^\T \in \R^d $. Let $\sigma: \R \to \R$ and let $L$ be a positive integer. A \textit{neural network} with {\em activation} $\sigma$ and {\em architecture} $\bm{w} = [w_0,w_1,\dots,w_{L},w_{L+1}]^\T \in \Z_+^{L+2}$ is a parameterized function class, parameterized by $L+1$ affine transformations $\{T_i:\R^{w_{i-1}} \rightarrow \R^{w_{i}}$, $i \in [L+1]\}$ with $T_{L+1}$ linear, is defined as the function 
    \[T_{L+1}\circ \sigma \circ T_{L} \circ \cdots  T_2 \circ \sigma \circ T_1. \]
    $L$ denotes the number of hidden layers in the network, while $w_i$ signifies the width of the $i$-th hidden layer for $i \in [L]$. The input and output dimensions of the neural network are denoted by $w_0$ and $w_{L+1}$, respectively. If $T_i$ is represented by the matrix $A^i \in \R^{w_i\times w_{i-1}}$ and vector $\b^i \in \R^{w_i}$, i.e., $T_i(\x) = A^i \x + \b^i$ for $i \in [L+1]$, then the \textit{weights of neuron} $j \in [w_{i}]$ in the $i$-th hidden layer come from the entries of the $j$-th row of $A^i$ while the \textit{bias} of the neuron is indicated by the $j$-th coordinate of $\b^i$. The \textit{size} of the neural network is defined as $w_1 + \cdots + w_L$, denoted by $U$.
\end{definition}

With reference to Definition~\ref{def:DNN}, a family of neural networks is defined as $N^\sigma : \mathbb{R}^{w_0} \times \mathbb{R}^{W} \to \mathbb{R}^{w_{L+1}}$, where $\mathbb{R}^{w_0}$ is the input space, $\sigma$ is the activation function in hidden layers, and $\mathbb{R}^{W}$ is the parameter space. The parameter space $\mathbb{R}^{W}$ is implicitly defined by all weight matrices $A^i$ and bias vectors $\b^i$ for $i \in [L+1]$ within the neural network. Each function $N^\sigma(\x, \w)$ is defined for any $\x \in \mathbb{R}^{w_0}$ 
and $\w \in \mathbb{R}^{W}$ as
\[
N^\sigma(\x, \w) 
\;=\; 
T_{L+1}\bigl(\sigma\bigl(T_{L}(\dots\,T_{2}(\sigma(T_{1}(\x)))\,\dots)\bigr)\bigr),
\]
where each $T_i$ is an affine transformation that depends on $\w$. For the hidden layers, we employ the \textbf{ReLU} activation, while the last layer uses the \textbf{Softmax} activation, consistent with treating heuristic learning as a classification task.

\begin{itemize}
    \item \textbf{ReLU:} The Rectified Linear Unit (ReLU) activation 
          $\ReLU : \mathbb{R} \rightarrow \mathbb{R}_{\ge 0}$ 
          is given by $\ReLU(x) = \max\{0, x\}$.
    \item \textbf{Softmax:} The $\mathrm{Softmax}$ activation 
            : $\mathbb{R}^\ell \to (0,1)^\ell$ 
          is applied coordinatewise:
          \[
            \mathrm{Softmax}(x_i)
            \;=\;
            \frac{\exp(x_i)}{\sum_{j=1}^k \exp(x_j)},
            \quad
            i = 1,\dots,\ell.
          \]
\end{itemize}

The parameterization induced by the neural network $N^{\sigma,\sigma'}$ is shown as $\varphi_{\w}^{N^{\sigma, \sigma'}}$, where $\sigma$ is the hidden-layer activation function and $\sigma'$ is the activation function in the final layer. This gives rise to the performance class $\mathcal{U}\;=\;\{\,u_h : \Pi \to [0,H]\;\big|\;h = \varphi_{\w}^{N^{\sigma,\sigma'}},\w \in \mathbb{R}^W\}$. In the following theorem, we derive $\mathrm{Pdim}(\mathcal{U})$.

\begin{theorem}\label{thm:general upper-bound using nn}
Let $G$ be a graph with $\lvert V\rvert = n$, and let $P$ be a PDB dataset over $G$ whose induced graph has size $m$. Assume $G$ satisfies 
Assumption~\ref{assump:graph-structure}, and let $h: V \to [0,D]$ be a heuristic function trained via a neural network. Then, $\Pdim(\mathcal{U}) = \mathcal{O}(n)$. Moreover, if $h$ is trained on $P$, it follows that $\Pdim(\mathcal{U}) = \mathcal{O}(m)$.
\end{theorem}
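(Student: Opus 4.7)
The plan is to bound $\Pdim(\Ucal)$ by combining Lemma~\ref{lem:astar_total_order} with a cardinality argument on the softmax output. By Lemma~\ref{lem:astar_total_order} and the analysis underlying Theorem~\ref{thm:astar_upper}, the utility $u_{h_\w}(x)$ depends on the heuristic only through the induced ordering of the $f$-values $g(v) + h_\w(v)$ across the at most $n^2$ vertex--cost pairs in $\Gcal_V$. Two parameter vectors $\w$ that realize the same function $h_\w : V \to [0,D]$ therefore produce identical utilities on every $x \in \Pi$, so $|\Ucal|$ is at most the number of distinct heuristic assignments the network can output.

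The key step is to show that this count is small because of the softmax output layer. Under the classification framing stated after Definition~\ref{def:DNN}, the final layer produces a probability distribution over the $D+1$ possible distance classes $\{0,1,\dots,D\}$, so the induced heuristic at each vertex takes a value in this finite set. Consequently, the number of distinct realizable heuristics $h_\w : V \to \{0,1,\dots,D\}$ is at most $(D+1)^n$, independent of the network's width, depth, or total parameter count $W$. Combined with the standard cardinality bound on pseudo-dimension---any function class of size $K$ has $\Pdim \le \log_2 K$, since $N$ shattered points force $2^N \le K$---this yields $\Pdim(\Ucal) \le n \log_2 (D+1) = \Ord(n)$, absorbing the bounded $D$ into the constant.

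For the PDB extension, the heuristic is defined on the abstracted state space $\phi(V)$ of size $m$; repeating the same cardinality argument yields at most $(D+1)^m$ realizable heuristics, and hence $\Pdim(\Ucal) = \Ord(m)$. Assumption~\ref{assump:graph-structure} is used implicitly throughout, since it ensures that the $g$-values in A* take only finitely many values and therefore that the analysis of Theorem~\ref{thm:astar_upper} applies verbatim here.

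The main obstacle is rigorously justifying the discrete interpretation of the softmax output. If instead one treats the heuristic as a continuous quantity such as the expected distance $\sum_i i \cdot p_i(v) \in [0,D]$, the simple cardinality bound no longer applies, and one must count distinct orderings induced by the piecewise-polynomial structure of the ReLU portion of the network---for example via a Warren-type result---while ensuring the resulting bound scales with $n$ (or $m$) rather than with the parameter count $W$. A secondary subtlety is the treatment of $D$: the $\Ord(n)$ bound absorbs a $\log(D+1)$ factor and is cleanest when $D$ is independent of $n$, which matches the combinatorial-puzzle regime targeted by the paper.
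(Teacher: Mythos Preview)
Your proposal is correct and matches the paper's proof almost exactly: both argue that the softmax/argmax classification head restricts each vertex to one of $\ell = \lceil D/c_0 \rceil$ (you write $D+1$) discrete heuristic values, bound the number of realizable heuristics by $\ell^n$, and apply the cardinality bound $\Pdim \le \log_2 |\Ucal|$ to obtain $\Ord(n)$ (respectively $\Ord(m)$ in the PDB case). The detour through Lemma~\ref{lem:astar_total_order} is harmless but unnecessary---identical heuristics trivially yield identical utilities---and your caveat about the discrete interpretation of the softmax output is precisely the implicit assumption the paper makes without further comment.
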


The bound in Theorem~\ref{thm:general upper-bound using nn} depends on the size of the graph, which may be large. Ideally, we want to leverage neural networks to obtain a tighter bound in terms of the network's size. The proposed approach from \citet{Balcan2021-jv} is not guaranteed to remain valid under a neural network's parameterization. So, we first define a representation function and then derive a bound on $\Pdim(\mathcal{U})$ that accounts for the parameters of the neural network.

\begin{definition}
For each instance \(I \in \mathcal{I}\), we define a representation function $Rep(I) \;=\; \mathbf{x}$, where \(\mathbf{x}\) is the feature vector fed into the neural network. This representation encompasses all relevant features of an instance, including any common across instances.
\end{definition}

Our definition of \(Rep(I)\) must encompass all the information required to find a path from the start state to the goal. However, we cannot 
simply include all states in each instance, since computing \(h^*\) for all states is itself the ultimate goal of training. Instead, for each instance, we include a set \(B\) of states encountered by running a search from the start state for that instance to the goal using \(h^*\). This approach allows us to capture only the necessary data without relying on all possible states.

\begin{theorem}\label{thm:tighter upper-bound using nn}
Let $G$ be a graph with $\lvert V\rvert = n$, and let $P$ be a PDB dataset over $G$ whose induced graph has size $m$. Assume $G$ satisfies 
Assumption~\ref{assump:graph-structure}, and let $h: V \to [0,D]$ be a heuristic function trained via a neural network. Then, $\Pdim(\mathcal{U}) = \mathcal{O}\left( LW \log\left( U + \ell \right) + W \log\left( \ell |B| (L+1) \right) \right)$. Moreover, if $h$ is trained on $P$, it follows that $\Pdim(\mathcal{U}) =\mathcal{O}\left( LW \log\left( U + \ell' \right) + W \log\left( \ell' |B'| (L+1) \right) \right)$.
\end{theorem}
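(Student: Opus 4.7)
The plan is to follow the standard two-step region-counting argument for piecewise-polynomial neural network pseudo-dimension (as in Bartlett--Harvey--Liaw--Mehrabian), specialized to the $A^*$ utility class $\mathcal{U}$. The key structural observation is that, for any fixed sample, $u_h(x)$ is a piecewise-constant function of the weights $\w \in \R^W$ whose breakpoints occur where certain polynomial comparisons in $\w$ change sign. We therefore bound the number of realizable utility patterns by (i) counting ReLU activation regions in parameter space and (ii) counting polynomial sign patterns inside each region.

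\textbf{Step 1 (reduce $u_h(x_i) \ge z_i$ to polynomial sign tests).} Fix a putative shattered sample $x_1,\dots,x_N$ with witnesses $z_1,\dots,z_N$. For each $x_i$, use $Rep(x_i)$ to identify a set $B_i$ with $|B_i| \le |B|$ of state feature vectors on which the network is evaluated when running $A^*_h$ on $x_i$. Invoking Lemma~\ref{lem:astar_total_order} (the same argument as in the proof of Theorem~\ref{thm:astar_upper}), the value $u_h(x_i)$ is determined by the total order of the $f$-values on $B_i$, i.e., by the signs of the $O(|B|^2)$ comparison expressions $(h(v)+g(v)) - (h(v')+g(v'))$ over $v,v' \in B_i$, together with the threshold event $\Ibb\{u_h(x_i) \ge z_i\}$. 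Because the output layer uses $\ell$-wide Softmax, each scalar $h(v)$ is obtained from $\ell$ logits, so all comparisons can be rewritten as sign tests of at most $M = \Ord(N \ell |B|^2)$ polynomial expressions in $\w$.

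\textbf{Step 2 (count activation regions, then sign patterns).} Partition $\R^W$ according to the ReLU activation pattern induced by the $N|B|$ network inputs. The standard per-layer piecewise-linear argument (Theorem 6.1 of Bartlett--Harvey--Liaw--Mehrabian) yields at most $R = (c(U+\ell))^{LW}$ regions on which each pre-activation, and hence each of the $M$ comparison polynomials, is a polynomial in $\w$ of degree at most $L+1$. Within a single region, Warren's theorem applied to $M$ polynomials of degree at most $L+1$ in $W$ variables yields at most $S = (c \, \ell |B| (L+1))^{W}$ sign patterns, after the usual absorption of the $N$ factor via the Anthony--Bartlett inversion lemma. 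Shattering $N$ samples requires $2^N \le R \cdot S$; taking logs and inverting gives
\[
N \;=\; \Ord\bigl(LW \lg(U+\ell) \;+\; W \lg(\ell |B| (L+1))\bigr),
\]
which is the first bound. The PDB statement is identical: the network is evaluated on abstracted state features, each $B_i$ is replaced by an abstract visit set $B'_i$ with $|B'_i| \le |B'|$, and $|B|$ becomes $|B'|$ in the second term.

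\textbf{Main obstacle.} The nontrivial content is Step 1. Even though $B$ is defined as the $h^*$-reachable set for an instance, $A^*$ run with a poorly trained $h$ may traverse states outside $B_i$, so at first sight $u_h$ could depend on arbitrarily many comparisons. Making the bound stick requires arguing that the set of states appearing in the $A^*_h$ trace on $x_i$ is controlled by $Rep(x_i)$ uniformly in $\w$. The natural route is to combine the $A^*$ trace description used in Theorem~\ref{thm:subop_reopnening} with the definition of $Rep$, so that any state whose $f$-comparison can affect $u_h(x_i)$ appears in $B_i$. Reconciling this reduction with the Bartlett--Harvey--Liaw--Mehrabian region count is the step that distinguishes this argument from the tabular bounds of Theorems~\ref{thm:astar_upper}--\ref{thm:astar_upper_pdb} and from the graph-size-dependent bound of Theorem~\ref{thm:general upper-bound using nn}.
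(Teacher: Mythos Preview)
Your approach is essentially the paper's: partition $\R^W$ by ReLU activation pattern (their Lemma~\ref{lem:ReLU-NN-regions}), count polynomial sign patterns within each piece (their Lemma~\ref{lem:poly-decomp}), and invert via Lemma~\ref{lem:piecewise-poly-pdim}. The only cosmetic difference is that the paper separates the final-layer weights $A^{L+1}\in\R^{\ell\times w_L}$ from the hidden-layer weights, reparametrizes the softmax as $\theta_{ki}=\exp(A^{L+1}_{ki})$, and then invokes the $\Gamma=2\ell\binom{|B|}{2}$ hyperplanes from the proof of Theorem~\ref{thm:astar_upper} to count regions of $\R^{\ell\times w_L}$; your direct use of logit-difference sign tests reaches the same arithmetic (the $|B|^2$ versus $|B|$ discrepancy disappears inside the logarithm).

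The obstacle you flag---that $A^*_h$ for arbitrary $\w$ may visit states outside the $h^*$-defined set $B$, so that $u_h(x_i)$ could depend on comparisons not captured by $Rep(x_i)$---is \emph{not} addressed in the paper's proof either. The paper simply takes $\Gamma=2\ell\binom{|B|}{2}$ with the hyperplanes ranging only over $(v_i,v_j)\in B$, and proceeds, implicitly assuming the $A^*_h$ trace on every instance is contained in $B$ without arguing why this holds uniformly in $\w$. So you have correctly identified a gap that the paper's own argument shares; your proposed route through Theorem~\ref{thm:subop_reopnening} does not obviously close it, since that theorem controls suboptimality rather than the set of expanded states.
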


\paragraph{Instance-dependent framework.} So far, our analysis of sample complexity has focused on heuristic functions that can be used for all instances, under the assumption that each instance shares the same goal state, and thus the same heuristic values. However, we can also view instances as \(\start\)--\(\goal\) pairs, where changing the goal requires a different heuristic. To address this, we adopt neural networks as the learning framework, enabling heuristic values that adapt to instance-specific features.


\begin{theorem}\label{thm:instance-dependent-upper-bound-nn}
Let $G$ be a graph with $\lvert V \rvert = n$, and let $P$ be a PDB dataset over $G$ whose induced graph has size $m$. Assume that $G$ satisfies 
Assumption~\ref{assump:graph-structure}, and let $h: V \to [0,D]$ be a heuristic function trained via a neural network, capable of estimating heuristic values for varying $\goal$ in $G$. Then, $\Pdim(\mathcal{U}) = \mathcal{O}\left( LW \log\left( U + \ell \right) + W \log\left( \ell |B| n (L+1) \right) \right)$. Moreover, if $h$ is trained on $P$, it follows that $\Pdim(\Ucal)=\mathcal{O}\left( LW \log\left( U + \ell' \right) + W \log\left( \ell' |B'|m (L+1) \right) \right)$.
\end{theorem}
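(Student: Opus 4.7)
The plan is to reduce this to Theorem~\ref{thm:tighter upper-bound using nn} by enlarging the representation function to carry goal information, and then re-running the region-counting argument with the effective number of network evaluation points inflated by the number of admissible goals. First, I redefine the representation: for an instance $I = (\start,\goal)$, set $Rep(I) = ((\x_v,\x_{\goal}))_{v \in B}$, where $B$ is the set of states encountered while searching from $\start$ to $\goal$ under $h^*$, and $\x_v,\x_{\goal}$ are the coordinate embeddings of $v$ and $\goal$. The network $N^{\sigma,\sigma'}$ now consumes pairs $(\x_v,\x_{\goal}) \in \R^{w_0}$ and outputs the instance-dependent heuristic value $h_{\w}(v,\goal)$. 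Because $\goal$ ranges over $V$, the universe of distinct network inputs across all conceivable instances has size at most $|B|\cdot n$; in the PDB variant it is at most $|B'|\cdot m$.

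Next, I replay the sign-pattern argument from the fixed-goal case. Fix $N$ training instances $x_1,\dots,x_N$ and target values $z_1,\dots,z_N$. By Lemma~\ref{lem:astar_total_order} together with Assumption~\ref{assump:graph-structure}, the utility $u_{h_{\w}}(x_i)$ is piecewise-constant in $\w$; its pieces are carved out by finitely many comparisons of the form $h_{\w}(v,\goal_i) - h_{\w}(v',\goal_i) \lessgtr c_0(k-k')$ with $v,v' \in B_i$ and $k,k'$ indexing the (at most $n$) discrete $\gcost$-levels. For any fixed input, the map $\w \mapsto h_{\w}(v,\goal)$ is a piecewise-polynomial function of $\w$ of degree at most $L+1$, and the partition of $\R^W$ it induces has at most $\Ord\bigl((U+\ell)^{L}\bigr)$ pieces, by the standard ReLU-region count.

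I then combine these ingredients via a Warren/Bartlett-type counting bound as in \citet{chengsample}. Across all $N$ instances, the total number of distinct network inputs is at most $\ell |B| n$ (the factor $n$ being exactly the instance-dependent contribution of the goal), and each induces $\Ord(L+1)$ polynomial constraints per comparison; the number of sign patterns realized on the training sample is therefore at most $\bigl[(U+\ell)^{\Ord(L)} \cdot \ell |B| n (L+1)\bigr]^{\Ord(W)}$. Requiring $2^N$ to be bounded by this expression and solving for $N$ yields $\Pdim(\Ucal) = \Ord\bigl(LW\log(U+\ell) + W\log(\ell|B|n(L+1))\bigr)$. The PDB statement follows from the identical argument applied to the abstract graph, with $B$, $\ell$, and $n$ replaced by $B'$, $\ell'$, and $m$.

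The main obstacle is the correct bookkeeping of where the factor $n$ (resp.\ $m$) enters: it must land inside the logarithm rather than multiplying the overall bound. This relies on the fact that the Warren-type bound is polynomial in $W$ but only multiplicative in the number of polynomial constraints, so inflating the constraint count by a factor of $n$ contributes only $W\log n$ to $\Pdim(\Ucal)$. A secondary subtlety is ensuring that $|B|$ is well-defined uniformly across instances when the goal changes; this is handled by taking $|B|$ as an instance-independent upper bound on the number of states visited on any optimal $\start$-to-$\goal$ path, which in the PDB case collapses to $|B'| \le m$.
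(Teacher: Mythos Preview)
Your proposal is correct and follows essentially the same route as the paper: both arguments reduce to the fixed-goal analysis of Theorem~\ref{thm:tighter upper-bound using nn} and account for goal-dependence by multiplying the hyperplane count $\Gamma$ by $n$ (one copy per admissible goal), which places $n$ inside the logarithm via the Warren-type region bound. The paper's proof is even more terse---it simply declares $\Gamma = 2\ell n \binom{|B|}{2}$ and plugs into the Theorem~\ref{thm:tighter upper-bound using nn} calculation---whereas you make the mechanism explicit by augmenting $Rep(I)$ with the goal encoding; the bookkeeping and final bound are identical.
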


\section{Training Framework}



Our goal is to learn a heuristic \( h \) that can be queried many times during A* search while satisfying three key requirements: (1)~\textbf{Perfect admissibility}: the heuristic must satisfy \( h(s) \le h^{\!*}(s) \) for every state \( s \in V \); (2)~\textbf{High average value}: among admissible heuristics, those with higher mean values, defined as \( \frac{1}{|S|} \sum_{s \in S} h(s) \), provide stronger guidance for search; and (3)~\textbf{Fast inference}: the size of the model to calculate $h$, denoted as $|h|$, should be as small as possible to minimize per-call latency and memory overhead during search. These objectives can be expressed as the optimization problem

\begin{equation}
  \label{eq:optim}
  \max_{h}\;
    \frac{1}{\lvert S\rvert}
    \sum_{s \in S} h(s)
  \quad\text{s.t.}\quad
    h(s)\le h^{\!*}(s)\;\;(\forall s\in S),
  \qquad
    \lvert h\rvert \;\text{is minimized}.
\end{equation}

\paragraph{Framing the Task as Ordinal Classification.} Under Assumption~\ref{assump:graph-structure} the problem can be treated as a classification task where each heuristic value $0,1,\dots,\ell$ forms a class. The cross‑entropy (CE) loss,

\[
  \mathit{CE}
  \;=\;
  -\frac{1}{N}\sum_{i=1}^{N}\sum_{k=1}^{\ell}
    y_k^{(i)} \,\log p_k^{(i)},
  \qquad
  p_k^{(i)} = \frac{\exp(x_k^{(i)})}{\sum_{j=1}^{\ell}\exp(x_j^{(i)})},
\]

optimises accuracy but ignores the \emph{order} among classes and treats under‑ and over‑estimation equally. Because we prefer underestimation to overestimation, we introduce a new loss function called \emph{Cross-Entropy Admissibility (CEA)}:

\begin{equation}\label{eq:loss}
  CEA
  \;=\;
  -\frac{1}{N}
    \sum_{i=1}^{N}
      \log\!\Biggl(
        \sum_{k=1}^{h^{\!*}_i}
          \Bigl(\tfrac{k}{h^{\!*}_i}\Bigr)^{\beta}
          \,p_k^{(i)}
      \Biggr)
  \;+\;
  \eta\bigl[-\log p_{h^{\!*}_i}^{(i)}\bigr].
\end{equation}

The \emph{first term} reallocates probability mass to all classes $k\!\le\!h^{\!*}_i$; the weight $\bigl(k/h^{\!*}_i\bigr)^{\beta}$ decreases as $k$ moves farther below the true class.  The parameter $\beta\!>\!0$ balances admissibility (smaller~$\beta$) against heuristic strength (larger~$\beta$). The \emph{second term} is a CE penalty, scaled by $\eta$, that sharpens the distribution around the true class.  It discourages the model from assigning high probability to an inadmissible class even when most mass lies on admissible ones. Choosing $\eta$ so that $\eta \ll 1$ maintains the dominance of the first (admissibility) term while still penalizing low probability on the true class.  With this loss, the unique global optimum is achieved when $p_{h^{\!*}_i}^{(i)} = 1$ for every sample, fulfilling both admissibility
and maximal average heuristic.

\paragraph{Delta heuristic.} PDBs can have imbalanced distributions of states and heuristic values. In the 6‑edge Rubik’s‑Cube PDB, more than \(86\%\) of the states fall in classes~7 and~8. Because these classes have large heuristic values, a model that over‑predicts them is likely to violate admissibility on states with lower heuristics. This can be improved using a \emph{delta heuristic} \(h_\Delta\) \citep{sturtevant2017value}. Instead of storing distances in a single PDB, we store a small \emph{base} PDB with a pattern that is a subset of the full PDB and store only the \emph{difference} between these PDBs \(\Delta = h_{\text{large}} - h_{\text{base}}\). At inference time, the final heuristic is reconstructed as \(h_{\text{large}}(s) = h_{\text{base}}(s) + \Delta(s)\). 
Table~\ref{tab:heuristic-dist-all} (Appendix~\ref{sec: exp-details}) shows that subtracting a 4-edge PDB from the 6-edge PDB significantly shifts the class imbalance.

\section{Experiments}

In this section, we aim to answer the following key questions: 
\emph{(1) How effective is the proposed training framework in learning strong admissible heuristics?} 
\emph{(2) How robust is the proposed loss function to hyperparameter choices?} 
\emph{(3) What is the trade-off between model complexity and the strength of the learned heuristic?}
\emph{(4) How does the bound on generalization error behave as the number of training instances increases?}

To evaluate these questions, we focus on the \(3 \times 3\) Rubik’s Cube, where, to our knowledge, no previous machine learning methods have learned an admissible heuristic. We selected four PDBs with distinct characteristics: 8-Corner, \(\Delta(6,4)\)-Edge, 6-Edge, and 7-Edge. All PDBs are sourced from the HOG2 repository\footnotemark{}\footnotetext{\url{https://github.com/nathansttt/hog2/tree/PDB-refactor}}. The summary statistics for the PDBs are presented in Table~\ref{tab:pdb-summary}, and the complete heuristic distributions are provided in Table~\ref{tab:heuristic-dist-all} (Appendix~\ref{sec: exp-details}). For learning heuristic models, we adopt a neural network architecture based on the ResNet model~\citep{he2016deep}. A complete description of the model architecture and the selected hyperparameters is presented in Appendix~\ref{sec: exp-details}.

\paragraph{Training and Sampling Strategy.} Looking at the PDB heuristic distributions in Table~\ref{tab:heuristic-dist-all} (Appendix~\ref{sec: exp-details}), there is a massive class imbalance in all PDBs. If we uniformly sample training batches, the model overfits the most populated classes. To avoid this, we handle imbalance at the \emph{sampling} stage. Mini-batches are constructed by uniformly sampling within each heuristic class, and the number of sampled states from each class is proportional to its size. Our setting also differs from standard supervised learning. In a typical setup, we train on a subset and aim to generalize to a small test set that represents a much larger unseen population. Here, each PDB contains \emph{all states} of a graph, and a ranking function maps every Rubik’s Cube state uniquely to a PDB state. If a model predicts an admissible heuristic for every state in the PDB, we obtain a fully admissible heuristic for Rubik’s Cube. Therefore, the goal is to compress the information in this large dataset into a small model while losing as little information as possible. To this end, we sample training batches from the full dataset for a fixed number of epochs. Many individual states are never seen during training, due to both dataset size and randomness in sampling, but training still reflects the full data distribution.

\paragraph{Post-hoc weight pruning.}  
One of the constraints in Equation~\ref{eq:optim} is the model size $\lvert h\rvert$, since fast heuristic evaluation during search is essential. A common approach for improving inference efficiency is \emph{post-hoc weight pruning} and related compression techniques \citep{han2015deep,micikevicius2017mixed,krishnamoorthi2018quantizing}. In our work, we adopt a simple but effective variant: we train all models in 32-bit precision but perform inference in 16-bit precision, substantially reducing latency during search. We also experimented with 8-bit quantized inference; however, the accuracy loss was significant, so we chose not to consider it further.

\begin{table}[htbp]
\centering
\caption{Summary statistics for the PDBs.}
\label{tab:pdb-summary}
\begin{tabular}{
    @{}
    >{\centering\arraybackslash}p{3cm}  
     S[table-format=2.2]               
     S[table-format=9.0]               
     S[table-format=3.2]               
    @{}
}
\toprule
\textbf{PDB} & {\textbf{Avg. Heuristic}} & {\textbf{Number of States}} & {\textbf{PDB Size (MB)}} \\
\midrule
4-edge       & 6.75 & 3041280  & 1.52 \\
6-edge       & 7.65 & 42577920  & 21.29 \\
8-corner     & 8.76 & 88179840  & 44.09 \\
7-edge       & 8.51 & 510935040 & 255.47 \\
\bottomrule
\end{tabular}
\end{table}

\begin{table}[htbp]
\small 
\setlength{\tabcolsep}{4pt} 
\caption{Comparison between learned neural network heuristics and compressed PDBs.}
\label{tab:main-performance}
\begin{tabular}{@{}c c c c c c@{}}
\toprule
\textbf{Heuristic Type} & \textbf{Pattern} 
  & \textbf{Avg. Heuristic} 
  & \textbf{Overestimation Rate} 
  & \textbf{Model Size (MB)} 
  & \textbf{Compression Rate} \\
\midrule
NN + CEA loss       & \multirow{3}{*}{7-edge}              & 7.45           & $2 \times 10^{-5}$    & 3.75      & 68.12$\times$ \\
NN + CE loss        &                                      & 8.44           & $1.4 \times 10^{-2}$  & 3.75      & 68.12$\times$ \\
Compressed PDB      &                                      & 6.83           & 0                     & 3.65      & 70.00$\times$ \\
\midrule
NN + CEA loss       & \multirow{3}{*}{8-corner}             & 8.76          & $3 \times 10^{-7}$    & 1.89      & 23.32$\times$ \\
NN + CE loss        &                                       & 8.76          & $2 \times 10^{-3}$    & 1.89      & 23.32$\times$ \\
Compressed PDB      &                                       & 6.84          & 0                     & 1.91      & 23.00$\times$ \\
\midrule
NN + CEA loss       & \multirow{3}{*}{6-edge}               & 6.92          & $9 \times 10^{-5}$    & 1.95      & 10.91$\times$ \\
NN + CE loss        &                                       & 7.46          & $9 \times 10^{-2}$    & 1.95      & 10.91$\times$ \\
Compressed PDB      &                                       & 6.68          & 0                     & 1.93      & 11.03$\times$ \\
\midrule
NN + CEA loss       & \multirow{3}{*}{$\Delta$(6,4)-edge}   & 1.31          & $3 \times 10^{-6}$    & 3.20      & 6.65$\times$ \\
NN + CE loss        &                                       & 1.89          & $15 \times 10^{-2}$   & 3.20      & 6.65$\times$ \\
Compressed PDB      &                                       & 1.05          & 0                     & 3.04      & 7.00$\times$ \\
\bottomrule
\end{tabular}
\end{table}

\paragraph{Evaluating Learned Heuristics.} 

We compare the heuristic learned with CEA loss function against two baselines: (1) the learned heuristic using CE loss, and (2) a compressed PDB constructed using the min compression technique. The compression factor was chosen so that the compressed PDB and both NN models occupy the same amount of memory. For the NN models, we used the same hyperparameters and allocated the same number of training epochs for both loss functions. The CEA-specific parameters, $\beta$ and $\eta$, are tuned per model; Appendix~\ref{sec: modifying the loss} details a general tuning procedure applicable to any model.

The most critical property of a learned heuristic for achieving optimality is its overestimation rate, which should ideally be zero. However, assigning zero to all states, though admissible, offers no useful guidance. Our goal, therefore, is to learn a heuristic that is both admissible and highly informative. Our results are summarized in Table~\ref{tab:main-performance}. We achieved an overestimation rate which is nearly zero, indicating a fully admissible heuristic, for the 8-corner and \(\Delta(6,4)\)-Edge PDBs, and only a few thousand overestimated states for the 6-edge and 7-edge PDBs. The CEA loss achieved a fully admissible heuristic in the 8-corner PDB while matching the average heuristic of the original PDB, demonstrating no information loss. Across all PDBs, the overestimation rate for our loss function is nearly \(10^4 \times\) smaller than that obtained with the CE loss. The comparison of the distributions of overestimated states for both loss functions across all PDBs is presented in Figures~\ref{fig:dist-part (6-edges)}--\ref{fig:dist-part (7-edges)} in Appendix~\ref{sec: Overestimation Distribution}. These findings suggest that CE loss is ill-suited for learning admissible heuristics in domains satisfying Assumption~\ref{assump:graph-structure}, where the heuristic learning problem can be formulated as a classification task. We attribute the challenges faced by the CE loss in learning admissible heuristics to two main factors: (1) the dataset’s large size and (2) the sparse representation of each state, which includes only the location and rotation of each cubie. In deep learning classification tasks for large datasets~\citep{sun2017revisiting}, there are usually no constraints on model size, and the representation for each state is richer. 

Compared to a compressed PDB built with the min–compression technique, CEA achieves a significantly higher average heuristic on all PDBs, even though a few thousand overestimated states remain in the 6-edge and 7-edge cases. This suggests that CEA preserves more information than classical compression methods such as min–compression. For the 8-corner PDB, when the learned model is used in search to assess its strength, the number of nodes generated was less than half of that for the size-matched compressed PDB heuristic \citep{futuhi2025parallel}. To evaluate how far we can reduce model size while maintaining admissibility and accuracy, we conduct an experiment on the 8-corner PDB (Appendix~\ref{sec: size_quality}). Remarkably, we achieve a \(51\times\) compression relative to the original PDB while maintaining performance comparable to the results in Table~\ref{tab:main-performance}.


\begin{wrapfigure}{r}{0.40\textwidth}
    \vspace{-2ex}
    \centering
    \includegraphics[width=\linewidth]{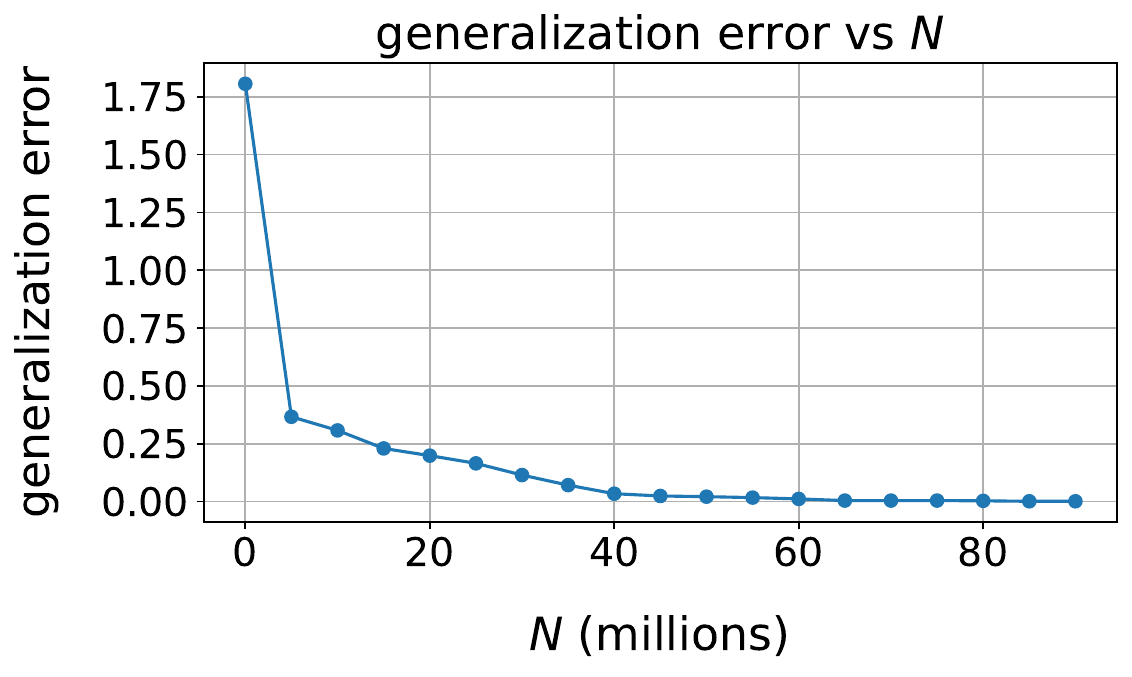}
    \caption{Generalization error vs.\ training size for the 8-corner PDB.}
    \label{fig:gen-error-instance}
    \vspace{-2ex}
\end{wrapfigure}

\paragraph{Empirical analysis of the generalization error.}
Up to this point, our experiments have focused on the final overestimation rate of the learned heuristic. Since our sample-complexity analysis establishes a relationship between the generalization error term and the number of training instances, we conduct an additional experiment to empirically examine this behavior. Specifically, we evaluate how the generalization error associated with the inadmissibility functions in Theorem~\ref{thm:pdim_inad} changes with number of training instances. Figure~\ref{fig:gen-error-instance} presents the results for 8-corner PDB. We observe that the generalization error decreases as the number of training instances $N$ increases, following a sublinear decay pattern that aligns with the theoretical rate of 
$\tilde{\mathcal{O}}\!\left(\hat{H}\sqrt{(n + \log(1/\delta))/N}\right)$. This confirms the expected $1/\sqrt{N}$ decay predicted by our sample-complexity analysis.

\section{Conclusion and Future Work}

This paper lays a foundation for learning admissible heuristics that are both effective in practice and backed by theory. We formulate heuristic learning as an optimization problem with explicit constraints on model size, heuristic strength, and admissibility. We introduce Cross‐Entropy Admissibility (CEA), a loss that improves accuracy while directly penalizing any admissibility error. In experiments on \(3\times3\) Rubik’s‐Cube PDBs, CEA reduced the ratio of inadmissible states to below \(10^{-6}\) and matched or outperformed classical compression techniques
On the theory side, we tightened the sample-complexity bounds by leveraging the exponential reduction offered by PDB abstractions and the graph structure commonly seen in heuristic-search tasks. When the hypothesis class is restricted to neural networks, the bound depends primarily on network depth and width rather than graph size. We provide the first generalization guarantees for goal-dependent heuristics. We also introduce a new bound on expected suboptimality using the maximum inadmissibility encountered at any state on the optimal path. The focus of future work will be on finding the most effective ways adapt both search and learning to work together while providing solution quality guarantees.



\bibliography{references}
\bibliographystyle{iclr2026_conference}

\appendix
\newpage
\begin{center}
	{\fontsize{18pt}{0pt}\selectfont \bf Appendix}
\end{center}

\startcontents            
\setcounter{tocdepth}{2}  
\printcontents{}{1}{\section*{Contents}}

\section{Related Work} \label{sec: related work}

There has been significant effort to learn good heuristics for heuristic search algorithms \citep{chen2023goose, numeroso2022learning,greco2022focal,chen2024return,ferber2020neural,yu2020learning,kim2020learning,us2013learning,heller2022neural,agostinelli2019solving,kirilenko2023transpath,chen2024learning, hao2024learned,bettker2024understanding}. One branch of work uses \emph{Graph Neural Networks} to better learn the heuristic function by exploiting the graph structure of the problem \citep{pandy2022learning,shen2020learning}. In heuristic search problems, the heuristic value of a state can often be approximated from its neighbors’ values, as they differ only by the cost of the connecting edge. Consequently, one can employ \emph{bootstrapping} methods such as TD-learning \citep{sutton1988learning} to learn the heuristic value for each state \citep{ferber2022neural,agostinelli2024q}. \citet{veerapaneni2023learning} take a different approach than learning the global cost-to-go values, since these can be time-consuming to train and difficult to generalize to new problems. Instead, \citet{veerapaneni2023learning} train \emph{local heuristics} that estimate the cost to escape from small regions for the robot, and later combine these local heuristics with a global heuristic to reduce node expansions. Because data collection for the local heuristic can be slow for large graphs, \citet{veerapaneni2024data} propose an efficient data collection approach that leverages the combinatorial nature of tasks. Another branch of work considers directly learning the policy that decides which node to expand next rather than learning heuristic values \citep{orseau2021policy,gomoluch2020learning,feng2022left,choudhury2018data}. For instance, \citet{choudhury2018data} train a policy using \emph{imitation learning} that bases its decisions solely on the portion of the search space uncovered so far.

\citet{DBLP:conf/ijcai/GarrettKL16} propose focusing on the ordering of states induced by the heuristic, rather than learning exact heuristic values. They employ Rank Support Vector Machines (RankSVM) by using the number of incorrectly ordered pairs of states in each problem as the loss function. Later on, \citet{chrestien2024optimize} provide theoretical proof that training heuristic functions to produce correct rankings is sufficient for optimal performance, and explain why learning the exact cost-to-go values via mean-squared error regression can be unnecessarily difficult; their experiments on a wide range of tasks also show that ranking-based loss functions outperform regression-based ones. Subsequently, \citet{chrestien2022differentiable} propose the $L^{*}$ loss, which upper-bounds the number of expanded nodes by ensuring that states on the optimal path have lower heuristic values than those off it. \citet{li2022optimal} and \citet{agostinelli2021obtaining} investigate the admissibility of learned heuristics, guaranteeing full and approximate admissibility, respectively. \citet{DBLP:conf/ijcai/Nunez-MolinaAMF24} model the learned heuristic as a Truncated Gaussian, with an admissible heuristic serving as the lower bound. \citet{Balcan2021-jv} introduce a theoretical framework that provides generalization guarantees for data-driven algorithms by bounding the performance gap between training data and unseen data for algorithms relying on learned parameters. This framework is further applied to establish sample-complexity bounds for general tree-search algorithms \citep{balcan2021sample,balcan2022improved} and for GBFS/A* \citep{sakaue2022sample}. \citet{chengsample} address the sample complexity of branch-and-cut problems by modeling the parameter space with neural networks. They also derive a sample complexity bound for learning instance-dependent parameters.

\section{Pseudocode for the A* Algorithm} \label{sec: pseudocode-a*}

In this section, we provide an overview of the general procedure of $\astar$, as shown in Algorithm~\ref{alg:astar}.

\begin{figure*}[t]
    \centering
    \raisebox{5ex}{\includegraphics[width=0.495\textwidth]{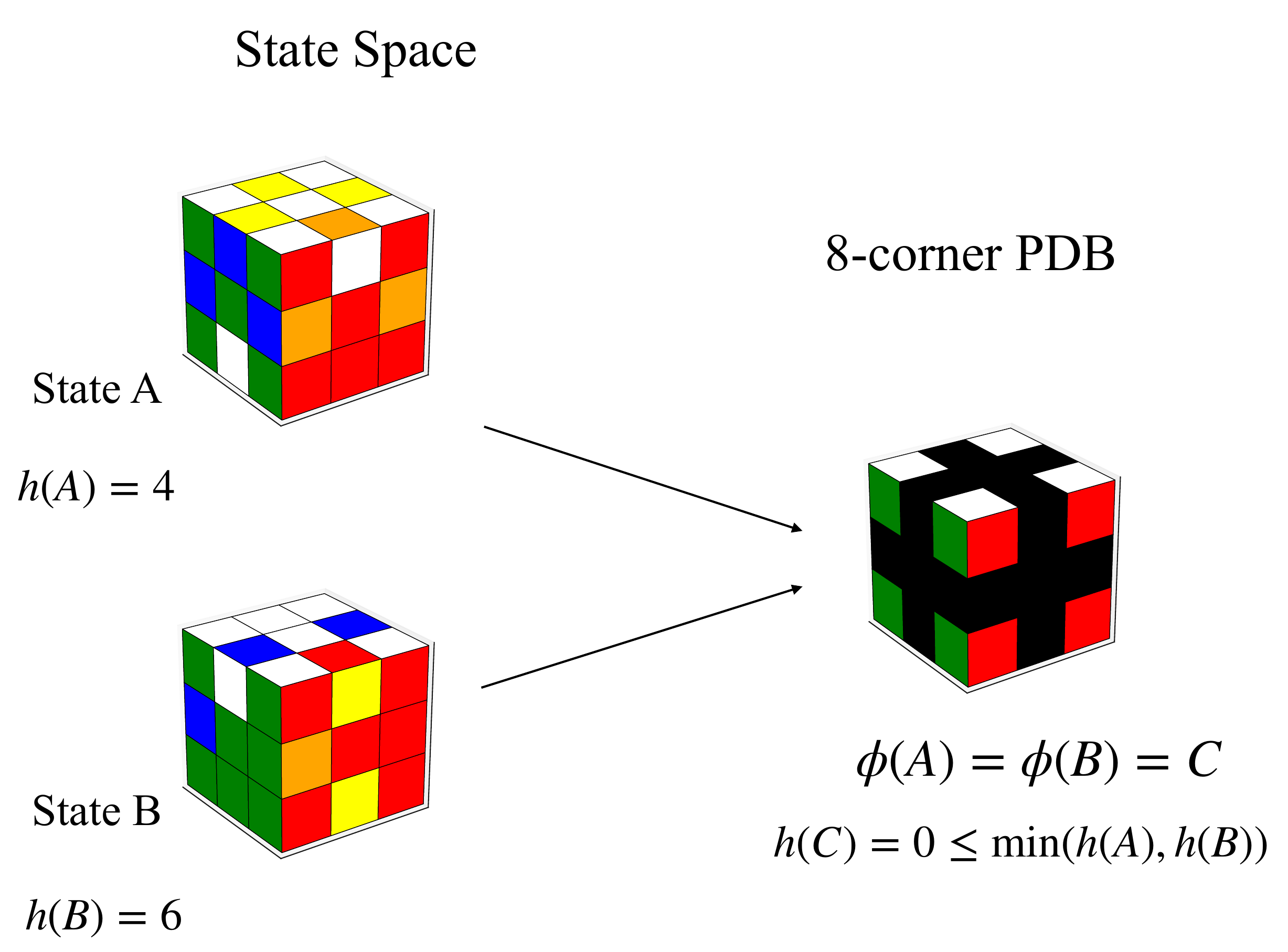}}
    \hfill
    \includegraphics[width=0.495\textwidth]{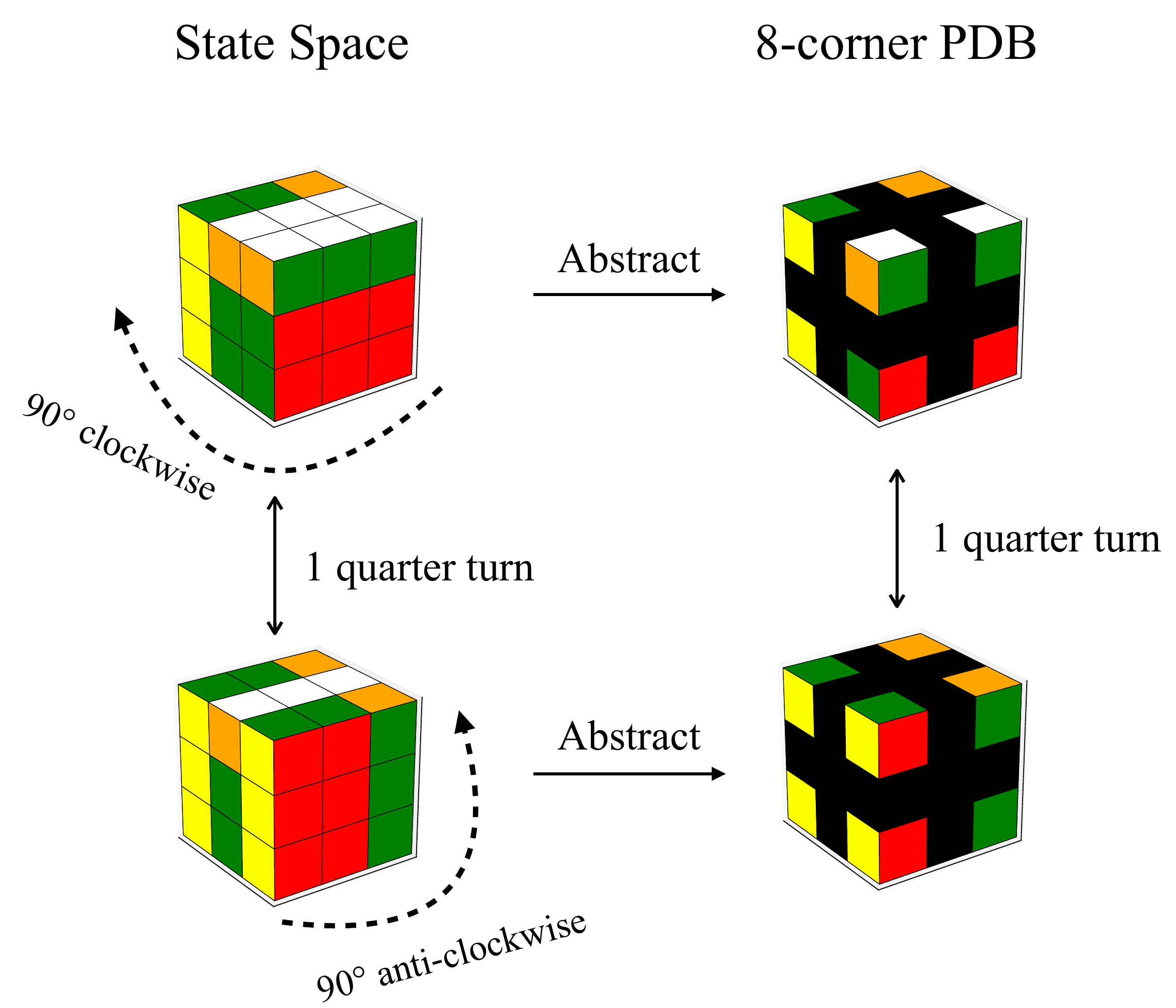}
    \caption{Comparison between the full state-space graph of the \(3 \times 3\) Rubik’s Cube and the abstraction generated using only the eight corner cubies.}
    \label{fig:pdb}
\end{figure*}

\begin{algorithm}[tb]
	\caption{A* algorithm}
	\label{alg:astar}
	\begin{algorithmic}[1]
		\State $\open = \{\start\}$, $\closed = \emptyset$, $\parent(\start)=\texttt{null}$ and $g(\start) = 0$. 
		\While{$\open$ is not empty}
        \State $s \gets \argmin \{g(v)+h(v) | v \in \open  \}$ \label{code: node-selection} \Comment{selection step.}
        \If {$s = goal$} 
		  \State \Return solution path using $\parent(s)$ recursively.
        \EndIf
    \For {each child $\child$ of $s$}
    \State $g_\mathrm{new} \gets g(s) + c(s,\child)$.
    \If {$\child \notin \open \cup \closed$}
    \State $g(\child) \gets g_\mathrm{new}$, and $\open \gets \open \cup \{\child\}$.
    \ElsIf {$\child \in \open$ and $ g_\mathrm{new} < g(\child)$}
    \State $g(\child) \gets g_\mathrm{new}$ and $\parent(\child)\gets s$.
    \ElsIf {$\child \in \closed$ and $g_\mathrm{new} < g(\child)$}
    \State $g(\child) \gets g_\mathrm{new}$ and $\parent(\child)\gets s$.      \label{code: parent-pointer-updating}               
    \State Move $\child$ from $\closed$ to $\open$.           \label{code: node-reopnening}  {\Comment{node reopening.}}\hspace{-20em}
    \EndIf
    \EndFor
		\State Move $s$ from $\open$ to $\closed$.
		\EndWhile
	\end{algorithmic}
\end{algorithm}

\section{Comparing the PDB Abstraction to the Full Graph} \label{sec: pdb specifics}

Figure~\ref{fig:pdb} compares the abstracted state space (8-corner PDB) with the main graph and highlights two basic properties. First, all states in the main graph that map to the same abstract state receive the same heuristic value—namely, the shortest-path distance in the abstracted space to the abstract goal—which is a lower bound on their true distances; thus admissibility is guaranteed. Second, if two states are connected by a move in the main graph, then their images in the abstracted space are either the same state or adjacent states by the \emph{same} move, so connectivity is preserved under the abstraction.

\section{Proofs for Heuristic Functions in $\R^n$}

In this section, we provide proofs for theoretical results that assume $h \in \R^n$.

\begin{proof}[Proof of \Cref{lem:astar_total_order}]
    Suppose we have two heuristic functions $h$ and $h'$ that satisfy the condition of Lemma~\ref{lem:astar_total_order}. Consider two states $v_i$ and $v_j$ such that $A^{*}_{h}$ selects $v_i$ for expansion before $v_j$. By definition of the selection rule, this implies one of the following:
    \begin{enumerate}
        \item $g(v_i) + h(v_i) < g(v_j) + h(v_j),$
        \item $g(v_i) + h(v_i) = g(v_j) + h(v_j)$ and, under the tie-breaking rule specified by Assumption~\ref{assumption-ties}, $A^{*}_{h}$ chooses $v_i$ over $v_j$.
    \end{enumerate}
    In either case, we derive: $g(v_i) + h(v_i) - h(v_j) \;\leq\; g(v_j).$ By Lemma~\ref{lem:astar_total_order}, we have $ h(v_i) - h(v_j) \;=\; h'(v_i) - h'(v_j).$ Substituting this into the above inequality gives
    \[
    g(v_i) + h'(v_i) - h'(v_j) \;\leq\; g(v_j) \;\Longrightarrow\; g(v_i) + h'(v_i) \;\leq\; g(v_j) + h'(v_j).
    \]
    Hence, under $A^{*}_{h'}$, the same selection decision is made, choosing $v_i$ instead of $v_j$. Since this reasoning applies to all selection steps, $A^{*}_{h}$ and $A^{*}_{h'}$ perform identical, implying $u_h(x) \;=\; u_{h'}(x)$
    for every $x \in \Pi$.
\end{proof}

\begin{proof}[Proof of \Cref{thm:astar_upper}]
From Lemma~\ref{lem:astar_total_order}, any two heuristic functions that induce the same pairwise value differences for every pair of vertices in $V$ result in identical performance. Our aim is thus to partition $\mathbb{R}^n$ into regions in which the relative differences $h(v_i) - h(v_j)$ remain consistent for all vertex pairs $(v_i, v_j)$, guaranteeing the same performance for any $x \in \Pi$. Recall that vertices in the open list $\open$ are ranked by their $\fcost$ values, where $\fcost(v) \;=\; g(v) + h(v).$ Under Assumption~\ref{assump:graph-structure} with $c_0 = 1$, consider a scenario in which $g(v_1) = 2$ and $g(v_2) = 4$. Then any heuristic satisfying $h(v_1) - h(v_2) \ge 2$ will favor $v_1$ over $v_2$ in $\open$. However, suppose a different instance or an update in the search reduces $g(v_1)$ from $2$ to $1$. In this case, $h(v_1) - h(v_2) = 3.5$ favors $v_1$, whereas $h(v_1) - h(v_2) = 2.5$ favors $v_2$, even though both heuristic differences exceed $2$. To capture all such distinctions, we consider $2n$ hyperplanes for each ordered pair $(v_i, v_j)$:
\[
h(v_i) - h(v_j) \;=\; \pm 1 c_0,\;\pm 2 c_0,\;\ldots,\;\pm n c_0.
\]
Between any two of these hyperplanes, the ordering between $v_i$ and $v_j$---considering all feasible $g(\cdot)$ values---remains unchanged. Above a boundary hyperplane such as $h(v_i) - h(v_j) = n c_0$, no further hyperplanes are necessary, since $\gcost$ values cannot exceed $n c_0$. Consequently, we have in total $2n \cdot \binom{n}{2}$ hyperplanes. By the \emph{Shi arrangement}~\citep{orlik2013arrangements}, these hyperplanes partition $\mathbb{R}^n$ into $\mathcal{O}\bigl((2n+1)^n\bigr)$ regions. Each region corresponds to a unique assignment of heuristic differences for all vertex pairs, which in turn induces a unique ranking of vertices in $\open$, thus a unique performance outcome on any instance $x \in \Pi$. To shatter $N$ instances, we require at least $2^N$ such unique $(u_{x_1},u_{x_2},...,u_{x_{N}})$ tuples; hence, $\mathcal{O}\bigl((2n+1)^n\bigr) \;\ge\; 2^N$. Solving for the largest $N$ in terms of $n$ yields the upper bound $\Pdim(\Ucal) = \mathcal{O}(n \log n)$.
\end{proof}

\begin{proof}[Proof of \Cref{thm:astar_upper_pdb}]
The dataset $P$ partitions the set $V$ of $n$ vertices into $m$ groups $\Mcal_1, \Mcal_2, \dots, \Mcal_m$, each containing vertices that correspond to the same abstract state. Consequently, the trained heuristic satisfies $h(v_i) \;=\; h(v_j) \quad \forall \; v_i, v_j \in \Mcal$ for any $\Mcal \in P$. To compute $\Pdim(\Ucal)$, one might naively substitute $m$ for $n$ in Theorem~\ref{thm:astar_upper}; however, the $\gcost$ values still come from $G$, so there remain $2n$ hyperplanes for any pair of vertices. Total number of hyperplanes is $2n\cdot\binom{m}{2}$ because there are $m$ groups, and all vertices within the same group have the same $\gcost$ cost. These hyperplanes partition the space $\mathbb{R}^m$ into $\mathcal{O}\bigl((2n + 1)^m\bigr)$ regions. To shatter $N$ instances, we require at least $2^N$ such regions. Setting $\mathcal{O}\bigl((2n+1)^m\bigr) \;\ge\; 2^N$ and solving for $N$ in terms of $n$ yields $\Pdim(\Ucal) = \mathcal{O}\bigl(m\log n\bigr)$. 

To compare the bound achieved by a PDB dataset with the one in Theorem \ref{thm:astar_upper}, we need to examine the effect of replacing the main graph $G$ with the PDB graph $P$. In the worst-case scenario, where the PDB does not abstract away any features (i.e., $\lvert G\rvert = \lvert P\rvert$ or $m = n$), there is no improvement over the $\mathcal{O}(n \log n)$ bound. However, PDBs often yield an exponential reduction in the size of the graph, so that $m = \mathcal{O}(\sqrt[c]{n})$. In this case, the resulting bound improves to $\mathcal{O}(\sqrt[c]{n} \log n)$. For instance, in an exponential state space with branching factor $b$ and depth $d$, where $n = b^d$, the root index $c$ is bounded by $\mathcal{O}(d)$.   
\end{proof}

\section{Expected suboptimality of A* with reopenings} \label{sec: subop-reopenings}

In this section, we provide the proof of Theorem \ref{thm:subop_reopnening}, restated below. Before beginning the proof, we introduce the necessary terminology. We have already defined $\inc_h(p,c)$ for an edge from $p$ to $c$ as the inconsistency of the heuristic $h$ for this edge: $\mathrm{INC}_h(p,c)= \max\bigl\{\,h(p) - h(c) - c(p,c),\,0\bigr\}$. For simplicity, assume that for every $h \in \mathbb{R}^n$, we have $h(\goal) = 0$. Let $g^{*}(n)$ denote the cost of an optimal path from $n_{\mathrm{start}}$ to $n$. With this notation, $g^{*}(n,m)$ is the cost of an optimal path from $n$ to $m$. Throughout the search, the $g$-costs maintained by the algorithm are upper bounds on the true optimal costs. Hence, if $g_{t}(n)$ is the $g$-cost for vertex $n$ after $t$ iterations, we have $g_{t}(n) \;\ge\; g^{*}(n)$. We define the difference between $g_{t}(n)$ and $g^{*}(n)$ as the \emph{$g$-cost error}, denoted $
g^{\delta}_{t}(n) \;=\; g_{t}(n) \;-\; g^{*}(n)$. Because we only update $g_{t}(n)$ when we discover a strictly better path to $n$, both $g_{t}(n)$ and $g^{\delta}_{t}(n)$ are non-increasing for every $n \in V$.
    
\begin{theorem*} \label{thm:subop-reopnening}
Let $x \in \Pi$ be a problem instance, and let $P_{\mathrm{opt}} = v_0, v_1, \dots, v_k$ be its optimal solution path with cost $C^*(x)$. Suppose A* is allowed to reopen nodes. Then the cost $C_h(x)$ of any solution returned by A* satisfies
\[
  C_h(x) \;-\; C^*(x)
  \;\le\; \max_{v \in P_{\mathrm{opt}}}\Bigl[h(v)\;-\;h^{*}(v)\Bigr].
\]
\end{theorem*}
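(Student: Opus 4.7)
The plan is to maintain an invariant along the optimal path throughout A*'s execution and then apply the selection rule at the final iteration. Fix an optimal path $P_{\mathrm{opt}} = v_0, v_1, \ldots, v_k$ with $v_0 = \start$ and $v_k = \goal$. Since every prefix of an optimal path is itself optimal, $g^{*}(v_i) = \sum_{j=0}^{i-1} c(v_j, v_{j+1})$, and A*'s $\gcost$-values are non-increasing over time and always satisfy $g(v) \geq g^{*}(v)$.

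Define $J(t)$ as the largest index $i \in \{0,\dots,k\}$ such that $v_i$ has been placed in $\open$ with $g(v_i) = g^{*}(v_i)$ at some moment $\leq t$. I will prove by induction on $t$ that at every iteration before termination, $v_{J(t)}$ is currently in $\open$ with $g(v_{J(t)}) = g^{*}(v_{J(t)})$. The base case is immediate since $\start \in \open$ initially with $g(\start)=0=g^{*}(\start)$. For the inductive step, consider the node $s$ selected at iteration $t+1$: if $s \neq v_{J(t)}$, then $v_{J(t)}$ remains in $\open$ and its $\gcost$-value cannot decrease below $g^{*}(v_{J(t)})$, so the invariant persists; if $s = v_{J(t)}$ and $J(t) = k$, then $\goal$ is selected with $g = g^{*}$ and A* terminates with $C_h(x) = C^{*}(x)$; if $s = v_{J(t)} < k$, then expanding $v_{J(t)}$ with $g = g^{*}$ generates $v_{J(t)+1}$ with $g_{\mathrm{new}} = g^{*}(v_{J(t)+1})$, which by the three cases of the A* update rule places (or keeps) $v_{J(t)+1}$ in $\open$ with $g = g^{*}$---possibly via a reopening when $v_{J(t)+1} \in \closed$ with $g > g^{*}$. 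The only subtle situation is when $v_{J(t)+1}$ is already in $\closed$ with $g = g^{*}$, but this would require $v_{J(t)+1}$ to have been placed in $\open$ with $g^{*}$ at an earlier time, contradicting the maximality of $J(t)$; thus the new witness $J(t+1) \geq J(t)+1$ satisfies the invariant.

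With the invariant in hand, the conclusion follows quickly. At termination $t^{*}$, $\goal$ is selected from $\open$, so $C_h(x) = g(\goal)$. If $J(t^{*}) = k$, the invariant already gives $g(\goal) = g^{*}(\goal) = C^{*}(x)$ and the bound is trivial. Otherwise, $v_{J(t^{*})}$ is in $\open$ at termination with $g = g^{*}(v_{J(t^{*})})$ and $J(t^{*}) < k$, so the selection rule yields $g(\goal) = f(\goal) \leq f(v_{J(t^{*})}) = g^{*}(v_{J(t^{*})}) + h(v_{J(t^{*})})$; writing $h(v_{J(t^{*})}) = h^{*}(v_{J(t^{*})}) + (h(v_{J(t^{*})}) - h^{*}(v_{J(t^{*})}))$ and using $g^{*}(v_{J(t^{*})}) + h^{*}(v_{J(t^{*})}) = C^{*}(x)$, we obtain $C_h(x) - C^{*}(x) \leq h(v_{J(t^{*})}) - h^{*}(v_{J(t^{*})}) \leq \max_{v \in P_{\mathrm{opt}}}[h(v) - h^{*}(v)]$.

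The main obstacle is the inductive proof of the invariant, specifically the interaction between reopenings and the optimal-path structure. The delicate case is arguing that whenever $v_{J(t)}$ is expanded with $g = g^{*}$, the successor $v_{J(t)+1}$ ends up in $\open$ rather than already sitting silently in $\closed$; choosing $J(t)$ to be the \emph{maximal} index that ever achieved the ``placed in $\open$ with $g^{*}$'' property is the key trick that rules out this circular-looking scenario. Once this bookkeeping is handled, the rest is a one-line application of A*'s selection rule, and it is worth noting that the resulting bound is strictly tighter than the inconsistency-along-the-optimal-path bound of \citet{valenzano2014worst} because a single worst-vertex overestimation replaces a sum of per-edge inconsistencies.
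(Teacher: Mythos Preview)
Your proof is correct and follows essentially the same approach as the paper's: both establish an invariant that at every iteration there is a node $v_i \in P_{\mathrm{opt}}$ on $\open$ with $g(v_i)=g^{*}(v_i)$, then apply the selection rule at termination. Your formulation via the maximal index $J(t)$ is in fact cleaner than the paper's inductive argument, because it explicitly rules out the edge case where $v_{J(t)+1}$ is already in $\closed$ with $g=g^{*}$---a case the paper's three-way case split does not address directly. One small point: you implicitly use $h(\goal)=0$ when writing $f(\goal)=g(\goal)$; the paper states this assumption explicitly, and you should too.
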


\begin{proof}

To prove the above theorem, we first state the following lemma from \citet{hart1968formal}. We omit its proof, which is straightforward and appears in the original work.

\begin{lemma}
    Let $P_{\mathrm{opt}}$ be an optimal solution path to a given problem. 
    At any time prior to the expansion of a goal node by A*, 
    there is a node from $P_{\mathrm{opt}}$ which is in $\open$.
\end{lemma}

Suppose that for a problem instance $x \in \Pi$, the optimal path is $P_{\mathrm{opt}} = v_0, v_1, \dots, v_k$, and that the goal node $v_k$ is selected for expansion at iteration $t$. Let $v_i$ be the node from $P_{\mathrm{opt}}$ in $\open$ at this time. Since we select $v_k$ instead  of $v_i$, we have:

\setcounter{equation}{0}
\begin{align}
    f(v_i) &> f(v_k),\\
    g_{t}(v_i) + h(v_i) &> g_{t}(v_k) + h(v_k) \label{eq:f-cost definition},\\
    g^{*}(v_i) + g^{\delta}_{t}(v_i) + h(v_i) &> g_{t}(v_k) \label{eq:g-t-cost definition},\\
    g^{*}(v_i) + g^{\delta}_{t}(v_i) + h(v_i) + h^{*}(v_i) - h^{*}(v_i) &> C,\\
    C^{*} + g^{\delta}_{t}(v_i) + h(v_i) - h^{*}(v_i) &> C \label{eq:optimal-path-segments},\\
    C - C^{*} &< g^{\delta}_{t}(v_i) + [\,h(v_i) - h^{*}(v_i)\,] \label{eq:subop-general}.
\end{align}

Here, \eqref{eq:f-cost definition} follows from the definition of $f(\cdot)$, \eqref{eq:g-t-cost definition} follows from the definition of $g_t(\cdot)$, and \eqref{eq:optimal-path-segments} uses the fact that the optimal path can be split at any node $n \in P_{\mathrm{opt}}$ into the optimal path from $n_{\mathrm{start}}$ to $n$ and from $n$ to $v_k$. Inequality \eqref{eq:subop-general} thus indicates that the suboptimality can be bounded by the sum of the $g$-cost error for $n_i$ and the difference $h(n_i) - h^{*}(n_i)$. 

\citet{valenzano2014worst} showed that for A* without reopening, 
we have
\begin{align*}
    g^{\delta}_{t}(n_i) &\le \sum_{j=1}^{i-1} \mathrm{INC}_h\bigl(v_j,\,v_{j+1}\bigr),\\
    h(n_i) - h^{*}(n_i) &\le \sum_{j=i}^{k-1} \mathrm{INC}_h\bigl(v_j,\,v_{j+1}\bigr),\\
    g^{\delta}_{t}(n_i) + h(n_i) - h^{*}(n_i) 
      &\le \sum_{j=1}^{k-1} \mathrm{INC}_h\bigl(v_j,\,v_{j+1}\bigr).
\end{align*}

To complete the proof of Theorem \ref{thm:subop_reopnening}, we must show that if A* allows reopening nodes, then $g^{\delta}_{t}(n_i)=0$. In that case:

\begin{align}
    C - C^{*} &< g^{\delta}_{t}(n_i) + [\,h(n_i) - h^{*}(n_i)\,],\\
    C - C^{*} &< [\,h(n_i) - h^{*}(n_i)\,],\\
    C - C^{*} &< \max_{v \in P_{\mathrm{opt}}}\bigl[h(v) - h^{*}(v)\bigr] \label{eq:max-inclusion}.
\end{align}

Inequality \eqref{eq:max-inclusion} follows from the definition of the $\max$ function. The next theorem implies this condition and thus completes the proof of Theorem \ref{thm:subop_reopnening}.

\begin{theorem} \label{thm:optimal-subpath}
Let $P_{opt} = v_0, v_1, \dots, v_k$ be an optimal solution path to a given problem. If at iteration $t$ the goal node $v_k$ is selected for expansion, there will be a node $v_i$ from $P_{opt}$ which is in $\open$ such that $g^{\delta}_{t}(v_i) = 0$.
\end{theorem}

The proof is by induction on the number of iterations (i.e., node expansions), denoted by $t$. If $t=0$ and the goal is selected for expansion, it means $\popt = v_0$ 
and the statement is vacuously true. If $t=1$, we have $\open = \{\text{children of } v_0\}$ and $\closed = \{v_0\}$. Since the goal is selected for expansion, it must be one of $v_0$'s children. Now, if $P_{opt} = v_0, v_1$, the statement is correct since $v_1$ from the optimal path is in $\open$ and $g^{\delta}_{t}(v_1) = 0$. Suppose the goal does not appear immediately after $v_0$ in $\popt$. In this case, $v_1$ is not the goal, but it connects $v_0$ to the rest of $\popt$. Since the edge \( (v_0, v_1) \) is on \( \popt \), we can conclude that \( g^{*}(v_1) = g^{*}(v_0, v_1) = c(v_0, v_1) \), and thus the statement is correct.

We assume that for all iterations from 1 to $n$, if the goal is selected for expansion, there is a node $v_i$ from $\popt$ in $\open$ such that $g^{\delta}_{t}(v_i)=0$. Now consider iteration $n+1$ where the goal $v_k$ is selected for expansion. This shows that the node expanded at iteration $n$ was not the goal. If the expanded node at iteration $n$ is not $v_i$, 
then the statement remains correct for iteration $t+1$ as well, because $v_i$ from the previous iteration is still in $\open$. If the expanded node is $v_i$, it is moved to $\closed$, 
and there are three cases for its child $v_{i+1}$ on $\popt$:

\begin{itemize}
    \item $v_{i+1}$ is already on $\open$. 
    \item $v_{i+1}$ is not explored. In this case, we add $v_{i+1}$ to $\open$.
    \item $v_{i+1}$ is already on $\closed$. In this case, we move $v_{i+1}$ back to $\open$.
\end{itemize}

In all three cases, we update the $\gcost$-cost of $v_{i+1}$ using $v_i$'s $\gcost$-cost. This is because every subpath on $\popt$ between any two nodes $p, c$ 
has the optimal $\gcost$-cost $g^{*}(p,c)$. As a result, $v_{i+1}$ will have the optimal $\gcost$-cost. Having handled all scenarios for iteration $t+1$, the statement is correct by induction.
    
\end{proof}

\begin{proof}[Proof of \Cref{thm:pdim_inad}]

To apply Proposition~\ref{prop:complexity_bound} to $\hat{\Ucal}$, we require $\Psi_h(\cdot) : \Pi \to [0,\hat{H}]$. This is not always possible if the heuristic function can assign $h(v) = \infty$ to some vertex $v$. To avoid this, we assume that $\forall v \in V,\; h(v)\neq \infty \;\;\text{if}\;\; h^{*}(v)\neq \infty$, which means the learned heuristic cannot treat any node from which the goal is reachable as a dead-end. We know that $\hat{H}$ is not greater than the upper bound derived on expected suboptimality without reopening. In \citet{valenzano2014worst}, a worst-case graph and Martelli's subgraphs \citep{martelli1977complexity} show that without reopenings, the bounds on expected suboptimality are tight. Hence, $\hat{H}$ is not too large relative to suboptimality. For this theorem, measuring the pseudo-dimension of $\Hcal$ directly is challenging, while it is simpler to measure it for the \emph{dual} class of $\Hcal$. We introduce Assouad's dual class of $\Hcal$ as follows.

\begin{definition}[\citet{Assouad1983-vb}]\label{def:dual}
    Given a class, $\Hcal\subseteq \R^\Ycal$, of functions $h:\Ycal\to\R$,
    the \textit{dual class} of $\Hcal$ is defined as
    $\Hcal^* = \Set*{h^*_y:\Hcal \to \R}{y\in\Ycal}$ such that $h^*_y(h) = h(y)$ for each $y \in \Ycal$.
\end{definition}

We now introduce a class known as \emph{$(\Fcal, \Bcal, \nbd)$-piecewise decomposable}, which has a piecewise structure as formalized in Definition~\ref{def:piecewise_decomposable}. If the dual class of $\Hcal$ is $(\Fcal, \Bcal, \nbd)$-piecewise decomposable, we can use Proposition \ref{prop:balcan_pdim_bound} to bound the pseudo-dimension of $\Hcal$. 

\begin{definition}[{\citet[Definition 3.2]{Balcan2021-jv}}]\label{def:piecewise_decomposable}
    A class, $\Hcal\subseteq \R^\Ycal$, of functions is
    \textit{$(\Fcal, \Bcal, \nbd)$-piecewise decomposable}
    for a class $\Bcal\subseteq\set*{0,1}^\Ycal$ of boundary functions and a class $\Fcal\subseteq \R^{\Ycal}$ of piece functions if the following condition holds:
    for every $h\in\Hcal$, there exist $\nbd$ boundary functions $\bdf^{(1)},\dots,\bdf^{(\nbd)} \in\Bcal$ and a piece function $\pf_\bb$ for each binary vector $\bb\in\set*{0,1}^\nbd$ such that for all $y \in\Ycal$,
    it holds that $h(y) = \pf_{\bb_y}(y)$ where $\bb_y = (\bdf^{(1)}(y), \dots, \bdf^{(\nbd)}(y)) \in \set*{0,1}^\nbd$.
\end{definition}

\begin{proposition}[{\citet[Theorem 3.3]{Balcan2021-jv}}]\label{prop:balcan_pdim_bound}
  Let $\Ucal\subseteq \R^\Pi$ be a class of functions. 
    If $\Ucal^* \subseteq \R^\Ucal$ is $(\Fcal, \Bcal, \nbd)$-piecewise decomposable with a class $\Bcal\subseteq \set*{0,1}^\Ucal$ of boundary functions and a class $\Fcal \subseteq \R^\Ucal$ of piece functions, the pseudo-dimension of $\Ucal$ is bounded as follows: 
    \[
        \Pdim\prn{\Ucal} = \Ord\prn*{ \prn*{\Pdim(\Fcal^*) + \VCdim(\Bcal^*)} \log \prn*{\Pdim(\Fcal^*) + \VCdim(\Bcal^*)} + \VCdim(\Bcal^*)\log\nbd}.
    \]
\end{proposition}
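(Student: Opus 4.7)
The plan is to bound $\Pdim(\Ucal)$ directly by counting, for any fixed $N$ instances $x_1,\dots,x_N\in\Pi$ and thresholds $z_1,\dots,z_N\in\R$, the number of distinct sign vectors $\bigl(\Ibb\{u(x_i)\ge z_i\}\bigr)_{i\in[N]}$ realized as $u$ ranges over $\Ucal$. If this count cannot reach $2^N$, then $\Pdim(\Ucal)<N$, and setting the resulting inequality $2^N \le (\text{count})$ gives the bound.

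First, I would unpack the piecewise-decomposable hypothesis on $\Ucal^*$: for each $i\in[N]$ the dual element $u^*_{x_i}$ supplies $\nbd$ boundary functions $\bdf^{i,1},\dots,\bdf^{i,\nbd}\in\Bcal$ and piece functions $\{\pf^i_{\bb}\}_{\bb\in\{0,1\}^\nbd}\subseteq\Fcal$ with $u(x_i)=\pf^i_{\bb^i(u)}(u)$ and $\bb^i(u)=(\bdf^{i,1}(u),\dots,\bdf^{i,\nbd}(u))$. Collecting across all $i$ gives at most $N\nbd$ boundary functions in $\Bcal$. I would then partition $\Ucal$ by the joint boundary pattern $(\bdf^{i,k}(u))_{i,k}$. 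As $u$ varies, this tuple is exactly the labeling of the $N\nbd$ fixed boundary functions induced by the dual element $u^\circ\in\Bcal^*$ defined by $u^\circ(\bdf)=\bdf(u)$. Since $\Bcal^*$ is Boolean-valued, Sauer--Shelah bounds the number of achievable labelings by $\Ord\bigl((N\nbd)^{\VCdim(\Bcal^*)}\bigr)$.

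Within any one of these boundary regions the tuple $\bb^i$ is frozen for every $i$, so the active piece $\pf_i \coloneqq \pf^i_{\bb^i}\in\Fcal$ is fixed, and the sign vector collapses to $(\Ibb\{\pf_i(u)\ge z_i\})_i$. This pattern depends on $u$ only through the dual $u^\circ\in\Fcal^*$ evaluated against the $N$ fixed pairs $(\pf_i,z_i)$. Applying Sauer--Shelah to the subgraph class of $\Fcal^*$ (the standard reduction from pseudo-dimension to VC dimension of the ``$\ge z$'' indicator class) gives at most $\Ord\bigl(N^{\Pdim(\Fcal^*)}\bigr)$ patterns per region. Multiplying the region count by the in-region count yields an overall bound $\Ord\bigl((N\nbd)^{\VCdim(\Bcal^*)}\,N^{\Pdim(\Fcal^*)}\bigr)$ on the number of realized sign vectors.

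To conclude, set $D=\Pdim(\Fcal^*)+\VCdim(\Bcal^*)$ and $V=\VCdim(\Bcal^*)$. The shattering requirement $2^N\le N^{D}\,\nbd^{V}$ becomes $N\le D\lg N+V\lg\nbd$ after taking logarithms, which inverts to $N=\Ord(D\lg D+V\lg\nbd)$, precisely the stated bound. The main obstacle will be keeping the duality straight. The crucial subtlety is that ``how many labelings of $N\nbd$ fixed boundary functions arise as $u$ sweeps $\Ucal$'' is controlled by $\VCdim(\Bcal^*)$, not $\VCdim(\Bcal)$, and analogously for the piece step with $\Pdim(\Fcal^*)$ in place of $\Pdim(\Fcal)$. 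This primal/dual swap is routine in data-driven algorithm design but must be applied to the correct ``direction'' of each class at each step for the Sauer--Shelah counts to compose into the product bound above.
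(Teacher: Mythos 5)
This proposition is imported verbatim from \citet[Theorem 3.3]{Balcan2021-jv} and the paper gives no proof of it, so the only meaningful comparison is to the source. Your reconstruction is correct and is essentially the standard argument there: count sign patterns by first bounding the number of joint boundary labelings via Sauer--Shelah on $\Bcal^*$ (giving $\Ord((N\nbd)^{\VCdim(\Bcal^*)})$ regions), then bounding the per-region patterns via the thresholded dual piece class (giving $\Ord(N^{\Pdim(\Fcal^*)})$), and inverting $2^N \le (N\nbd)^{\VCdim(\Bcal^*)} N^{\Pdim(\Fcal^*)}$ — including the correct handling of the primal/dual swap.
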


Now, we begin our proof by defining the function classes $\Bcal$ and $\Fcal$. We consider that $\Bcal = \Set*{b_{h,z_0}:\hat\Ucal \rightarrow \{0,1\}}{h \in \R^n,z_0 \in \R} \subseteq \set*{0, 1}^{\hat\Ucal}$ and $\Fcal = \Set*{f_{h}:\hat\Ucal \rightarrow \R}{h \in \R^n} \subseteq \R^{\hat\Ucal}$ to be classes of boundary and piece functions, respectively. We first prove that $\hat\Ucal^*$ is $(\Fcal, \Bcal, \Ord(n))$-piecewise decomposable. Fix any $\Psi^*_x \in \hat\Ucal^*$. This choice determines a unique instance $x \in \Pi$ and its optimal solution $\popt(x) \subseteq E$. Let $\nbd = |V| = \Ord(n)$. We define $\nbd$ boundary functions of the form $\bdf^{(v)}(h) = \Ibb\bigl(h_v - h^{*}_v > 0\bigr)$ for each vertex $v \in V$. These boundary functions partition $\mathbb{R}^n$ into regions such that, in each region, $\Psi^*_x(h)$ is expressible as a function in $h$, which belongs to $\Fcal$. Specifically, for a given binary vector $\bb_{h} = \bigl(\bdf^{(v)}(h)\bigr)_{v \in V} \in \{0,1\}^{\nbd}$, define $P_{h}(x) \subseteq \popt(x)$ by $P_{h}(x) = \bigl\{\,v \in \popt(x) : \bdf^{(v)}(h) = 1\bigr\}$. Hence $v \in P_{h}(x)$ whenever $h_v - h^{*}_v > 0$. From the definition of $\Psi_h(x)$, we obtain $\Psi^*_x(h) = \Psi_h(x) = \max_{v \in P_{h}(x)} \bigl(h_v - h^{*}_v\bigr)$. This quantity is piecewise linear in $h$, so we can pick $\pf_{\bb_{h}} \in \Fcal$ such that $\Psi^*_x(h) = \pf_{\bb_{h}}(h)$ in that region. Since this holds for every $\bb_h$ in $\{0,1\}^{\nbd}$, we conclude $\Psi^*_x(h) = \pf_{\bb_{h}}(h)$ for all $h \in \mathbb{R}^n$. Therefore, $\hat\Ucal^*$ is $(\Fcal, \Bcal, \Ord(n))$-piecewise decomposable. Since $\Bcal$ is the set of single-coordinate threshold functions in $\mathbb{R}^n$, we have $\VCdim(\Bcal^*) = \VCdim(\Bcal) = n$. For $\Fcal$, which is a family of $\max$ functions on $\mathbb{R}^n$, each function can shatter at most $n$ instances. By \citep[Lemma 3.10]{Balcan2021-jv}, we know $\Pdim(\Fcal^*) \le \Pdim(\Fcal) = n$. Consequently, from \Cref{prop:balcan_pdim_bound}, it follows that $\Pdim(\hat\Ucal) = \Ord(n \,\log n)$. 

Although we introduced a framework with fewer boundary functions, the bound here is not sharper than that of \citep{sakaue2022sample}. The key limitation is that, even if $\Fcal$ contained only constant functions, the dimension would still be constrained by $\VCdim(\Bcal)$.
\end{proof}

\section{Proof of theorems using neural networks}

In this section, we provide proofs for the theorems that use neural networks to represent the heuristic function. 

\begin{proof}[Proof of \Cref{thm:general upper-bound using nn}]
In many planning domains such as the Sliding Tile Puzzle (STP), TopSpin, and Rubik's Cube---the primary applications considered in this work---all the assumptions from Theorem~\ref{thm:general upper-bound using nn} hold. By these assumptions, there are \(\left\lceil \frac{D}{c_0} \right\rceil\) possible heuristic values. This allows us to define \(A_h\) with the heuristic $h \;=\;N^{\ReLU,\,\mathrm{softmax}}:\mathbb{R}^{w_0} \times \mathbb{R}^{W} \;\to\; \mathbb{R}^{\ell},$ where \(\ell = \left\lceil \frac{D}{c_0} \right\rceil\). Since each vertex can only take one of \(\ell\) heuristic values, the total number of possible heuristic functions is \(\ell^n\). Thus, there are at most \(\ell^n\) distinct performance measures. Setting \(\mathcal{O}(\ell^n)\;\ge\;2^N\) and solving for \(N\) in terms of \(n\) gives \(\Pdim(\Ucal) = \mathcal{O}(n)\). Moreover, if the neural network is trained on the PDB dataset \(P\), which reduces the induced graph size to \(m\), a similar argument shows that \(\Pdim(\Ucal) = \mathcal{O}(m)\). Note that both bounds are derived under the assumption that the model is sufficiently expressive to realize all possible \(\ell^n\) or \(\ell^m\) heuristic combinations, respectively.
\end{proof}

For the next proofs, we first introduce the necessary auxiliary lemmas and definitions.

\begin{lemma}[Lemma A.1 in~\citep{chengsample}]\label{lem:useful-inequalities}
    For any $x_1,\dots,x_n,\lambda_1,\dots,\lambda_n > 0$, the following inequalities hold:
    \begin{align}
        \log x_1 &\leq \frac{x_1}{\lambda_1} + \log \left( \frac{\lambda_1}{e} \right),\label{eq:log-inequality}
    \end{align}
\end{lemma}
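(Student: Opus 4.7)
The plan is to reduce the claimed inequality to the elementary bound $\log y \le y-1$ for all $y>0$, which follows from concavity of $\log$ (the tangent line at $y=1$ lies above the graph) or equivalently from convexity of $\exp$ giving $e^{y-1}\ge y$.

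First, I would set $y = x_1/\lambda_1$, which is positive since both $x_1$ and $\lambda_1$ are positive. Applying the elementary inequality yields
\[
  \log\!\prn*{\tfrac{x_1}{\lambda_1}} \;\le\; \tfrac{x_1}{\lambda_1} - 1.
\]
Then I would add $\log \lambda_1$ to both sides and use $\log(a/b) = \log a - \log b$ together with $\log e = 1$ to rewrite the right-hand side:
\[
  \log x_1 \;=\; \log\lambda_1 + \log\!\prn*{\tfrac{x_1}{\lambda_1}} \;\le\; \log \lambda_1 + \tfrac{x_1}{\lambda_1} - 1 \;=\; \tfrac{x_1}{\lambda_1} + \log\!\prn*{\tfrac{\lambda_1}{e}},
\]
which is exactly the stated bound.

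There is no real obstacle here; the only thing to verify is the base inequality $\log y \le y - 1$, which I would justify in one line either by noting that $f(y) = y - 1 - \log y$ has $f'(y) = 1 - 1/y$, so $f$ is decreasing on $(0,1]$ and increasing on $[1,\infty)$ with minimum value $f(1) = 0$, or by invoking the standard tangent-line characterization of concavity for $\log$. Since the other $x_i, \lambda_i$ play no role in the single inequality shown, the proof is complete after this substitution.
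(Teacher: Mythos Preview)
Your proof is correct. The paper does not actually prove this lemma; it simply quotes it as Lemma~A.1 of \citet{chengsample} and uses it later, so there is no paper proof to compare against. Your reduction to the standard inequality $\log y \le y-1$ via the substitution $y=x_1/\lambda_1$ is the natural one-line argument and would be an acceptable self-contained justification.
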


\begin{lemma}[Theorem A.2 in~\citep{chengsample}]\label{lem:poly-decomp}
    Let $\P \subseteq \R^\ell$ and let $f_1, \ldots, f_t : \R^\ell \to \R$ with $t \geq \ell$ be functions that are polynomials of degree $m$ when restricted to $\P$. Then 
    \begin{align*}
    |\{\left(\sgn(f_1(\pp)), \ldots, \sgn(f_t(\pp))\right): \pp \in \P\}| &= 1, &m = 0, \\
    |\{\left(\sgn(f_1(\pp)), \ldots, \sgn(f_t(\pp))\right): \pp \in \P\}| &\leq \left(\frac{et}{\ell+1}\right)^{\ell+1}, &m = 1,\\
    |\{\left(\sgn(f_1(\pp)), \ldots, \sgn(f_t(\pp))\right): \pp \in \P\}| &\leq 2\left(\frac{2etm}{\ell}\right)^\ell, &m\geq 2.
    \end{align*}
\end{lemma}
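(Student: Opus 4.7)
The plan is to prove the lemma by case analysis on the polynomial degree $m$, relying on classical sign-pattern counting bounds from combinatorial real algebraic geometry.

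The $m = 0$ case is immediate: each $f_i$ restricted to $\P$ is a constant function, so the tuple $(\sgn(f_1(\pp)), \ldots, \sgn(f_t(\pp)))$ does not depend on the choice of $\pp \in \P$, yielding exactly one realizable sign pattern.

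For $m = 1$, each $f_i$ is affine on $\R^\ell$, and the task reduces to counting sign vectors in $\{-1, 0, +1\}^t$ realized at points of $\R^\ell$ (restricting to $\P$ only decreases the count). I would argue via the face structure of the hyperplane arrangement $\{f_i = 0\}_{i=1}^{t}$: each realizable sign vector corresponds to a face of this arrangement, and a standard induction on $\ell$ shows that the number of faces of all dimensions is at most $\sum_{i=0}^{\ell+1}\binom{t}{i}$. Applying the inequality $\binom{t}{i} \le (et/i)^i$ termwise and bounding the sum by its largest term then yields the stated $\bigl(et/(\ell+1)\bigr)^{\ell+1}$ form, using $t \ge \ell$.

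For $m \ge 2$, I would invoke Warren's theorem, which bounds the number of connected components of $\R^\ell \setminus \{F = 0\}$ for a polynomial $F$ of degree $D$ by $\Ord(D^\ell)$. Applied to $F = \prod_{i=1}^{t} f_i$, of degree $tm$, this bounds the number of strict sign patterns in $\{-1, +1\}^t$ realized on $\R^\ell \setminus \{F = 0\}$. To extend to sign patterns that include zeros, I would use a perturbation argument: replace each $f_i$ with $f_i + \varepsilon_i$ for generic small $\varepsilon_i$, which removes zeros and produces a family of polynomials of the same degree whose strict sign patterns include (under taking limits) all mixed sign patterns of the original $f_i$. A standard bookkeeping of the components, together with the inequality $\binom{tm}{\ell} \le (etm/\ell)^\ell$, yields the stated $2(2etm/\ell)^\ell$ bound.

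The main obstacle is the $m \ge 2$ case, whose quantitative form depends on Warren's/Milnor–Thom machinery from real algebraic geometry. A self-contained proof would need Bezout-type intersection bounds and a Morse-theoretic argument controlling connected components; since the lemma is quoted from prior work, the cleanest route treats Warren's theorem as a black box and focuses the effort on verifying that the constants in $2(2etm/\ell)^\ell$ emerge correctly from the perturbation and the binomial estimate.
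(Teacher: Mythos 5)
The paper does not actually prove this statement: it is imported verbatim as Theorem~A.2 of the cited work of Cheng et al., and the appendix only points to external sources for the neighbouring lemmas. So there is no in-paper argument to compare against; what you have written is a sketch of the standard derivation (face/cell counts of hyperplane arrangements for $m\le 1$, Warren's theorem plus a perturbation for $m\ge 2$), which is indeed how the cited source and the classical references (Warren 1968; Anthony--Bartlett, Theorem~8.3) establish it. At the level of strategy your proposal is the right one.

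Two concrete points in your sketch do not survive scrutiny as written. First, in the $m=1$ case you read $\sgn$ as ternary and bound the number of realizable vectors in $\{-1,0,+1\}^t$ by the number of faces of the arrangement, claiming that count is at most $\sum_{i=0}^{\ell+1}\binom{t}{i}$. That intermediate inequality is false: for $t$ hyperplanes in general position in $\R^\ell$ the total face count is $\sum_{i=0}^{\ell}2^i\binom{t}{i}$, which for $\ell=2$, $t=4$ equals $33$ while $\sum_{i=0}^{3}\binom{4}{i}=15$. Worse, at $t=\ell=2$ there are $9$ realizable ternary sign vectors but $\bigl(et/(\ell+1)\bigr)^{\ell+1}=(2e/3)^3\approx 5.95$, so under the ternary reading the stated bound itself fails in the allowed range $t\ge\ell$. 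The lemma is only correct (and is only used in this paper) for \emph{binary} sign patterns, i.e.\ counting cells or connected components, where the classical bound $\sum_{i=0}^{\ell}\binom{t}{i}$ applies and the stated constants come out. Second, in the $m\ge 2$ case the perturbation needs more care than ``generic small $\varepsilon_i$'': two distinct mixed sign patterns of the original family can collapse to the same strict pattern of a perturbed family. The standard fix is to fix one witness point per realized pattern, take a single $\varepsilon$ strictly below the minimum nonzero $|f_i|$ over all witnesses, and show the map into the strict sign patterns of $\{f_i-\varepsilon\}$ (same degree, same $t$) is injective, after which Warren's bound $2(2etm/\ell)^\ell$ applies directly. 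Both issues are repairable and you flag that the constants need checking against the source, but as stated the $m=1$ step is not a proof of the claimed inequality.
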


\begin{lemma}[Lemma A.3 in~\citep{chengsample}]\label{lem:piecewise-poly-pdim}
    Let $h: \I \times \P \to \R$ define a parameterized function class with $\P \subseteq \R^\ell$, and let $\Hcal$ be the corresponding hypothesis class. Let $m\in \mathbb{N}$ and $R:\mathbb{N} \to \mathbb{N}$ be a function with the following property: for any $t \in \N$ and $I_1, \ldots, I_t \in \I$, there exist $R(t)$ subsets $\P_1, \ldots, \P_{R(t)}$ of $\P$ such that $\P = \cup_{i=1}^{R(t)} \P_i$ and, for all $i \in [R(t)]$ and $j \in [t]$, $h(I_j, \pp)$ restricted to $\P_i$ is a polynomial function of degree at most $m$ depending on at most $\ell' \leq \ell$ of the coordinates. In other words, the map $$\pp \mapsto (h(I_1, \pp), \ldots, h(I_t, \pp))$$ is a piecewise polynomial map from $\P$ to $\R^t$ with at most $R(t)$ pieces. Then, $$\Pdim(\Hcal) \leq \sup\left\{t \geq 1: 2^{t-1} \leq R(t)\left(\frac{2et(m+1)}{\ell'}\right)^{\ell'}\right\}$$
\end{lemma}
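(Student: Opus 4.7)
The plan is to mirror the proof of Theorem~\ref{thm:tighter upper-bound using nn} while carefully accounting for the extra degree of freedom introduced by allowing the goal to vary across instances. Structurally, I would still instantiate Lemma~\ref{lem:piecewise-poly-pdim} with parameter space $\mathbb{R}^W$ (the network weights) and with polynomial-piece degree $m = L+1$ inherited from the composition of $L$ ReLU layers and one softmax head; the only change from the fixed-goal proof is that the per-instance representation must now encode the goal, and this will contribute a factor of $n$ (resp.\ $m$) inside the logarithm rather than multiplicatively outside it.

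First, I would redefine $\mathrm{Rep}(I)$ so that for each instance $I \in \Ical$ it stores (i) the start $\start$, (ii) the instance-specific goal, and (iii) the at-most $|B|$ states that arise along the optimal $h^{*}$-guided search for $I$. The network is then evaluated on pairs $(v,\goal)$ with $v \in B$ and $\goal$ the instance's goal. Across $t$ training instances the distinct goals encountered lie in $V$, so the \emph{set of evaluation points} the network must be tracked on has size at most $t\cdot|B|\cdot n$ in the general setting and $t\cdot|B'|\cdot m$ when training on $P$, since abstract goals live in the PDB's $m$-vertex induced graph.

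Second, following the ReLU partitioning argument used in Theorem~\ref{thm:tighter upper-bound using nn}, I would partition $\mathbb{R}^W$ by the sign patterns of the pre-activations at every hidden neuron, for every evaluation point. Proceeding layer-by-layer and applying Lemma~\ref{lem:poly-decomp} at each of the $L$ layers (degrees growing as $1,2,\ldots,L$) on at most $U$ neurons over all $t\cdot\ell\cdot|B|\cdot n$ evaluation points, the total number of regions is bounded by
\[
R(t) \;=\; \Ord\!\Bigl(\bigl(t\,\ell\,|B|\,n\,(L+1)\bigr)^{LW}\Bigr),
\]
with $n$ replaced by $m$ and $|B|$ by $|B'|$ in the PDB case. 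Within each region, each output coordinate of the network is a polynomial in $\w$ of degree at most $L+1$, and by the Shi-arrangement argument from Theorem~\ref{thm:astar_upper} the A* performance $u_h$ is determined on each region by a bounded collection of polynomial sign conditions in $\w$ over the $\fcost$-ordering (using the $n^2$ vertex--$g$-cost pairs from the proof of Theorem~\ref{thm:astar_upper}, or the refined count for PDBs from Theorem~\ref{thm:astar_upper_pdb}).

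Finally, I would invoke Lemma~\ref{lem:piecewise-poly-pdim} with $\ell' = W$, $m=L+1$, and the bound on $R(t)$ above, reducing the problem to solving
\[
2^{t-1} \;\le\; R(t)\,\Bigl(\tfrac{2et(L+2)}{W}\Bigr)^{W}.
\]
Taking logarithms and isolating $t$ yields the announced bound $\Ord\bigl(LW\log(U+\ell) + W\log(\ell\,|B|\,n\,(L+1))\bigr)$, and the PDB variant follows by the same derivation with $(\ell',|B'|,m)$ substituted for $(\ell,|B|,n)$. The main obstacle I anticipate is the bookkeeping around the goal: one must verify that treating the goal as part of the \emph{input features} (and not as part of the parameter vector) keeps its contribution inside a logarithm, and that goals appearing in distinct training instances do not double-count ReLU hyperplanes in $\mathbb{R}^W$. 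Ensuring this tight accounting is what separates the $\log n$ overhead here from a potentially polynomial blow-up.
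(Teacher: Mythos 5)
Your proposal does not address the statement you were asked to prove. The statement is Lemma~\ref{lem:piecewise-poly-pdim}, an abstract combinatorial bound on $\Pdim(\Hcal)$ for any parameterized class whose evaluation map $\pp \mapsto (h(I_1,\pp),\ldots,h(I_t,\pp))$ is piecewise polynomial with at most $R(t)$ pieces. What you have written instead is a proof sketch of Theorem~\ref{thm:instance-dependent-upper-bound-nn} (the goal-dependent neural-network bound): you discuss $\mathrm{Rep}(I)$, ReLU region counting, and the Shi arrangement, and you arrive at $\Ord\bigl(LW\log(U+\ell)+W\log(\ell\,|B|\,n\,(L+1))\bigr)$, which is that theorem's conclusion, not the lemma's. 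Worse, your argument explicitly \emph{invokes} Lemma~\ref{lem:piecewise-poly-pdim} as a black box (``Finally, I would invoke Lemma~\ref{lem:piecewise-poly-pdim} with $\ell'=W$, $m=L+1$\ldots''), so read as a proof of that lemma it is circular.

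For the record, the paper does not prove this lemma either; it is imported verbatim from \citet{chengsample} (Lemma A.3 there), with the proof deferred to Section A of that work. A self-contained argument would run as follows: if $\Hcal$ shatters $I_1,\dots,I_t$ with witnesses $z_1,\dots,z_t$, then the sign vector of $\bigl(h(I_1,\pp)-z_1,\dots,h(I_t,\pp)-z_t\bigr)$ must realize all $2^t$ patterns as $\pp$ ranges over $\P$; on any single piece $\P_i$ each $h(I_j,\cdot)-z_j$ is a polynomial of degree at most $m$ in at most $\ell'$ coordinates, so Lemma~\ref{lem:poly-decomp} bounds the number of sign patterns realized on $\P_i$ by roughly $2\bigl(2et(m+1)/\ell'\bigr)^{\ell'}$ (the $m+1$ uniformly covering the low-degree cases); summing over the $R(t)$ pieces and requiring the total to reach $2^t$ forces $2^{t-1}\le R(t)\bigl(2et(m+1)/\ell'\bigr)^{\ell'}$, which is exactly the stated supremum condition. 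None of this counting argument appears in your proposal, so it cannot be credited as a proof of the lemma.
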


\begin{lemma}[Lemma A.4 in~\citep{chengsample}]\label{lem:ReLU-NN-regions}
    Let $N^{\ReLU}:\R^d \times \R^W \to \R^\ell$ be a neural network function with $\ReLU$ activation and architecture $ \bm{w} = [d, w_1, \ldots, w_L, \ell]$ (Definition~\ref{def:DNN}). Then for every natural number $t>LW$, and any $\x^1,\ldots, \x^t \in \R^d$, there exists subsets $\W_1, \ldots, \W_Q$ of $\R^W$ with 
    $Q \leq 2^L\left(\frac{2et\sum_{i=1}^L(iw_i)}{LW}\right)^{LW}$  whose union is all of $\R^W$, such that $N(\x^j, \w)$ restricted to $\w \in \W_i$ is a polynomial function of degree at most $L+1$ for all $(i,j)\in [Q] \times [t]$.
\end{lemma}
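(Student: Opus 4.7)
The plan is to adapt the proof strategy underlying Theorem~\ref{thm:tighter upper-bound using nn} to the goal-conditioned regime, where each instance $I \in \Pi$ specifies its own $\goal_I$ and the heuristic $h$ is realized by a single neural network that takes the goal as part of its input. The core machinery carries over unchanged: Lemma~\ref{lem:ReLU-NN-regions} partitions the weight space $\R^{W}$ into regions on which the ReLU network is polynomial in $\w$, and Lemma~\ref{lem:piecewise-poly-pdim} together with Lemma~\ref{lem:poly-decomp} converts that structure into a pseudo-dimension bound. The only new ingredient is an extra factor of $n$ inside the region count that reflects the freedom to vary $\goal$.

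First I would redefine the representation $Rep(I) = \x \in \R^{w_0}$ so that it jointly encodes the start and the goal of instance $I$. Under this parameterization, $v \mapsto N^{\sigma,\sigma'}((v,\goal_I),\w)$ is the goal-conditioned heuristic used by $\astar$ on instance $I$. As in the fixed-goal theorem, for each instance I would identify a set $B$ of relevant states by tracing $A^{*}_{h^{*}}$ from $\start_I$ to $\goal_I$, but now each state in $B$ must be paired with that instance's specific goal before being fed to the network.

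Next I would bound the number of distinct network inputs that can arise in the shattering argument for $T$ instances. In the fixed-goal case this total was $T|B|$; in the goal-conditioned case each instance may bring its own goal from a universe of size $n$, so in the worst case the adversary in the pseudo-dimension definition can force the evaluation points to form a set of size up to $T|B|n$. Applying Lemma~\ref{lem:ReLU-NN-regions} with $t = T|B|n$ produces a partition of $\R^{W}$ into $Q \leq 2^{L}\bigl(\tfrac{2e\,T|B|n\sum_{i=1}^{L} i w_i}{LW}\bigr)^{LW}$ pieces on each of which every network output is a polynomial in $\w$ of degree at most $L+1$.

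Within each such piece, $\astar$'s utility on an instance is determined by the sign pattern of $O(T\ell|B|)$ polynomial inequalities of degree at most $L+2$: $\Theta(\ell)$ pairwise logit comparisons per state implementing the softmax argmax that assigns a class (heuristic value), followed by the $\fcost$-value comparisons driving OPEN-list selection. Lemma~\ref{lem:poly-decomp} caps the number of sign patterns per piece by $\bigl(\tfrac{2eT\ell|B|(L+2)}{W}\bigr)^{W}$, so the total count of distinct utility vectors $(u_h(I_1),\ldots,u_h(I_T))$ over $\R^{W}$ is at most $Q \cdot \bigl(\tfrac{2eT\ell|B|(L+2)}{W}\bigr)^{W}$. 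Plugging this into Lemma~\ref{lem:piecewise-poly-pdim}, i.e., demanding that $2^{T-1}$ not exceed this bound and solving for the largest admissible $T$, yields the first claim $\mathcal{O}(LW\log(U+\ell) + W\log(\ell|B|n(L+1)))$; the PDB refinement follows by replacing $n$ with the abstract-graph size $m$ and $(|B|,\ell)$ with $(|B'|,\ell')$. The main obstacle is justifying the choice $t = T|B|n$: one must argue carefully that, even as the adversary varies goals arbitrarily across the $T$ instances, the set of distinct network inputs realizable during $\astar$ is bounded by $T|B|n$, and that this extra $n$ remains inside the logarithm rather than multiplying $W$ or $LW$ outside it.
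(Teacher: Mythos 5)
Your proposal does not prove the statement it was asked to prove. The target is Lemma~\ref{lem:ReLU-NN-regions} itself: the claim that for any $t$ fixed inputs $\x^1,\dots,\x^t$ the parameter space $\R^W$ can be covered by at most $Q \leq 2^L\bigl(\tfrac{2et\sum_{i=1}^L(iw_i)}{LW}\bigr)^{LW}$ regions on each of which the ReLU network is a polynomial of degree at most $L+1$ in the weights $\w$. What you have written instead is a proof sketch of Theorem~\ref{thm:instance-dependent-upper-bound-nn} (the goal-dependent pseudo-dimension bound), and in the course of that sketch you explicitly \emph{invoke} Lemma~\ref{lem:ReLU-NN-regions} as a black box (``Applying Lemma~\ref{lem:ReLU-NN-regions} with $t = T|B|n$ produces a partition\dots''). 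As a proof of the lemma this is circular; as a response to the task it addresses the wrong statement. Nothing in your argument engages with why such a partition of $\R^W$ exists or where the bound on $Q$ comes from.

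For the record, the paper does not reprove this lemma either --- it cites Section~2 of \citet{bartlett1998almost} and Appendix~A of \citet{chengsample}. The actual argument is an induction over layers: within a candidate region, the pre-activation of each neuron in layer $i$ is a polynomial of degree at most $i$ in $\w$ for every fixed input $\x^j$; the sign pattern of these $t\,w_i$ polynomials (over at most $W$ relevant parameters) determines which ReLUs are active, and the Warren/Goldberg--Jerrum counting bound (the $m\geq 2$ case of Lemma~\ref{lem:poly-decomp}) limits the number of such sign patterns, hence the number of refinements needed at layer $i$, to $2\bigl(\tfrac{2e\,t\,i\,w_i}{W}\bigr)^{W}$. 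Multiplying these refinement counts over the $L$ hidden layers and applying the AM--GM inequality to combine the per-layer factors yields the stated $2^L\bigl(\tfrac{2et\sum_{i=1}^L(iw_i)}{LW}\bigr)^{LW}$, and on each final region every activation pattern is fixed, so the output is a composition of $L+1$ affine maps in $\w$ and therefore a polynomial of degree at most $L+1$. If you want to supply a proof of this lemma, that is the argument you need to reconstruct.
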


\begin{proof}
    The proof of \Cref{lem:ReLU-NN-regions} is provided in Section 2 of \citep{bartlett1998almost}, and the proof of \Cref{lem:piecewise-poly-pdim} is presented in Section A of \citep{chengsample}. 
\end{proof}

\begin{proof}[Proof of \Cref{thm:tighter upper-bound using nn}]
Our goal is to apply \Cref{lem:piecewise-poly-pdim} to heuristic functions parameterized by \(\mathcal{P} = \mathbb{R}^W\). Specifically, we must show that for any \(t \in \mathbb{N}\) and instances \(I_1, \dots, I_t \in \mathcal{I}\), there exist \(R(t)\) subsets \(\mathcal{P}_1, \dots, \mathcal{P}_{R(t)}\) of 
\(\mathcal{P}\) such that $\P = \cup_{i=1}^{R(t)} \P_i$ and, for all $i \in [R(t)]$ and $j \in [t]$, $u(I_j, \pp)$ restricted to $\P_i$ is a polynomial function of degree at most $m$ depending on at most $\ell' \leq \ell$ of the coordinates.

\paragraph{Partitioning \(\mathbb{R}^W\): Hidden Layers and Final Layer.}
We first split \(\mathbb{R}^W\) into two parts: \(\mathbf{W}'\) (the parameters for the hidden layers) and \(\mathbb{R}^{\ell \times w_L}\) (the parameters for the final layer and its activation). There is a one-to-one correspondence between \(\mathbb{R}^W\) and \(\mathbf{W}' \times \mathbb{R}^{\ell \times w_L}\), where \(W' = W - \ell w_L\). By \Cref{lem:ReLU-NN-regions}, the number of regions for \(\mathbf{W}'\) is $\Bigl(\tfrac{e\,t\,U}{W'}\Bigr)^{W'}$, so within each such region, \(N^{\ReLU}\bigl(\mathrm{Rep}(I_j), \mathbf{w}\bigr)\) is a polynomial function of degree at most \(L+1\) for all \(j \in [t]\).

\paragraph{Analyzing the Final Layer.}
Next, we analyze \(\mathrm{Softmax}(A^{L+1}\mathbf{z}^j)\), the output of the final layer, where \(\mathbf{z}^j\) denotes the hidden-layer output for the instance \(I_j\) in a given region of \(\mathbf{W}'\). Let \(A^{L+1} \in \mathbb{R}^{\ell \times w_L}\) be the weight matrix of the final layer. Then

\[
  \bigl(\mathrm{Softmax}(A^{L+1}\mathbf{z}^{j})\bigr)_{k}
  \;=\;
  \frac{\exp\Bigl(\sum_{i=1}^{w_L}A^{L+1}_{k i}\,\mathbf{z}^j_i\Bigr)}
       {\sum_{k'=1}^\ell \exp\Bigl(\sum_{i=1}^{w_L}A^{L+1}_{k' i}\,\mathbf{z}^j_i\Bigr)},
  \quad
  k=1,\dots,\ell.
\]
Define
\[
  \theta_{k i} 
  \;=\;
  \exp\bigl(A^{L+1}_{k i}\bigr),
  \quad 
  \text{so that}
  \quad
  \exp\bigl(A^{L+1}_{k i}\,\mathbf{z}^j_{i}\bigr)
  \;=\;
  \bigl(\theta_{k i}\bigr)^{\mathbf{z}^j_{i}}.
\]
Thus, we can rewrite:
\[
  \bigl(\mathrm{Softmax}(A^{L+1}\mathbf{z}^{j})\bigr)_{k}
  \;=\;
  \frac{\prod_{i=1}^{w_{L}}\!\bigl(\theta_{k i}\bigr)^{\mathbf{z}^{j}_{i}}}
       {\sum_{k'=1}^\ell 
         \prod_{i=1}^{w_{L}}\!\bigl(\theta_{k' i}\bigr)^{\mathbf{z}^{j}_{i}}}.
\]

\paragraph{Arg\,Max Operation for Heuristic Prediction.}
The heuristic is chosen by: 
\[
h(I_j) \;=\;\arg\max_{1 \le k \le \ell}\,\bigl(\mathrm{Softmax}(A^{L+1}\mathbf{z}^j)\bigr)_{k}.
\]
Because the exponential function is strictly increasing, softmax preserves the ordering of its inputs. Hence, the softmax function itself does not alter the partitioning of \(\mathbb{R}^{\ell \times w_L}\) needed for counting regions. We consider \(\arg\max(\mathrm{Softmax}(\cdot))\) effectively as \(\arg\max\) on \(A^{L+1}\mathbf{z}^j\). Following \Cref{thm:astar_upper}, we use \(\Gamma=2\ell\cdot \binom{|B|}{2}\) hyperplanes
\[
  h(v_i) - h(v_j) 
  \;=\;
  \pm 1\,c_0,\;\pm 2\,c_0,\;\dots,\;\pm \ell\,c_0,
  \quad
  \forall\,(v_i,v_j)\in B,
\]
to partition \(\mathbb{R}^\ell\). These hyperplanes keep the coordinate order fixed within each region, implying a constant A* performance in each region. 
In other words, if we define polynomial functions \(\psi^j_1,\dots,\psi^j_{\Gamma}\) for all \(A^{L+1}\) such that
\[
  \psi^j_1\bigl(\arg\max(\mathrm{Softmax}(A^{L+1}\mathbf{z}^j))\bigr),
  \dots,
  \psi^j_{\Gamma}\bigl(\arg\max(\mathrm{Softmax}(A^{L+1}\mathbf{z}^j))\bigr)
\]
have the same signs, then 
\[
  \Ucal\bigl(I_j, (\mathbf{w},A^{L+1})\bigr)
  \;=\;
  \Ucal\Bigl(I_j,
    \varphi^{N^{\ReLU},\mathrm{Softmax}}_{\mathbf{w},A^{L+1}}(\mathrm{Rep}(I_j))
  \Bigr)
\]
remains constant. Since the softmax preserves order, each \(\arg\max(\mathrm{Softmax}(A^{L+1}\mathbf{z}^j))\) can be viewed as a polynomial of degree \(w_L\) in \(\theta_{k i}\). The hyperplanes \(\psi^j_1, \dots, \psi^j_{\Gamma}\) each have degree 1. The total number of such functions across all training instances, i.e.\ \(\psi^j_1(\arg\max(A^{L+1}\mathbf{z}^j)), \dots, \psi^j_\Gamma(\arg\max(A^{L+1}\mathbf{z}^j))\) for \(j \in [t]\), is \(t\Gamma\).

\paragraph{Number of Regions.}
By Lemma~\ref{lem:poly-decomp}, we can partition \(\mathbb{R}^{\ell\times w_L}\) into at most
\[
  2 \Bigl( \tfrac{2e\,t\,\Gamma\,w_L}{\ell\,w_L} \Bigr)^{\ell w_L}
  \;\;\le\;\;
  2\Bigl(\tfrac{2e\,t\,\Gamma}{\ell}\Bigr)^{\ell w_L}
\]
regions, within each of which \(u\bigl(I_j, (\mathbf{w},A^{L+1})\bigr)\) is constant as a function of \((\mathbf{w},A^{L+1})\). Combining this with the regions from the hidden layers \(\mathbf{W}'\), the total number of regions in \(\mathbb{R}^W\) is at most
\[
  R(t)
  \;=\;
  2^L\Bigl(\tfrac{2e\,t\sum_{i=1}^L(i\,w_i)}{L\,W}\Bigr)^{LW}
  \cdot
  2\Bigl(\tfrac{2e\,t\,\Gamma}{\ell}\Bigr)^{\ell w_L}
  \;\;\le\;\;
  2^{L+1}\Bigl(\tfrac{2e\,t\,U}{W}\Bigr)^{LW}
  \cdot
  \Bigl(\tfrac{2e\,t\,\Gamma}{\ell}\Bigr)^{\ell w_L}.
\]
Within each region, \(u\bigl(I_j, (\mathbf{w},A^{L+1})\bigr)\) is a polynomial of degree at most \(L+1\). Applying \Cref{lem:piecewise-poly-pdim}, $\Pdim(\mathcal{U})$ is bounded by the largest \(t \in \mathbb{N}\) such that
\[
  2^{t-1}
  \;\le\;
  2^{L+1}\Bigl(\tfrac{2e\,t\,U}{W}\Bigr)^{LW}
  \cdot
  \Bigl(\tfrac{2e\,t\,\Gamma}{\ell}\Bigr)^{\ell w_L}
  \cdot
  \Bigl(\tfrac{2e\,t\,(L+1)}{W}\Bigr)^{W}.
\]
Taking logarithms on both sides:
\begin{align*}
  t - 1
  &\;\le\;
  (L+1)
  \;+\;
  LW\,\log\Bigl(\tfrac{2e\,t\,U}{W}\Bigr)
  \;+\;
  \ell\,w_L\,\log\Bigl(\tfrac{2e\,t\,\Gamma}{\ell}\Bigr)
  \;+\;
  W\,\log\Bigl(\tfrac{2e\,t\,(L+1)}{W}\Bigr)
  \\
  &\;=\;
  (L+1)
  \;+\;
  (LW + \ell w_L + W)\,\log t
  \;+\;
  \Bigl(
    LW\,\log\bigl(\tfrac{2e\,U}{W}\bigr)
    + \ell w_L\,\log\bigl(\tfrac{2e\,\Gamma}{\ell}\bigr)
    + W\,\log\bigl(\tfrac{2e\,(L+1)}{W}\bigr)
  \Bigr).
\end{align*}
Using inequality~(10) from Lemma~\ref{lem:useful-inequalities}, with 
\(x_1 = t\) and an appropriate \(\lambda\), we get:
\[
  \log t
  \;\le\;
  \frac{t}{\lambda}
  \;+\;
  \log\Bigl(\frac{\lambda}{e}\Bigr).
\]
Substituting and setting \(\lambda = 8\,(LW + \ell w_L + W)\), after simplification 
we obtain:
\begin{align*}
  t\Bigl(1 - \tfrac{1}{8\log 2}\Bigr)
  \;\le\;
  (L+1)
  \;+\;
  \frac{(LW + \ell w_L + W)}{\log 2}
  \,\log\Bigl(\tfrac{8\,(LW + \ell w_L + W)}{e}\Bigr)
  \\
  +\;
  LW\,\log_2\Bigl(\tfrac{2e\,U}{W}\Bigr)
  \;+\;
  \ell w_L\,\log_2\Bigl(\tfrac{2e\,\Gamma}{\ell}\Bigr)
  \;+\;
  W\,\log_2\Bigl(\tfrac{2e\,(L+1)}{W}\Bigr).
\end{align*}
Solving for \(t\), we conclude:
\[
  t
  \;=\;
  \mathcal{O}\Bigl(
    (LW + \ell w_L + W)\,\log(U + \ell)
    \;+\;
    W\,\log\bigl(\Gamma (L+1)\bigr)
  \Bigr).
\]
Hence,
\[
  \Pdim(\mathcal{U})
  \;=\;
  \mathcal{O}\Bigl(
    LW\,\log(U + \ell)
    \;+\;
    W\,\log\bigl(\Gamma (L+1)\bigr)
  \Bigr).
\]

\paragraph{Training on the Main Graph.}
When training on the main graph \(G\), we have \(\Gamma = 2\,\ell \cdot \binom{|B|}{2} = \ell\,|B|^2\). Substituting this 
gives:
\[
  \Pdim(\mathcal{U})
  \;=\;
  \mathcal{O}\Bigl(
    LW\,\log(U + \ell)
    \;+\;
    W\,\log\bigl(\ell\,|B|\,(L+1)\bigr)
  \Bigr).
\]

\paragraph{Training on the Graph Induced by \(\mathbf{P}\).}
If we train on the graph induced by \(\mathbf{P}\), the only modified parameters are \(B'\) and \(\ell'\). In this abstracted graph, the heuristic values and the number of generated nodes during search are bounded by those in the original graph. Thus, let \(B'\) be the number of states needed to represent each training instance, and \(\ell'\) be the number of heuristic classes. A similar argument shows:
\[
  \Pdim(\mathcal{U})
  \;=\;
  \mathcal{O}\Bigl(
    LW\,\log(U + \ell')
    \;+\;
    W\,\log\bigl(\ell'\,|B'|\,(L+1)\bigr)
  \Bigr).
\]

\end{proof}

\begin{proof}[Proof of \Cref{thm:instance-dependent-upper-bound-nn}]
When the $\goal$ state can vary for each instance, the number of states required per instance and the number of hyperplanes in the neural network's final layer both change accordingly. Consider a fixed instance $Rep(I_j)$ with start state $s_j$. If $\goal$ were fixed, we would need $\lvert B \rvert$ states to represent this single instance. However, in this setting, the goal state can be any of the $n$ states $v_i \in V$ for $i \in [n]$. 

Because heuristic values for distinct $(s_j - g_i)$ pairs are generally independent, we must account for separate hyperplanes for each such pair. Therefore, we have these hyperplanes set for every instance:
\[
\begin{aligned}
\text{For }(s_j - g_1)\!: &\quad h(v_i) - h(v_j) \;=\; \pm 1\,c_0,\;\pm 2\,c_0,\;\dots,\;\pm \ell\,c_0 
  \quad\forall\,(v_i,v_j) \in B_1,\\
\text{For }(s_j - g_2)\!: &\quad h(v_i) - h(v_j) \;=\; \pm 1\,c_0,\;\pm 2\,c_0,\;\dots,\;\pm \ell\,c_0 
  \quad\forall\,(v_i,v_j) \in B_2,\\
&\quad\vdots\\
\text{For }(s_j - g_n)\!: &\quad h(v_i) - h(v_j) \;=\; \pm 1\,c_0,\;\pm 2\,c_0,\;\dots,\;\pm \ell\,c_0 
  \quad\forall\,(v_i,v_j) \in B_n.
\end{aligned}
\]
Hence, the total number of hyperplanes is 
\[
\Gamma \;=\; 2\,\ell\,n \cdot \binom{\lvert B\rvert}{2}.
\]

The rest of the analysis follows the same approach as in the fixed-goal case. For the original graph $G$, we obtain:
\[
\Pdim(\mathcal{U}) 
  \;=\;
  \mathcal{O}\Bigl(
    L\,W\,\log\bigl(U + \ell\bigr)
    \;+\;
    W\,\log\bigl(\ell\,n\,\lvert B\rvert \,(L+1)\bigr)
  \Bigr).
\]

and with same argument for the graph induced by dataset $P$:

\[
\Pdim(\Ucal)=\mathcal{O}\left( LW \log\left( U + \ell' \right) + W \log\left( \ell' m |B'| (L+1) \right) \right).
\]

\end{proof}

\section{Experimental Setup} \label{sec: exp-details}

Our experiments use a ResNet architecture \citep{he2016deep} (see Figure~\ref{fig: resnet model}); the full set of training hyper‑parameters appears in Table~\ref{tab:training-details}. Table \ref{tab:heuristic-dist-all} present the details for all PDBs. All runs were carried out on a server with a 32‑core AMD Ryzen Threadripper 2950X CPU and two NVIDIA GeForce RTX 2080 Ti GPUs (CUDA 12.4). The NN model sizes and performances reported in Table~\ref{tab:main-performance} correspond to the 16-bit precision models used during search.

\subsection{State Representation}

For each PDB, we use one-hot encodings tailored to the type of cubies included:

\begin{itemize}
  \item \textbf{8-Corner PDB}.  
        Each face is encoded using six 3\(\times\)3 channels—one per color—resulting in a total of \(6 \times 6 = 36\) channels. All non-corner cubies are assumed to be in the solved state.

  \item \textbf{Edge PDBs}.  
        For a PDB with \(n\) edges, we construct a 3\(n\)-channel input of size 4\(\times\)3, consisting of:
        (i)~\emph{location}—a one-hot map over the 12 possible edge positions;
        (ii)~\emph{rotation}—a one-hot indicator set if the cubie is correctly oriented;
        and (iii)~\emph{goal position}—a fixed map encoding the target location for each edge.
\end{itemize}

\begin{figure*}[t]
    \centering
    \includegraphics[width=\textwidth]{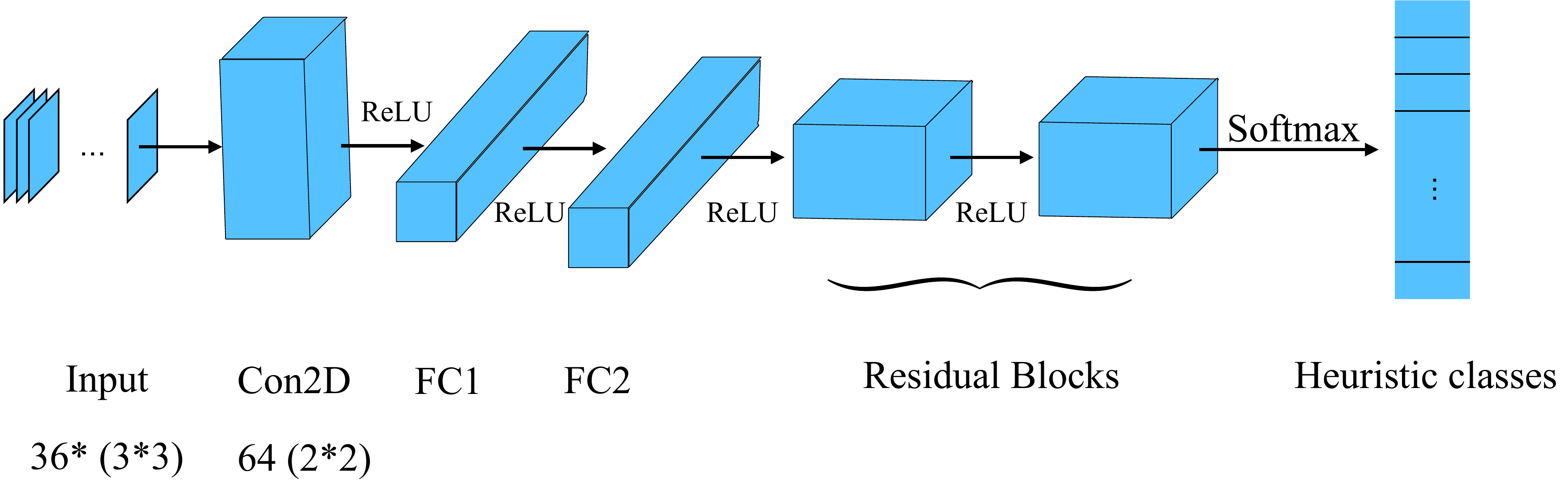}
    \caption{Neural Network structure.}
    \label{fig: resnet model}
\end{figure*}

\begin{table}[htbp]
\centering
\caption{Heuristic distributions for the PDBs.}
\label{tab:heuristic-dist-all}
\begin{tabular}{@{}rrrrr@{}}
\toprule
\textbf{$h$} & \textbf{8-corner} & \textbf{7-edge} & \textbf{6-edge} & \textbf{6--4 (delta)} \\
\midrule
 0  &        1       &          1       &        1       &   1,076,354 \\
 1  &       18       &         15       &       15       &  11,389,507 \\
 2  &      243       &        191       &      184       &  21,759,383 \\
 3  &    2,874       &      2,455       &    2,256       &   7,581,788 \\
 4  &   28,000       &     30,519       &   25,909       &     742,213 \\
 5  &  205,416       &    356,462       &  266,101       &      28,240 \\
 6  &1,168,516       &  3,766,700       &2,239,790       &         428 \\
 7  &5,402,628       & 32,719,467       &12,567,043      &           7 \\
 8  &20,776,176      &186,297,009       &24,415,346      &           -- \\
 9  &45,391,616      &274,719,633       & 3,061,105      &           -- \\
10  &15,139,616      & 13,042,507       &     170        &           -- \\
11  &   64,736       &         81       &     --         &           -- \\
\bottomrule
\end{tabular}
\end{table}

\begin{table}[h!]
\centering
\caption{Training hyperparameters for each PDB.}
\vspace{2mm}
\label{tab:training-details}
\begin{tabular}{@{}ccccc@{}}
\toprule
\textbf{Hyperparameter} & \textbf{8‑corner} & \textbf{7‑edge} & \textbf{6‑edge} & \(\boldsymbol{\Delta}\)\textbf{ PDB} \\
\midrule
Optimizer      & \multicolumn{4}{c}{Adam} \\[2pt]
Learning rate  & \(1\times 10^{-3}\) & \multicolumn{3}{c}{\(3\times 10^{-3}\)} \\[2pt]
Batch size     & \(1\times 10^{5}\)  & \(5\times 10^{5}\) & \multicolumn{2}{c}{\(1\times 10^{5}\)} \\[2pt]
\(\beta\)      & 1.0 & 0.6 & 0.7 & 0.9 \\[2pt]
\(\eta\)       & 0.01 & \multicolumn{3}{c}{0.001} \\[2pt]
ResNet blocks  & 2 & \multicolumn{3}{c}{3} \\
\bottomrule
\end{tabular}
\end{table}

\section{Overestimation Distribution} \label{sec: Overestimation Distribution}

In this section, we present the distribution of overestimated states for neural network models trained with both the CEA and standard CE loss functions. The distributions are shown in Figures~\ref{fig:dist-part (6-edges)}--\ref{fig:dist-part (7-edges)}. Each figure illustrates the number of overestimated states for both loss functions within a specific PDB. We depict the distribution by showing the number of overestimated states at each true heuristic value in the PDB, along with the range of predicted classes to which these overestimated states are assigned. As the overestimation rates are also reported in Table~\ref{tab:main-performance}, it is evident that the number of overestimated states for CEA loss is approximately \(10^4 \times\) smaller than that for the CE loss across all PDBs.

\section{Model Size \& Heuristic Quality} \label{sec: size_quality}

A key question is how small a model can be while still preserving admissibility and heuristic strength. We investigated this through a scaling experiment with networks of different sizes. To make the comparison fair, we kept the overall architecture—layer types and counts—the same as the network used for the 8-corner PDB in Table~\ref{tab:main-performance}. In particular, every model keeps the initial convolutional layer unchanged, so their representation learning is identical; we vary only the number of neurons in the fully connected layers and residual blocks. We focus on the 8-corner PDB because our reference model for it already achieves nearly \(100\%\) admissibility and the exact average heuristic. Table~\ref{tab:scaling-summary} represents each model’s size, structure, and performance. All models were trained for the same number of iterations, although additional training would improve the smaller ones. As model size decreases, performance drops: the over-estimation rate rises monotonically and is about \(10^{3}\) times higher in Model 5 than in Model 1. Nevertheless, Model~3 achieves almost the same performance as Model~1 while using fewer than half as many parameters, yielding a \(51\times\) compression relative to the original PDB.

\begin{table}[htbp]
\centering
\caption{\centering{Summary of architectural details and performance for models used in the scaling experiment on the 8-corners PDB.}}
\label{tab:scaling-summary}
\renewcommand{\arraystretch}{1.2}
\begin{tabularx}{\linewidth}{@{}
    C{1.5cm}   
    C{2.3cm}   
    C{2.6cm}   
    C{1.6cm}   
    C{1.6cm}   
    C{2.1cm}   
@{}}
\toprule
\textbf{Model} & \textbf{FC Layer Neurons} & \textbf{Residual Block Neurons} & \textbf{Size (MB)} & \textbf{Avg. Heuristic} & \textbf{Overestimation Rate} \\
\midrule
Model 1 & \rightcell{1000} & 300 & 1.89 & 8.76 & $3 \times 10^{-7}$ \\
Model 2 & \rightcell{800}  & 250 & 1.28 & 8.76 & $3 \times 10^{-7}$ \\
Model 3 & \rightcell{600}  & 200 & 0.86 & 8.75 & $3 \times 10^{-7}$ \\
Model 4 & \rightcell{400}  & 150 & 0.51 & 8.57 & $2 \times 10^{-5}$ \\
Model 5 & \rightcell{200}  & 100 & 0.24 & 8.24 & $1 \times 10^{-3}$ \\
\bottomrule
\end{tabularx}
\end{table}

\section{Dynamics of ($\beta$, $\eta$) During Training} \label{sec: modifying the loss}

We divide the training process into multiple phases. In the first phase, we set the hyperparameters to \(\beta = 1\) and \(\eta = 0.1\), encouraging the model to approximate the true heuristic as closely as possible. We monitor both the loss and the overestimation rate throughout training. If the model stops making progress (i.e., neither the loss nor the overestimation rate continues to decrease), this indicates that the model cannot learn the exact heuristic across all states. At this point, we move to the next phase by gradually adjusting the hyperparameters toward more admissibility (i.e., decreasing \(\beta\) and \(\eta\)). We repeat this refinement process until the model reaches the desired overestimation rate. 

The values used for the hyperparameters \(\beta\) and \(\eta\) for each PDB are presented in Figure \ref{fig:beta-eta-values}. For the 8-corner PDB, we did not need to adjust these parameters, as the initial values already yielded a perfect heuristic. Our approach is to reduce both parameters whenever there is no progress in the loss or the overestimation rate. We monitored these metrics throughout training, and whenever progress stalled, we lowered both parameters to promote stronger admissibility. As a result, the number of training epochs between each reduction is not fixed. One strategy is to automatically halve both parameters every $c$ epochs when no improvement is observed. This provides a general heuristic for adapting CEA to other models and domains.

\begin{figure}[htbp]
    \centering
    \begin{subfigure}[b]{0.48\linewidth}
        \centering
        \includegraphics[width=\linewidth]{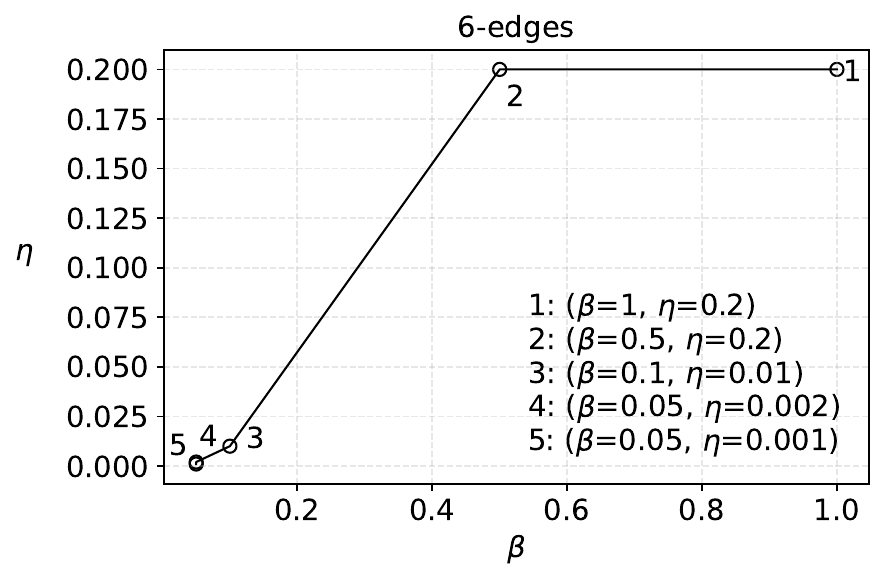}
        \caption{6-edges.}
    \end{subfigure}
    \hfill
    \begin{subfigure}[b]{0.48\linewidth}
        \centering
        \includegraphics[width=\linewidth]{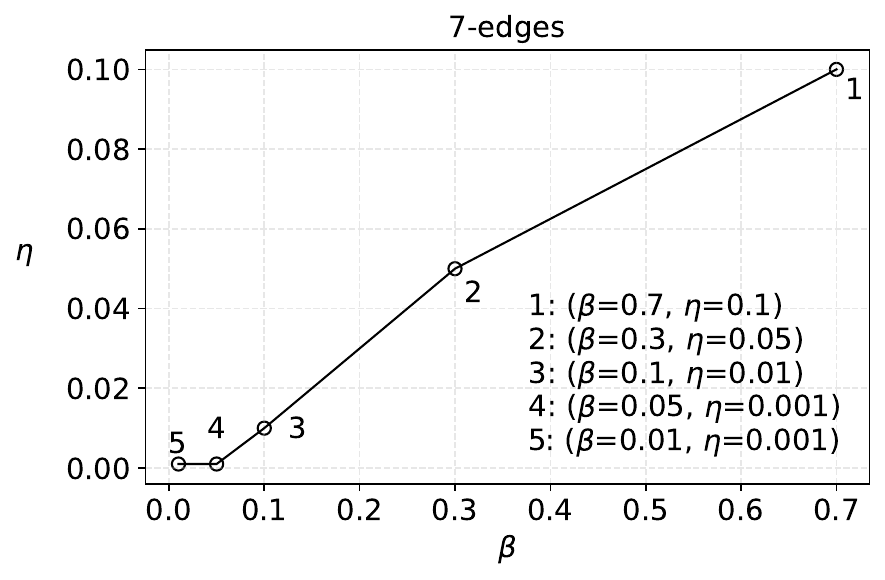}
        \caption{7-edges.}
    \end{subfigure}

    \vspace{1em} 

    \begin{subfigure}[b]{0.48\linewidth}
        \centering
        \includegraphics[width=\linewidth]{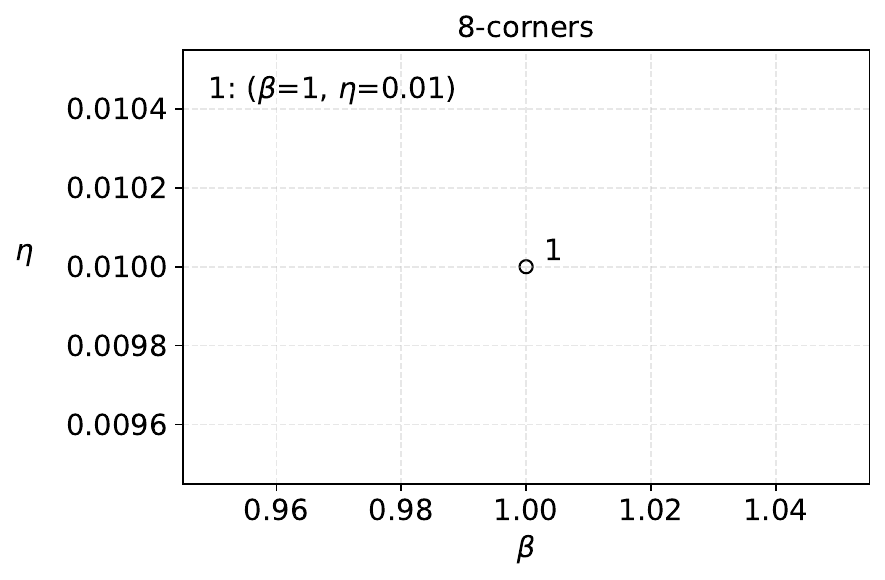}
        \caption{8-corners.}
    \end{subfigure}
    \hfill
    \begin{subfigure}[b]{0.48\linewidth}
        \centering
        \includegraphics[width=\linewidth]{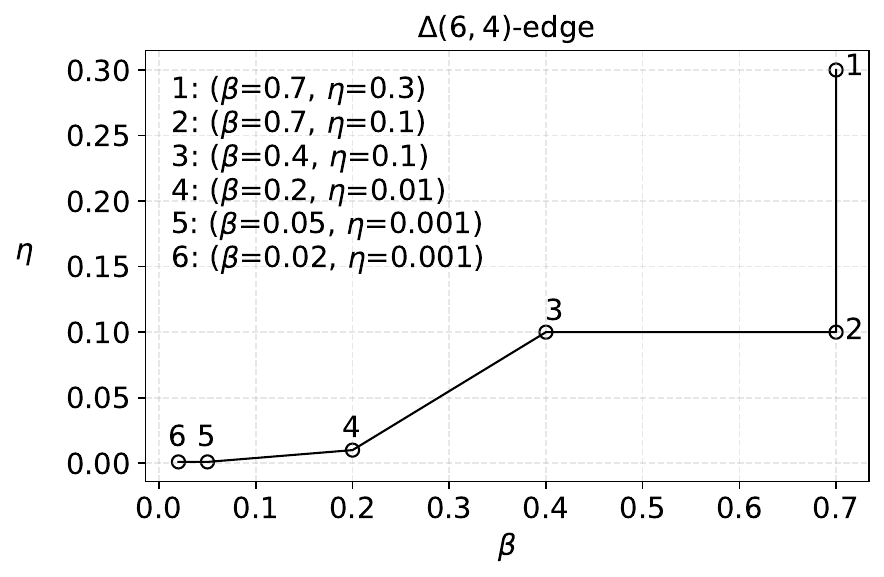}
        \caption{$\Delta$(6,4)-edge}
    \end{subfigure}

    \caption{The $(\eta,\beta)$ values used throughout the training for each PDB.}
    \label{fig:beta-eta-values}
\end{figure}

\begin{figure}[htbp]
    \centering
    \begin{subfigure}[b]{\linewidth}
        \centering
        \includegraphics[width=\linewidth]{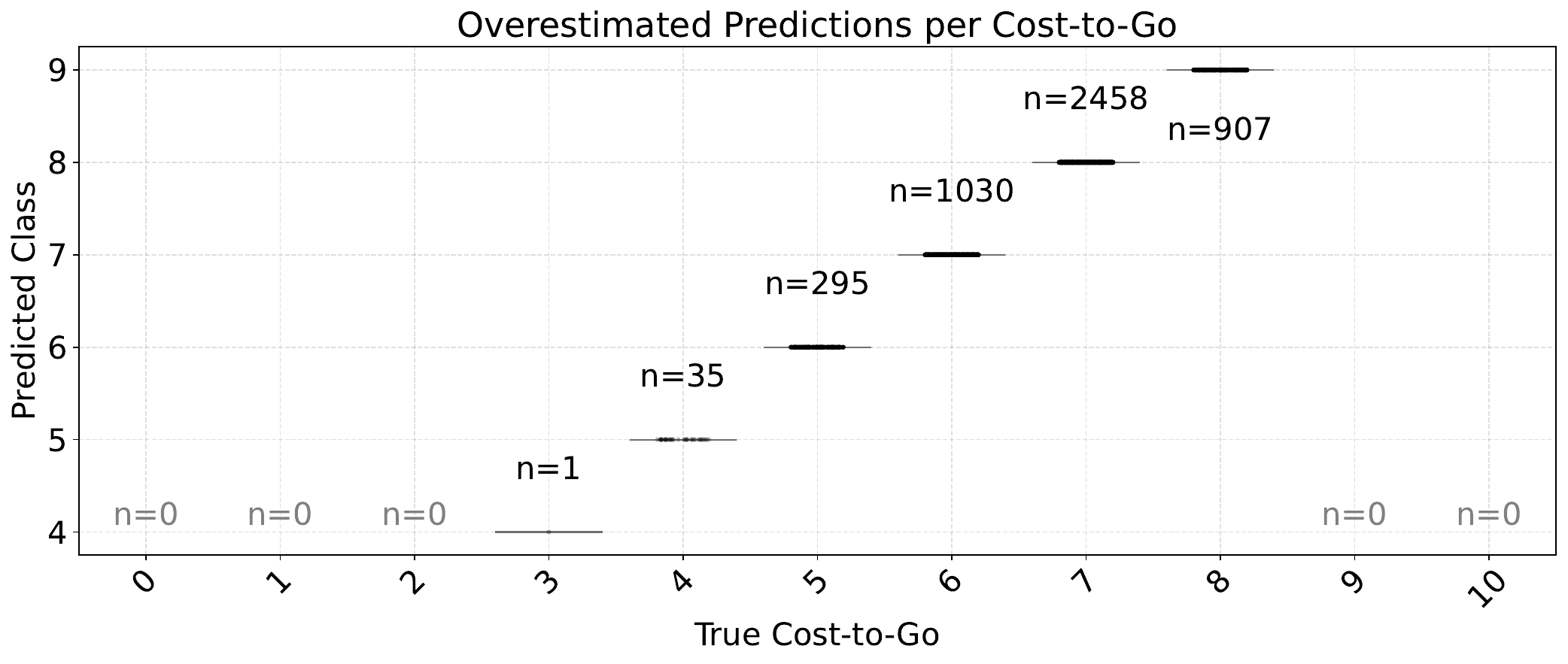}
        \caption{CEA loss.}
    \end{subfigure}
    \begin{subfigure}[b]{\linewidth}
        \vspace{1em}
        \centering
        \includegraphics[width=\linewidth]{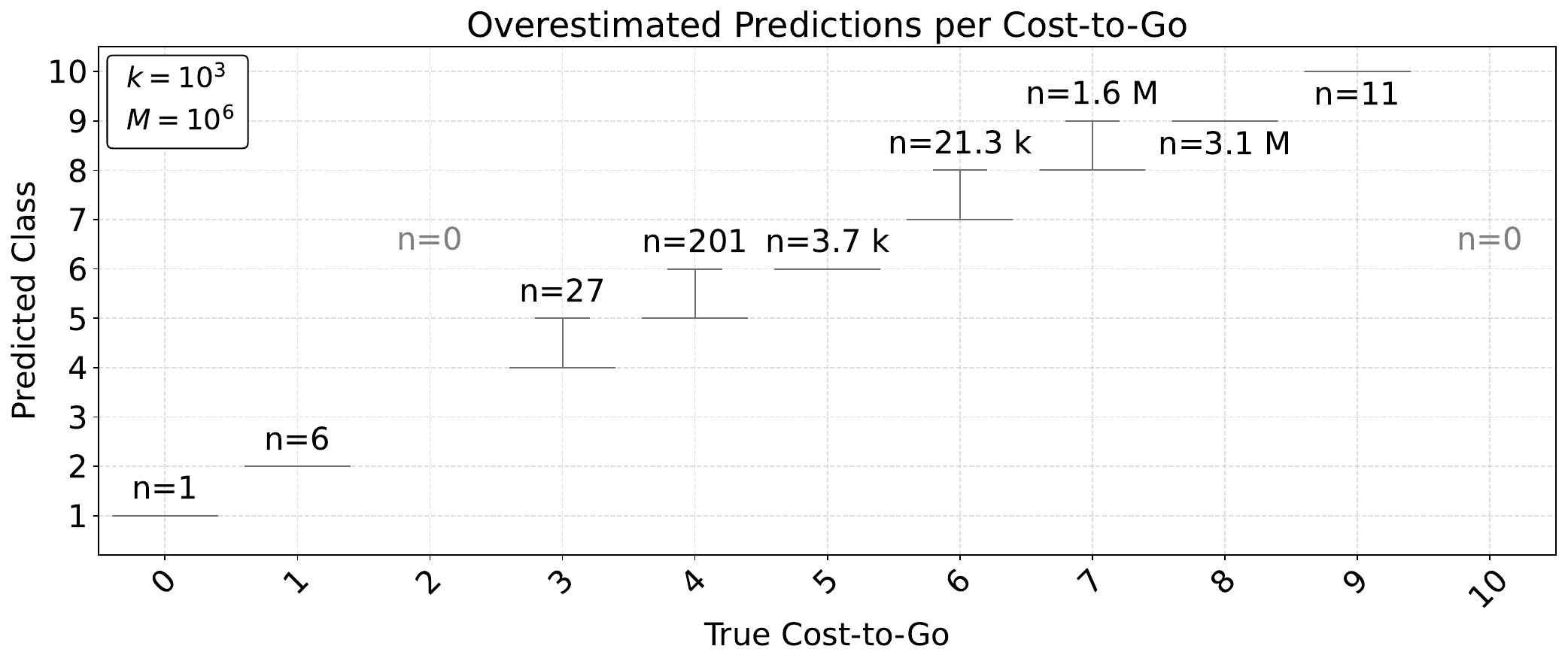}
        \caption{CE loss.}
    \end{subfigure}
    \caption{Distribution of overestimated predictions per true cost-to-go for \textbf{CEA} and \textbf{CE} in \textbf{6-edges} PDB. Each box represents the spread of predicted values, while $n$ indicates the number of overestimation states for each class.}
    \label{fig:dist-part (6-edges)}
\end{figure}

\begin{figure}[htbp]
    \centering
    \begin{subfigure}[b]{\linewidth}
        \centering
        \includegraphics[width=\linewidth]{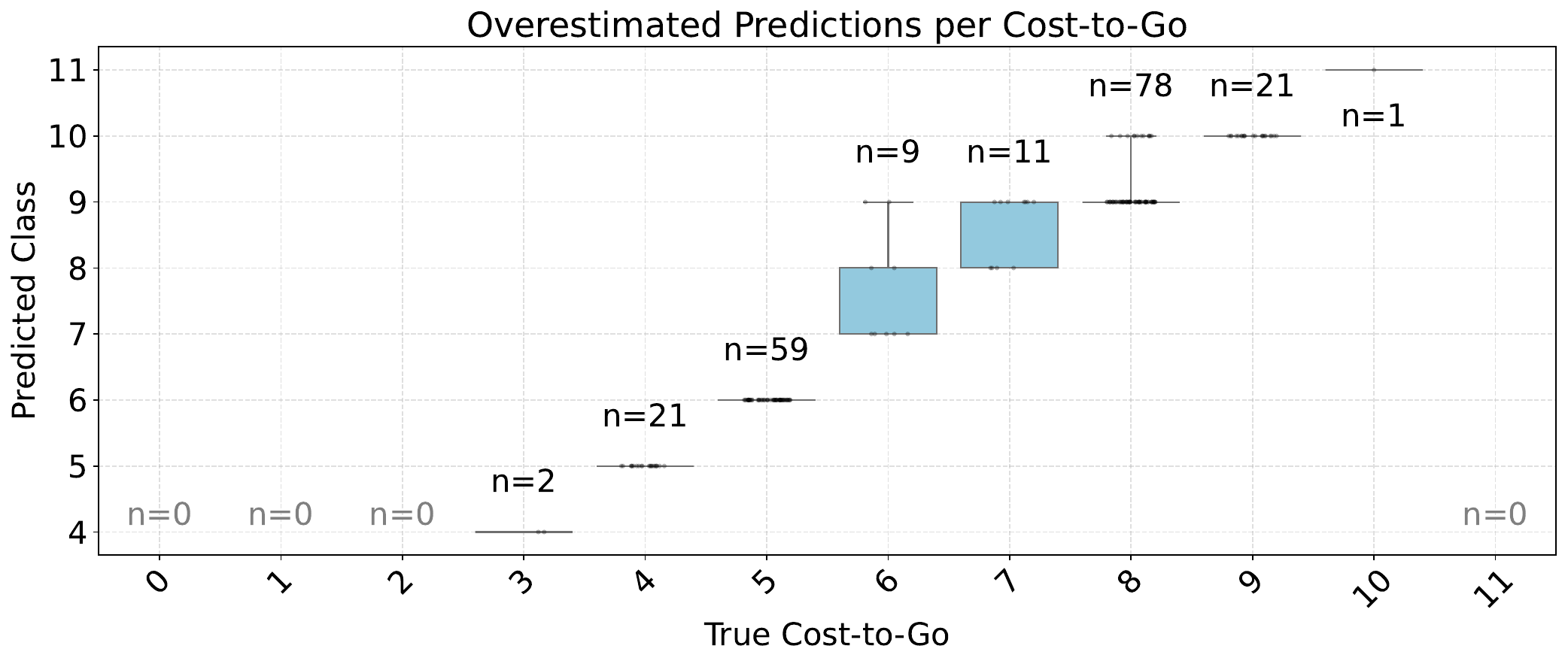}
        \caption{CEA loss.}
    \end{subfigure}
    \begin{subfigure}[b]{\linewidth}
        \vspace{1em}
        \centering
        \includegraphics[width=\linewidth]{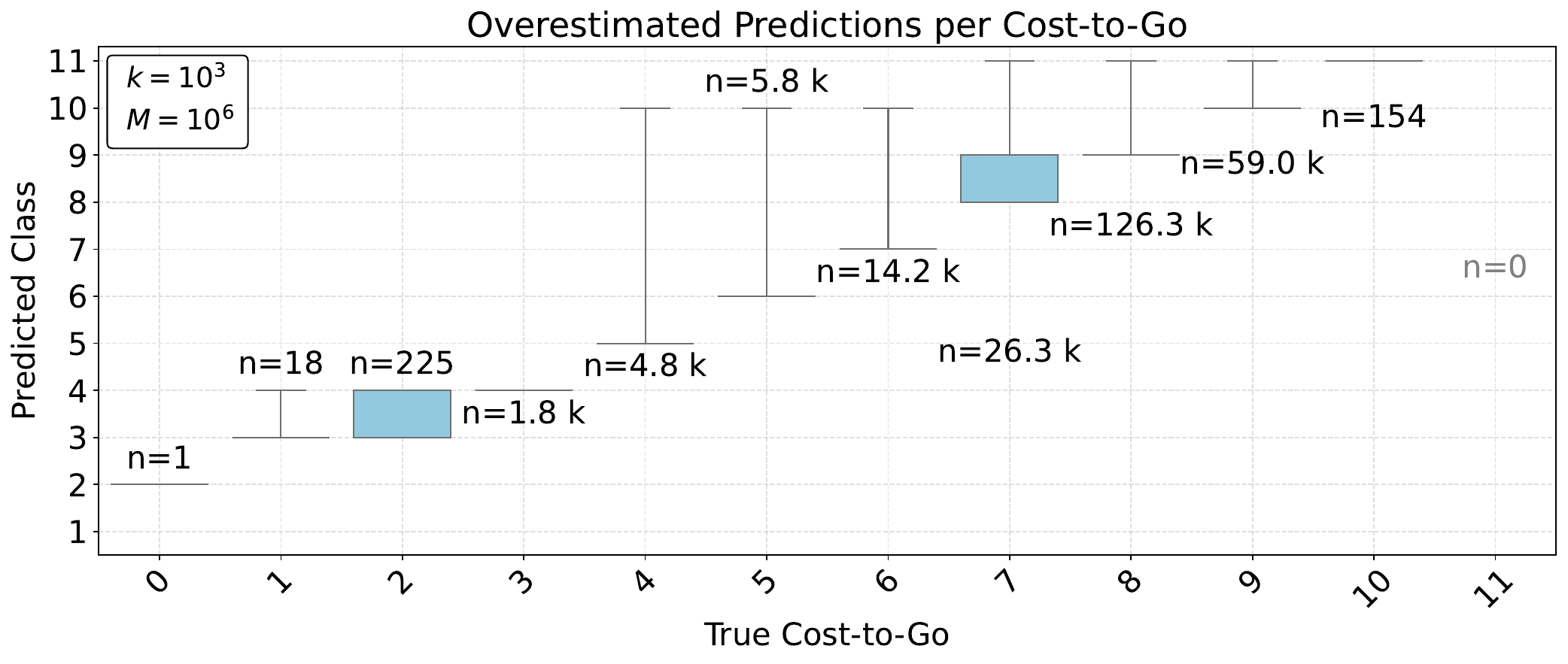}
        \caption{CE loss.}
    \end{subfigure}
    \caption{Distribution of overestimated predictions per true cost-to-go for \textbf{CEA} and \textbf{CE} in \textbf{8-corners} PDB. Each box represents the spread of predicted values, while $n$ indicates the number of overestimation states for each class.}
    \label{fig:dist-part (8-corners)}
\end{figure}

\begin{figure}[htbp]
    \centering
    \begin{subfigure}[b]{\linewidth}
        \centering
        \includegraphics[width=\linewidth]{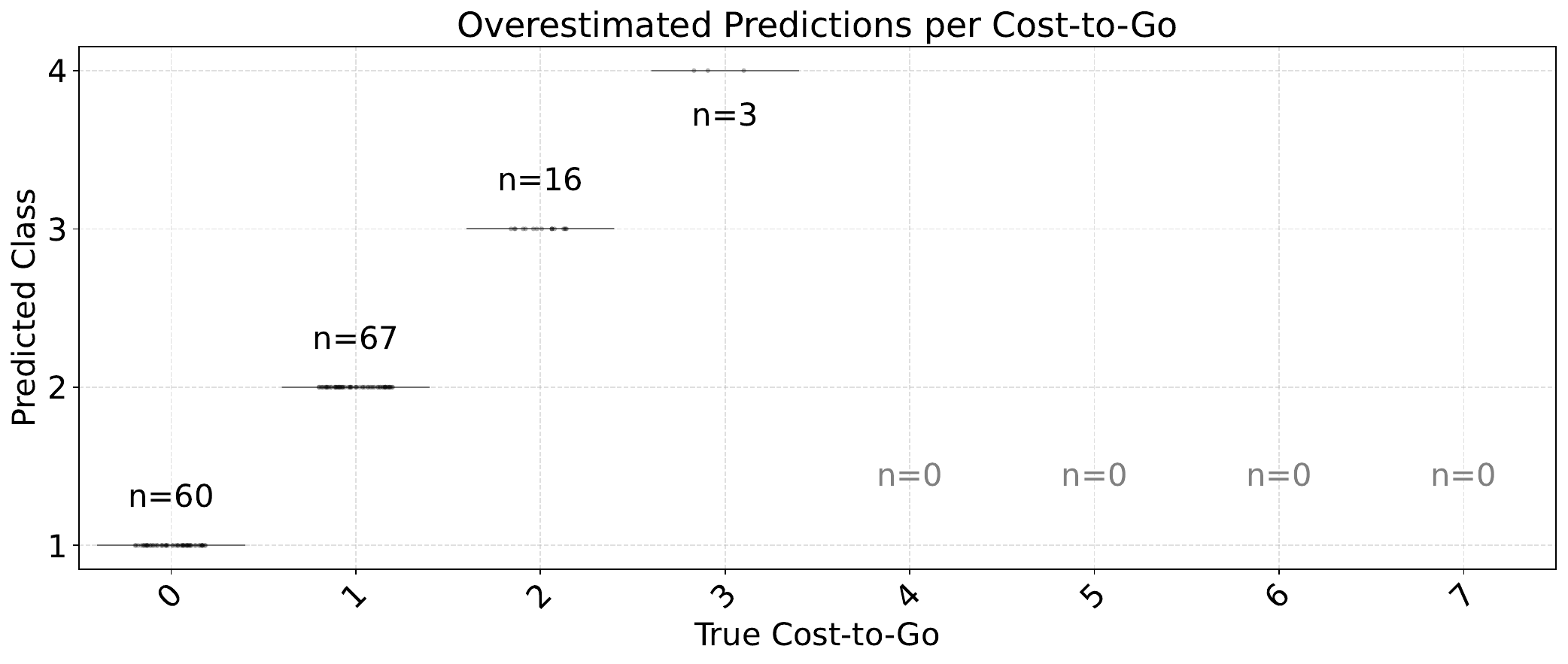}
        \caption{CEA loss.}
    \end{subfigure}
    \begin{subfigure}[b]{\linewidth}
        \vspace{1em}
        \centering
        \includegraphics[width=\linewidth]{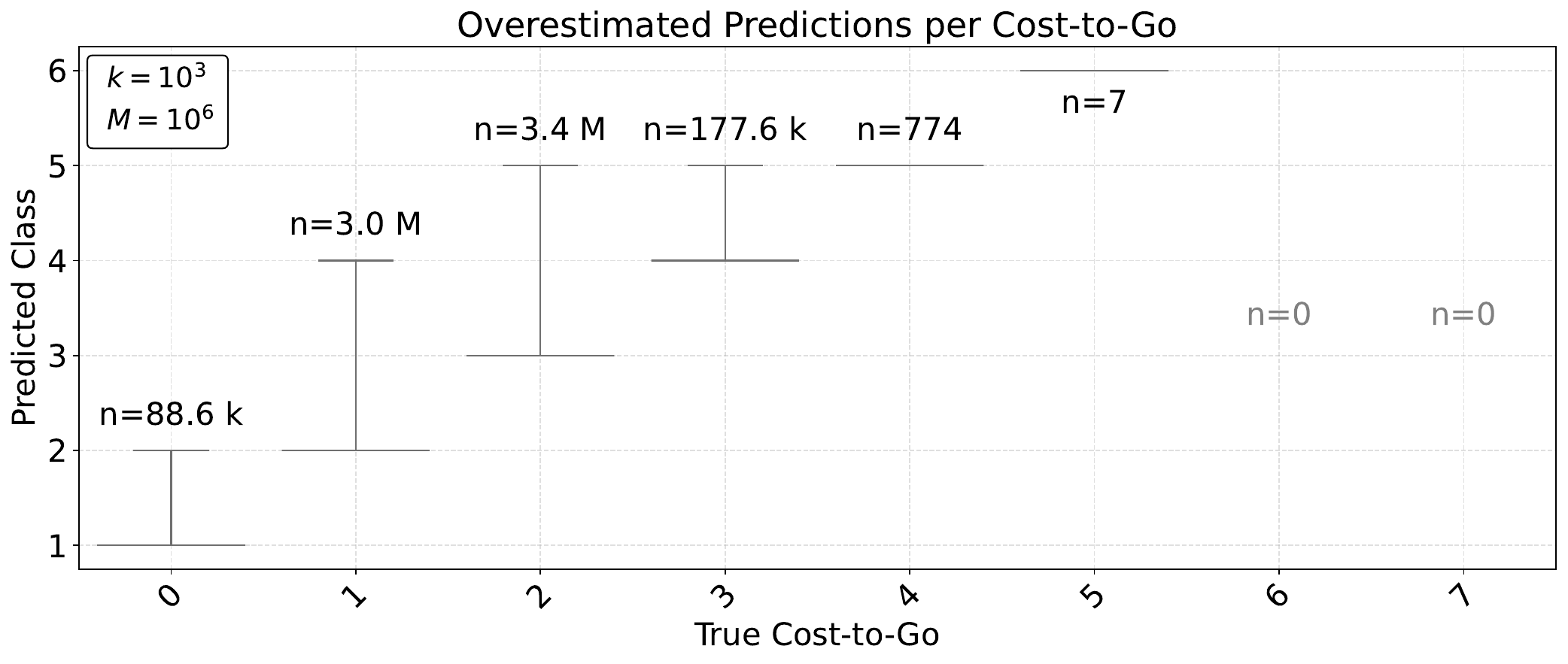}
        \caption{CE loss.}
    \end{subfigure}
    \caption{Distribution of overestimated predictions per true cost-to-go for \textbf{CEA} and \textbf{CE} in \(\bm{\Delta(6,4)\textbf{-edge}}\) PDB. Each box represents the spread of predicted values, while $n$ indicates the number of overestimation states for each class.}
    \label{fig:dist-part (delta-edges)}
\end{figure}

\begin{figure}[htbp]
    \centering
    \begin{subfigure}[b]{\linewidth}
        \centering
        \includegraphics[width=\linewidth]{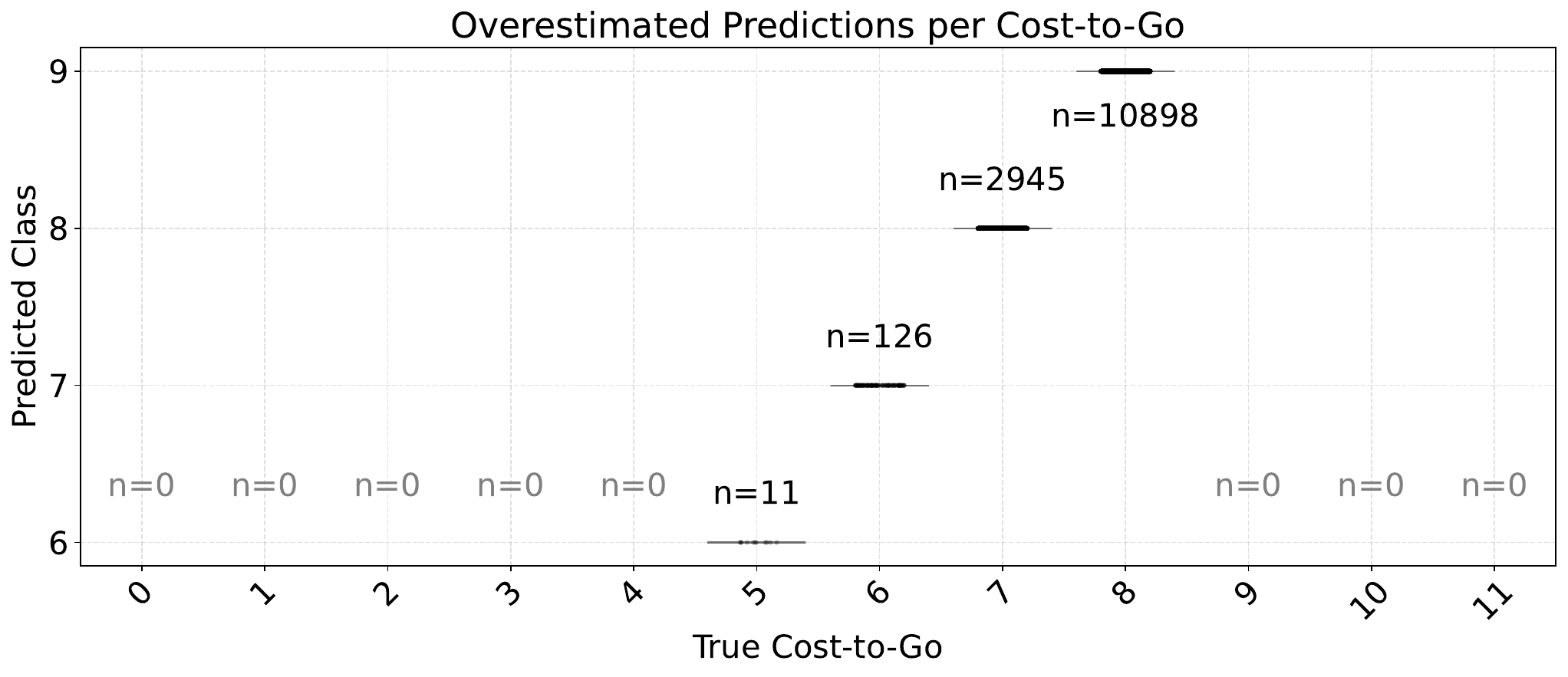}
        \caption{CEA loss.}
    \end{subfigure}
    \begin{subfigure}[b]{\linewidth}
        \vspace{1em}
        \centering
        \includegraphics[width=\linewidth]{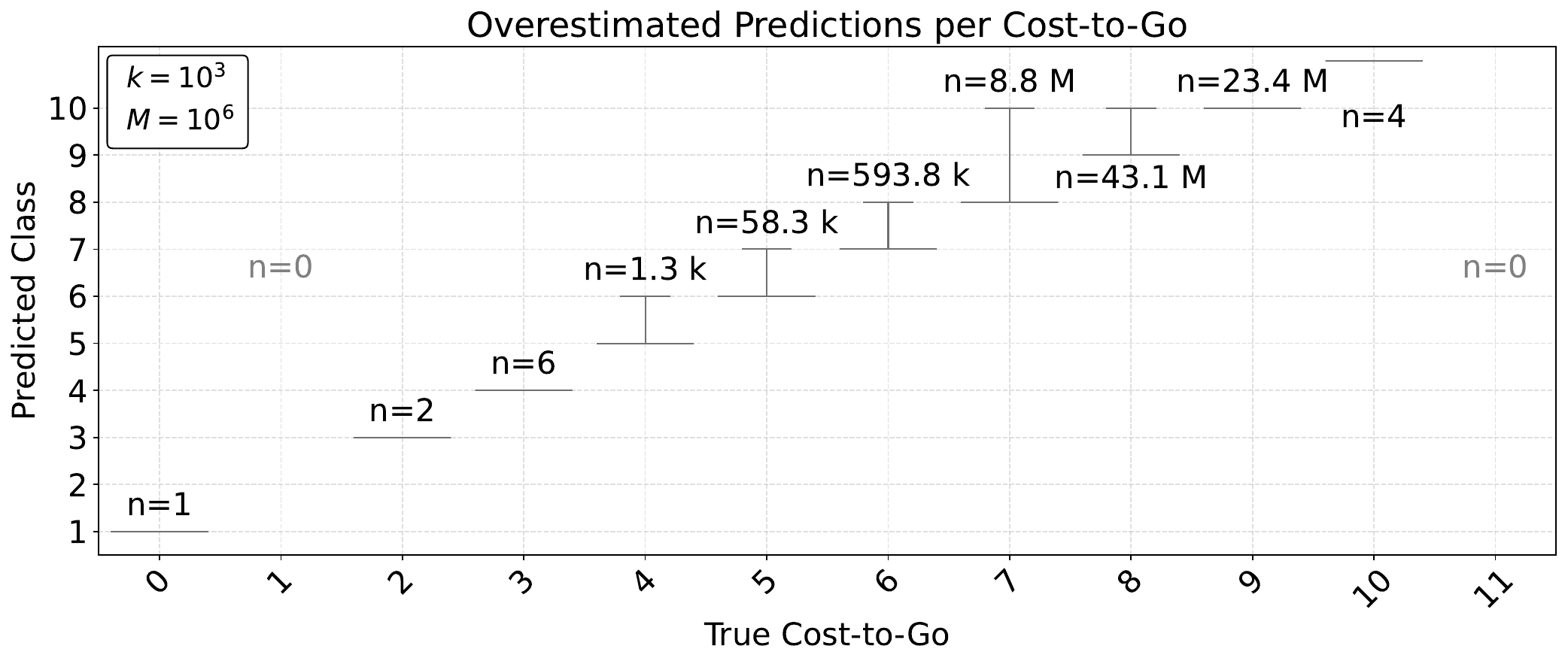}
        \caption{CE loss.}
    \end{subfigure}
    \caption{Distribution of overestimated predictions per true cost-to-go for \textbf{CEA} and \textbf{CE} in \textbf{7-edge} PDB. Each box represents the spread of predicted values, while $n$ indicates the number of overestimation states for each class.}
    \label{fig:dist-part (7-edges)}
\end{figure}

\end{document}